\documentclass{article} 
\usepackage{iclr2024_conference,times}


\usepackage{amsmath,amsfonts,bm}









\def\eqref#1{equation~\ref{#1}}









\def\1{\bm{1}}

\def\diag{{\textnormal{diag}}}

\def\sspan{{\textnormal{span}}}







\def\vzero{{\bm{0}}}

\def\vtheta{{\bm{\theta}}}
\def\valpha{{\bm{\alpha}}}

\def\vsigma{{\bm{\sigma}}}

\def\va{{\bm{a}}}

\def\ve{{\bm{e}}}
\def\vf{{\bm{f}}}

\def\vu{{\bm{u}}}
\def\vv{{\bm{v}}}
\def\vw{{\bm{w}}}

\def\vy{{\bm{y}}}


\def\mA{{\bm{A}}}
\def\mB{{\bm{B}}}

\def\mD{{\bm{D}}}

\def\mG{{\bm{G}}}

\def\mI{{\bm{I}}}

\def\mL{{\bm{L}}}
\def\mM{{\bm{M}}}

\def\mP{{\bm{P}}}
\def\mQ{{\bm{Q}}}
\def\mR{{\bm{R}}}
\def\mS{{\bm{S}}}

\def\mU{{\bm{U}}}

\def\mW{{\bm{W}}}
\def\mX{{\bm{X}}}

\DeclareMathAlphabet{\mathsfit}{\encodingdefault}{\sfdefault}{m}{sl}
\SetMathAlphabet{\mathsfit}{bold}{\encodingdefault}{\sfdefault}{bx}{n}


\def\gF{{\mathcal{F}}}
\def\gG{{\mathcal{G}}}
\def\gH{{\mathcal{H}}}
\def\gI{{\mathcal{I}}}

\def\gL{{\mathcal{L}}}

\def\gN{{\mathcal{N}}}

\def\gX{{\mathcal{X}}}
\def\gY{{\mathcal{Y}}}



\def\sI{{\mathbb{I}}}








\newcommand{\E}{\mathbb{E}}

\newcommand{\R}{\mathbb{R}}
\newcommand{\N}{\mathbb{N}}



\DeclareMathOperator*{\argmax}{arg\,max}
\DeclareMathOperator*{\argmin}{arg\,min}

\DeclareMathOperator*{\arginf}{arg\,inf}

\DeclareMathOperator{\Tr}{Tr}

\usepackage{scalerel}
\renewenvironment{boxed}
    {
    \begin{tcolorbox}[colback=black!2!white,colframe=black!75!black,left=2pt,right=2pt,top=1pt,bottom=1pt]
    }
    { 
    \end{tcolorbox}
    }

\newcommand{\rebuttal}[1]{{#1}}

\newcommand{\paren}[1] {{\left ( #1 \right )}}
\newcommand{\brac}[1] {{\left [ #1 \right ]}}
\newcommand{\set}[1] {{\left \{ #1 \right \}}}
\newcommand{\sset}[2]{{\left \{ #1 \; \middle \vert \; #2 \right \}}}
\newcommand{\dotp}[1]{{\left \langle #1 \right \rangle}}

\newcommand{\norm}[1] {{\left \| #1 \right \|}}

\def \ie {\textit{i.e.}}
\def \eg {\textit{e.g.}}
\def \aew {\textit{a.e.}}
\def \iid {\textit{i.i.d.}}
\def \wrt {\textit{w.r.t.}}
\def \psd {\textit{p.s.d.}}

\def \cf {\textit{cf.}}

\def \err {\textnormal{err}}

\def \px {{P_\gX}}
\def \py {{P_\gY}}
\def \pxy {P_{\gX \gY}}

\def \lxp {{L^2(P_\gX)}}

\def \rad {\mathfrak{R}}

\def \bt {\kappa}



\def \mPsi {{\boldsymbol{\Psi}}}

\def \Psistar {{\boldsymbol{\Psi}^*}}

\def \hk {{\gH_K}}
\def \ks {{K_s}}
\def \hatks {{\hat{K}_s}}

\def \hks {{\gH_{\ks}}}

\def \hhatks {{\gH_{\hatks}}}
\def \hatvy {{\hat{\vy}}}

\def \kssquare {{K_{s^2}}}

\usepackage[utf8]{inputenc} 
\usepackage[T1]{fontenc}    
\usepackage[pdfencoding=unicode,psdextra,citecolor=blue,colorlinks=true]{hyperref}       

\usepackage{url}            
\usepackage{booktabs}       
\usepackage{amsfonts}       
\usepackage{amsthm}
\usepackage{amsmath}
\usepackage{amssymb}
\usepackage{nicefrac}       
\usepackage{microtype}      
\usepackage{algorithm}
\usepackage[noend]{algpseudocode}
\usepackage{graphicx}
\usepackage{wrapfig}
\usepackage{float, caption, subcaption}
\usepackage{mathtools}
\usepackage{enumerate}
\usepackage{caption}
\usepackage{cases}
\usepackage{mathrsfs}
\usepackage{verbatim}
\usepackage[most]{tcolorbox}
\usepackage{empheq}
\usepackage[capitalize]{cleveref}

\usepackage{pifont}

\crefname{equation}{Eqn.}{Eqns.}
\crefname{ass}{Assumption}{Assumptions}
\crefname{thm}{Theorem}{Theorems}
\crefname{figure}{Figure}{Figures}
\newtheorem{thm}{Theorem}

\newtheorem{prop}{Proposition}
\newtheorem{lem}[prop]{Lemma}
\newtheorem{cor}[prop]{Corollary}
\newtheorem{example}{Example}

\newtheorem{ass}{Assumption}
\theoremstyle{remark}
\newtheorem*{remark}{Remark}

\usepackage[symbol]{footmisc}

\title{Spectrally Transformed Kernel Regression}


\author{Runtian Zhai, Rattana Pukdee, Roger Jin, Maria-Florina Balcan, Pradeep Ravikumar \\
 Carnegie Mellon University \\ 
\href{mailto:rzhai@cs.cmu.edu?subject=[ICLR'24] Questions about STKR\&cc=pradeepr@cs.cmu.edu}{{\color{black} \texttt{\{rzhai,rpukdee,rrjin,ninamf,pradeepr\}@cs.cmu.edu}}}
}

%

\iclrfinalcopy 

\begin{document}

\maketitle

\begin{abstract}
Unlabeled data is a key component of modern machine learning.
In general, the role of unlabeled data is to impose a form of smoothness,
usually from the similarity information encoded in a base kernel, such as the $\epsilon$-neighbor kernel or the adjacency matrix of a graph.
This work revisits the classical idea of spectrally transformed kernel regression (STKR), and provides a new class of general and scalable STKR estimators able to leverage unlabeled data.
Intuitively, via spectral transformation, STKR exploits the data distribution for which unlabeled data can provide additional information.
First, we show that STKR is a principled and general approach, by characterizing a universal type of ``target smoothness'', and proving that any sufficiently smooth function can be learned by STKR.
Second, we provide scalable STKR implementations for the inductive setting and a general transformation function, while prior work is mostly limited to the transductive setting.
Third, we derive statistical guarantees for two scenarios: STKR with a known polynomial transformation,
and STKR with kernel PCA when the transformation is unknown.
Overall, we believe that this work helps deepen our understanding of how to work with unlabeled data,
and its generality makes it easier to inspire new methods.
\end{abstract}

\section{Introduction}
The past decade has witnessed a surge of new and powerful algorithms and architectures for learning representations~\citep{vaswani2017attention,devlin2018bert,chen2020simple,he2022masked}; spurred in part by a boost in computational power as well as increasing sizes of datasets. 
Due to their empirical successes, providing an improved theoretical understanding of such representation learning methods has become an important open problem. 
Towards this, a big advance was made recently by~\citet{haochen2021provable}, who showed that when using a slight variant of popular contrastive learning approaches, termed spectral contrastive learning, the optimal learnt features are the top-$d$ eigenfunctions of a population augmentation graph. This was further extended to other contrastive learning approaches~\citep{johnson2022contrastive,cabannes2023ssl}, 
as well as more generally to all augmentation-based self-supervised learning methods~\citep{zhai2023understanding}.\vspace{-.02 in}

A high-level summary of this recent line of work is as follows:
The self-supervised learning approaches implicitly specify inter-sample similarity encoded via a Mercer \textit{base kernel}. 
Suppose this kernel has the spectral decomposition $K(x,x') = \sum_{i=1}^{\infty} \lambda_i \psi_i(x) \psi_i(x')$, where $\lambda_1 \ge \lambda_2 \ge \cdots \ge 0$. 
The above line of work then showed that recent representation learning objectives can learn the optimal $d$ features,
which are simply the top-$d$ eigenfunctions $[\psi_1,\cdots,\psi_d]$ of this base kernel.
Given these $d$ features, a ``linear probe'' is learned atop via regression. It can be seen that this procedure is equivalent to kernel regression with the truncated kernel $K_d(x,x') = \sum_{i=1}^d \lambda_i \psi_i(x) \psi_i(x')$.
More generally, one can extend this to regression with a \textit{spectrally transformed kernel} (STK) $K_s(x,x') = \sum_{i=1}^{\infty} s(\lambda_i) \psi_i(x) \psi_i(x')$,
where $s: [0, +\infty) \rightarrow [0, +\infty)$ is a general transformation function.
We call this generalized method \textit{spectrally transformed kernel regression} (STKR).
Then, $K_d$ amounts to an STK with the ``truncation function'' $s(\lambda_i) = \lambda_i \1_{\{ i \le d \}}$. \vspace{-.02 in}

In fact, STK and STKR were quite popular two decades ago in the context of semi-supervised learning, which similar to more recent representation learning approaches, aims to leverage unlabeled data. 
Their starting point again was a base kernel encoding inter-sample similarity, but unlike recent representation learning approaches, at that period this base kernel was often explicitly rather than implicitly specified. 
For manifold learning this was typically the $\epsilon$-neighbor or the heat kernel \citep{belkin2003laplacian}. For unlabeled data with clusters, this was the cluster kernel~\citep{chapelle2002cluster}. 
And for graph structured data, this was typically the (normalized) adjacency or Laplacian matrix of an explicitly specified adjacency graph~\citep{chung1997spectral,belkin2003laplacian}.  
A range of popular approaches then either extracted top eigenfunctions, or learned kernel machines. These methods include LLE \citep{roweis2000nonlinear}, Isomap \citep{tenenbaum2000global}, Laplacian eigenmap \citep{belkin2003laplacian} for manifold learning; spectral clustering \citep{NIPS2001_801272ee} for clustered data; and label propagation \citep{Zhu2002LearningFL,zhou2003learning} for graph structured data. With respect to kernel machines, \citet{bengio2004learning} linked these approaches to kernel PCA, and \citet{chapelle2002cluster,smola2003kernels,zhu2006graph} proposed various types of STK.\vspace{-.04 in}

In this work, we revisit STK and STKR, and provide three sets of novel results.
Our first contribution is elevating STKR to be \textit{a principled and general way of using unlabeled data}. 
Unlabeled data is useful as it provides additional information about the data distribution $\px$, but the kernel could be independent of $\px$.
STKR implicitly mixes the information of $\px$ and the kernel in the process of constructing the STK.
We then prove the generality of STKR via an existence result (\cref{claim:main}): Suppose the target function satisfies a certain unknown ``target smoothness'' that preserves the relative smoothness at multiple scales, then there must exist an STK that describes this target smoothness. \vspace{-.03 in}

Our second contribution is implementing STKR with \textit{general transformations} for the \emph{inductive setting}.
Most prior work is limited to the transductive setting where test samples are known at train time \citep{zhou2003learning,johnson2008graph}, 
in large part because it is easier to carry out spectral transformation of the finite-dimensional Gram matrix than the entire kernel function itself.
But for practical use and a comprehensive analysis of STKR, we need inductive approaches as well. 
Towards this, \citet{chapelle2002cluster} solved an optimization problem for each test point, which is not scalable; 
\citet[Chapter~11.4]{chapelle2006semi} provided a more scalable extension that ``propagates'' the labels to unseen test points after transductive learning, but they still needed to implicitly solve a quadratic optimization program for each set of test points. 
These approaches moreover do not come with strong guarantees. Modern representation learning approaches that use deep neural networks to represent the STK eigenfunctions inductively do provide scalable approaches, but no longer have rigorous guarantees. To the best of our knowledge, this work develops the first inductive STKR implementation that (a) has closed-form formulas for the predictor,
(b) works for very general STKs, (c) is scalable, and importantly, (d) comes with strong statistical guarantees. We offer detailed implementations with complexity analysis, and verify their efficiency with experiments on real tasks in \cref{sec:exp}.\vspace{-.03 in}

Our third contribution is developing rigorous theory for this general inductive STKR,
and proving nonparametric statistical learning bounds.
Suppose the target function $f^*$ is smooth \wrt{} an STK $\ks$,
and there are $n$ labeled and $m$ unlabeled samples both \iid{}.
We prove estimation and approximation error bounds for the STKR predictor (in $L^2$ norm) when $s(\lambda)$ is known or completely unknown.
By incorporating recent theoretical progress, three of our four bounds have tightness results.\vspace{-.05 in}

In a nutshell, this work conceptually establishes STKR as a general and principled way of learning with labeled and unlabeled data together with a similarity base kernel;
algorithmically we provide scalable implementations for general inductive STKR, and verify them on real datasets;
statistically we prove statistical guarantees, with technical improvements over prior work.
Limitations and open problems are discussed in \cref{sec:discussions},
and more related work can be found in \cref{app:related-work}.
We also provide a table of notations at the beginning of the Appendix for the convenience of our readers.

\vspace{-.04 in}
\section{Deriving STKR from Diffusion Induced Multiscale Smoothness}
\label{sec:pre}
\vspace{-.08 in}
Let the input space $\gX$ be a compact Hausdorff space, $\gY = \R$ be the label space,
and $\pxy$ be the underlying data distribution over $\gX \times \gY$, whose marginal distribution $\px$ is a Borel measure with support $\gX$.
We will use the shorthand $dp(x)$ to denote $d \px(x)$.
Let $\lxp$ be the Hilbert space of $L^2$ functions \wrt{} $\px$ that satisfy $\int f(x)^2 dp(x) < + \infty$, with $\langle f_1, f_2 \rangle_\px = \int f_1(x) f_2(x) dp(x)$ and $\| f \|_\px = \sqrt{\langle f, f \rangle_\px}$.
$f \in \lxp$ also implies $f \in L^1(\px)$, which guarantees that $\E_{X \sim \px}[f(X)]$ exists and is finite.
Let a base kernel $K(x,x')$ encode inter-sample similarity information over $\gX$.
We assume full access to $K$ (\ie{} we can compute $K(x,x')$ for all $x,x'$), and that $K$ satisfies: \vspace{-.05 in}
\begin{enumerate}[(i)]
    \item $K$ is a Mercer kernel, so it has the spectral decomposition: $K(x,x') = \sum_{i=1}^{\infty} \lambda_i \psi_i(x) \psi_i(x')$, where the convergence is absolute and uniform.
    Here $\lambda_i, \psi_i$ are the eigenvalues and orthonormal eigenfunctions of the integral operator $T_K: \lxp \rightarrow \lxp$ defined as $(T_K f)(x) = \int f(x') K(x,x') dp(x')$,
such that $\lambda_1 \ge \lambda_2 \ge \cdots \ge 0$,
and $\langle \psi_i, \psi_j \rangle_\px = \delta_{i,j} = \1_{\{ i = j\}}$.
    \item $K$ is \textit{centered}: Defined as $T_K \1 = \vzero$, where $\1(x) \equiv 1$ and $\vzero (x) \equiv 0$.
    One can center any $K$ by $\tilde{K}(x_0,y_0) = K(x_0,y_0) - \int K(x,y_0) dp(x) - \int K(x_0,y) dp(y) + \iint K(x,y) dp(x) dp(y)$.
\end{enumerate}

\textbf{Why assuming centeredness?}
In this work, \textbf{we view the smoothness and scale of a function $f$ as two orthogonal axes}, since our smoothness pertains to the inter-sample similarity.
Thus, \textit{we view $f_1$ and $f_2$ as equally smooth if they differ by a constant \aew{}.}
If $K$ is not centered, then this will not be true under the RKHS norm (see \cref{sec:formal}).
In practice centering is not a necessary step, though often recommended in kernel PCA.

This work investigates the regression function estimation problem in nonparametric regression, with error measured in $L^2$ norm (see \citet{gyorfi2002distribution} for an introduction of regression problems):\vspace{-.08 in}
\begin{boxed}
\textbf{Problem.}
Let $f^*(x) := \int y \; d \pxy (y|x) \in \lxp$ be the \textit{target regression function}.
Given $n$ labeled samples $(x_1,y_1),\cdots,(x_n,y_n) \overset{\iid}{\sim} \pxy$, $m$ unlabeled samples $x_{n+1},\cdots,x_{n+m} \overset{\iid}{\sim} \px$,
and access to $K(x,x')$ for any $x,x' \in \gX$,
find a predictor $\hat{f} \in \lxp$ with low  \textit{prediction error}:\vspace{-.1 in}
\begin{equation*}
    \err (\hat{f}, f^*) := \E_{X \sim \px} \brac{ \paren{ \hat{f}(X) - f^*(X)  }^2  } = \norm{ \hat{f} - f^* }_\px^2  .
\end{equation*}
\end{boxed}
\vspace{-.05 in}
One can also think of $f^*$ as the target function, and $y = f^*(x) + \epsilon$, where $\epsilon$ is random noise with zero mean.
Let $\{ \lambda_i: i \in \sI \}$ be the set of non-zero eigenvalues of $T_K$,
then define $K^p(x,x') := \sum_{i \in \sI} \lambda_i^{p} \psi_i(x) \psi_i(x')$ for all $p \in \R$,
which corresponds to an STK with $s(\lambda) = \lambda^p$.
The set $\{ K^p \}$ delineates a \emph{diffusion process} \wrt{} $K$,
because $K^{p+1}(x,x') = \int K^p(x,x_0) K(x',x_0) dp(x_0)$, so that $K^{p+1}$ captures similarity with one additional hop to $K^p$.
For continuous diffusion, $p$ can be real-valued.
Then, the \textit{reproducing kernel Hilbert space (RKHS)} associated with $K^p$ for any $p \ge 1$ is:\vspace{-.06 in}
\begin{equation}
\label{eqn:def-hkp}
    \gH_{K^p} := \sset{f = \sum_{i \in \sI} u_i \psi_i}{\sum_i \frac{u_i^2}{\lambda_i^p} < \infty}, \quad  \dotp{\sum_i u_i \psi_i, \sum_i v_i \psi_i}_{\gH_{K^p}} = \sum_i \frac{u_i v_i}{\lambda_i^p}  , \vspace{-.1 in}
\end{equation}
and $\| f\|_{\gH_{K^p}}^2 = \langle f, f \rangle_{\gH_{K^p}}$.
$K^p$ is the reproducing kernel of $\gH_{K^p}$, as one can verify for all $f \in \gH_{K^p}$ and $x$ that $\langle f, K^p_x \rangle_{\gH_{K^p}} = f(x)$,
for $K^p_x(z) := K^p(x,z)$.
$\gH_{K^1}$ is also denoted by $\hk$.
$\gH_{K^p}$ are called power spaces \citep{fischer2020sobolev} or interpolation Sobolev spaces \citep{jin2023minimax}.
\textit{Kernel ridge regression} (KRR) is a classical least-squares algorithm. KRR with $K$ is given by:\vspace{-.06 in}
\begin{equation*}
    \hat{f} \in \argmin_{f \in \hk} \set{  \frac{1}{n} \sum_{i=1}^n (f(x_i) - y_i)^2 + \beta_n \| f \|_\hk^2 } \vspace{-.08 in}
\end{equation*}
for some $\beta_n > 0$.
Although KRR is very widely used,
the problem is that it does not leverage the unlabeled data, because the optimal solution of KRR only depends on $x_1,\cdots,x_n$ but not $x_{n+1},\cdots,x_{n+m}$,
as is explicitly shown by the Representer Theorem \citep[Theorem~4.2]{scholkopf2002learning}:
\textit{All minimizers of KRR admit the form $\hat{f}^*(x) = \sum_{j=1}^n \alpha^*_j K(x,x_j)$, where}\vspace{-.08 in}
\begin{equation}
\label{eqn:representer}
\valpha^* \in \arginf_{\valpha \in \R^n} \set{ \frac{1}{n} \sum_{i=1}^n \brac{ \sum_{j=1}^n \alpha_j K(x_i,x_j) - y_i}^2 + \beta_n \sum_{i,j=1}^n \alpha_i \alpha_j K(x_i,x_j) } .  
\vspace{-.1 in}
\end{equation}
\vspace{.06 in}
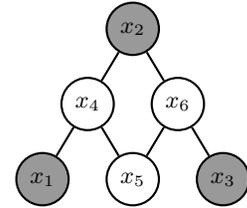
\begin{wrapfigure}{r}{0.25\textwidth}
\vskip -.27in
    \centering
    \begin{tikzpicture}
\tikzstyle{every node}=[font=\small]

\node[circle, draw, fill=black!40, thick, align=center] (a) at (0,0) {$x_1$};
\node[circle, draw, thick, align=center] (b) at (0.6,1) {$x_4$};
\node[circle, draw, fill=black!40, thick, align=center] (c) at (1.2,2) {$x_2$};
\node[circle, draw, thick, align=center] (d) at (1.2,0) {$x_5$};
\node[circle, draw, thick, align=center] (e) at (1.8,1) {$x_6$};
\node[circle, draw, fill=black!40, thick, align=center] (f) at (2.4,0) {$x_3$};

\draw[thick] (a) -- (b);
\draw[thick] (c) -- (b);
\draw[thick] (d) -- (b);
\draw[thick] (c) -- (e);
\draw[thick] (d) -- (e);
\draw[thick] (e) -- (f);

\end{tikzpicture}
    \vskip -.13in
    \caption{Sample graph.}
    \label{fig:graph-demo}
\vskip -.42in
\end{wrapfigure}
One consequence is that for KRR, the whole base kernel could be useless.
Consider the graph example on the right, where only the three shaded nodes are labeled, and $K$ is the adjacency matrix.
With KRR, the unlabeled nodes are useless and can be removed. Then, the graph becomes three isolated nodes, so it has zero impact on the learned predictor.

\vspace{-.04 in}
\subsection{Diffusion Induced Multiscale Smoothness}
\label{sec:formal}
Let us use this graph example to motivate STKR.
Unlabeled samples are useful as they offer more information about the marginal distribution $\px$.
The problem is that we don't know the connection between $K$ and $\px$.
So while KRR can leverage $K$, it does not necessarily exploit more information about $\px$ than supervised learning over the $n$ labeled samples, which is why the unlabeled samples are useless in our graph example.
To address this, the seminal work \citet{belkin2006manifold} proposed this elegant idea of explicitly including another regularizer $\| f \|_{\gI}^2$ that reflects the intrinsic structure of $\px$. For instance, $\| f \|_{\gI}^2$ can be defined with the Laplace-Beltrami operator in manifold learning, or the graph Laplacian for graphs.
In comparison, STKR also exploits $\px$, but in an implicit way---the construction of the STK mixes $K$ with $\px$.
To see this: In our graph example, suppose we were to use the STK $K^2$, \ie{} a two-step random walk.
Then, (a) the graph would be useful again because the three labeled nodes were connected in $K^2$, and (b) we mixed $K$ with $\px$ since $K^2$ is essentially an integral of $K \times K$ over $\px$.
The main takeaway from the above analysis is: \textit{With STKR, we impose another kind of smoothness we call the \textbf{target smoothness}, and it mixes the information of $K$ with the information of $\px$.}
In the rest of this section, we formally characterize this target smoothness.

We start with formally defining ``smoothness''.
Suppose the inter-sample similarity is characterized by a metric $d(x,x')$ over the input space $\gX$,
then one can naturally measure the smoothness of $f$ by its Lipschitz constant $\text{Lip}_d(f) = \sup_{x,x' \in \gX, x \neq x'} \frac{|f(x) - f(x')|}{d(x,x')}$.
So it suffices to specify $d(x,x')$.
If $\gX$ is an Euclidean space, then one can choose $d$ to be the Euclidean distance, which is used in lots of prior work \citep{tenenbaum2000global,belkin2003laplacian}.
The caveat is that the Euclidean distance is not guaranteed to be correlated with similarity, and $\gX$ is not necessarily Euclidean in the first place.

Instead, one can use $K$ to define the metric, which should align better with inter-sample similarity by the definition of $K$.
And if one further assumes transitivity of similarity,
\ie{} $(a,b)$ and $(b,c)$ being similar implies that $(a,c)$ are similar, then $K^p$ also aligns with similarity.
The kernel metric of $K^p$ is given by $d_{K^p}(x,x') := \| K^p_x - K^p_{x'} \|_{\gH_{K^p}} = \sum_{i} \lambda_i^p (\psi_i(x) - \psi_i(x'))^2$,
which is equivalent to the diffusion distance defined in \citet{coifman2006diffusion}, and $p$ can be real-valued.
Thus, kernel diffusion $\{ K^p \}$ induces a multiscale metric geometry over $\gX$,
where a larger $p$ induces a weaker metric.
Here ``weaker'' means $d_{K^p} = O(d_{K^q})$ if $p > q$.
One can also think of $\{K^p\}_{p \ge 1}$ as forming a chain of smooth function classes: $\lxp \supset \gH_{K^1} \supset \gH_{K^2} \supset \cdots$,
and for continuous diffusion we can also have sets like $\gH_{K^{1.5}}$.
A larger $p$ imposes a stronger constraint since $\gH_{K^p}$ is smaller.

Now we show: $\| f \|_{\gH_{K^p}}$ is equal to its Lipschitz constant.
But this is not true for $\text{Lip}_{d} (f)$, which is not very tractable under the topological structure of $\gX$.
Thus, we consider the space of finite signed measures over $\gX$, denoted by $\bar{\gX}$.
For any function $f$ on $\gX$, define its mean $\bar{f}$ as a linear functional over $\bar{\gX}$, such that $\bar{f}(\mu) = \int_{\gX} f(x) \mu(x)$.
Then, define $d_{K^p}(\mu, \nu) := \| \int K_{x}^p \mu(x) -  \int K_{x}^p \nu(x) \|_{\gH_{K^p}}$ for $\mu, \nu \in \bar{\gX}$, and $\overline{\text{Lip}}_{d_{K^p}} (f) := \sup_{\mu, \nu \in \overline{\gX}, \mu \neq \nu } \frac{|\bar{f}(\mu) - \bar{f}(\nu)|}{d_{K^p}(\mu, \nu)}$.
In other words, $f$ is smooth if its mean \wrt{} $\mu$ does not change too much when the measure $\mu$ over $\gX$ changes by a little bit.
Then, we have:
\begin{prop}[Proofs in \cref{app:proofs}]
\label{prop:extend-lip}
   This $\overline{\textnormal{Lip}}_{d_{K^p}} (f)$ satisfies: $\overline{\textnormal{Lip}}_{d_{K^p}} (f) = \| f \|_{\gH_{K^p}}$, $\forall f \in \gH_{K^p}$.
\end{prop}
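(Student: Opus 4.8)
The plan is to turn the mean functional $\bar f$ into an $\gH_{K^p}$-inner product using the reproducing property of $K^p$, after which the claimed identity reduces to Cauchy--Schwarz plus a density argument. First I would pass from pairs of measures to a single signed measure: for $\mu,\nu \in \overline{\gX}$ set $\xi := \mu - \nu$, and define $g_\xi := \int_\gX K^p_x \, d\xi(x)$. Then $\bar f(\mu) - \bar f(\nu) = \int_\gX f(x)\, d\xi(x)$ and $d_{K^p}(\mu,\nu) = \|g_\xi\|_{\gH_{K^p}}$. The crux is the pair of identities: $g_\xi \in \gH_{K^p}$, and for every $h \in \gH_{K^p}$, $\langle h, g_\xi\rangle_{\gH_{K^p}} = \int_\gX h(x)\, d\xi(x)$; specializing $h = f$ gives $\bar f(\mu) - \bar f(\nu) = \langle f, g_\xi\rangle_{\gH_{K^p}}$, so
\[
\overline{\textnormal{Lip}}_{d_{K^p}}(f) \;=\; \sup_{\xi \in \overline{\gX},\, \xi \neq 0} \frac{\big|\langle f, g_\xi\rangle_{\gH_{K^p}}\big|}{\|g_\xi\|_{\gH_{K^p}}}.
\]

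Next I would verify those identities via a Bochner-integral argument. Since $p \ge 1$ and $\lambda_1 \ge \lambda_2 \ge \cdots \ge 0$, we have $K^p(x,x) = \sum_{i\in\sI}\lambda_i^p \psi_i(x)^2 \le \lambda_1^{p-1} K(x,x)$, and $K(x,x)$ is bounded on the compact space $\gX$ because $K$ is Mercer; hence $\|K^p_x\|_{\gH_{K^p}} = \sqrt{K^p(x,x)}$ is uniformly bounded. As $|\xi|(\gX) < \infty$, the map $x \mapsto K^p_x$ is Bochner-integrable against $\xi$, so $g_\xi$ is a well-defined element of $\gH_{K^p}$ (which also shows $d_{K^p}$ is well-defined on $\overline{\gX}$). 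Because bounded linear functionals commute with the Bochner integral, $\langle h, g_\xi\rangle_{\gH_{K^p}} = \int_\gX \langle h, K^p_x\rangle_{\gH_{K^p}}\, d\xi(x) = \int_\gX h(x)\, d\xi(x)$ by the reproducing property of $K^p$. The inequality $\overline{\textnormal{Lip}}_{d_{K^p}}(f) \le \|f\|_{\gH_{K^p}}$ is then immediate from Cauchy--Schwarz applied to the displayed supremum.

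For the matching lower bound I would invoke the standard fact that $\sspan\{K^p_x : x \in \gX\}$ is dense in $\gH_{K^p}$. Assuming $f \neq 0$ (the case $f = 0$ is trivial since both sides vanish), fix $\eps > 0$ small and pick a finite combination $\sum_j c_j K^p_{x_j}$ with $\|f - \sum_j c_j K^p_{x_j}\|_{\gH_{K^p}} < \eps$; this combination is exactly $g_\xi$ for the atomic signed measure $\xi = \sum_j c_j \delta_{x_j} \in \overline{\gX}$, which is nonzero for $\eps$ small. Then $\langle f, g_\xi\rangle_{\gH_{K^p}} = \|f\|_{\gH_{K^p}}^2 + \langle f, g_\xi - f\rangle_{\gH_{K^p}} \ge \|f\|_{\gH_{K^p}}^2 - \eps\|f\|_{\gH_{K^p}}$ and $\|g_\xi\|_{\gH_{K^p}} \le \|f\|_{\gH_{K^p}} + \eps$, so the corresponding ratio is at least $(\|f\|_{\gH_{K^p}}^2 - \eps\|f\|_{\gH_{K^p}})/(\|f\|_{\gH_{K^p}} + \eps) \to \|f\|_{\gH_{K^p}}$ as $\eps \to 0^+$; taking the Jordan decomposition $\xi = \xi_+ - \xi_-$ realizes this $\xi$ as a genuine difference $\mu - \nu$ with $\mu \neq \nu$. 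Combining the two bounds yields $\overline{\textnormal{Lip}}_{d_{K^p}}(f) = \|f\|_{\gH_{K^p}}$.

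I expect the main obstacle to be the measure-theoretic bookkeeping in the second step: checking that $g_\xi$ really belongs to $\gH_{K^p}$ for an \emph{arbitrary} finite signed measure $\xi$ (not merely atomic ones) and that the $\gH_{K^p}$-inner product passes inside the integral, which is precisely where the uniform bound on $K^p(x,x)$ and the reproducing property are both needed. The reduction to signed measures, the Cauchy--Schwarz upper bound, and the density-based lower bound are then routine.
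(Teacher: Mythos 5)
Your proof is correct, and the overall structure (duality via the reproducing property, Cauchy--Schwarz for the upper bound, density of kernel sections for the lower bound) mirrors the paper's. The interesting divergence is in how you close the lower bound. The paper argues that $\{K_x^p - K_{x'}^p\}$ spans $\gH_{K^p}$ densely, invoking centeredness to exclude constant functions, and then asserts that every $f \in \gH_{K^p}$ can be written \emph{exactly} as $\iint (K_x^p - K_{x'}^p)\,d\xi(x,x')$, from which the supremum is attained. Strictly speaking, dense span does not by itself yield such an exact integral representation (the set of measure-integrals of kernel sections is a subspace of $\gH_{K^p}$ that need not be closed), so the paper's wording glosses over a step. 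Your version sidesteps this: you approximate $f$ by a finite atomic combination $\sum_j c_j K_{x_j}^p$ within $\eps$ and run a limiting argument, which shows the supremum is \emph{approached} (enough for the claimed equality) without asserting attainment. Two further points in your favor: (i) using density of $\sspan\{K_x^p\}$ rather than of the difference family $\{K_x^p - K_{x'}^p\}$ removes the dependence on centeredness for this particular statement (if $\langle f, K_x^p\rangle = 0$ for all $x$ then $f \equiv 0$ outright, with no constants to exclude), so your argument is slightly more general; (ii) your Bochner-integrability check that $g_\xi \in \gH_{K^p}$, using $K^p(x,x) \le \lambda_1^{p-1}K(x,x)$ and boundedness of the Mercer kernel on the compact $\gX$, also verifies that $d_{K^p}$ is well-defined on $\overline{\gX}$, a point the paper leaves implicit. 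The one cosmetic caveat: since $\overline{\gX}$ already consists of finite \emph{signed} measures, you do not actually need the Jordan decomposition of $\xi$; taking $\mu = \xi$ and $\nu = 0$ (or $\mu = \xi_+$, $\nu = \xi_-$) are both admissible, and $\mu \neq \nu$ holds whenever $\xi \neq 0$.
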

\vspace{-.03in}
We define $r_{K^p}(f) := \| f - \E_{\px}[f] \|_{\px}^2 / \;\overline{\text{Lip}}_{d_{K^p}} (f)^2 = \| f - \E_{\px}[f] \|_{\px}^2 / \|f\|_{\gH_{K^p}}^2$, and use it to measure the smoothness of any $f \in \lxp$ at scale $p \ge 1$.
Here $\| f \|_{\gH_{K^p}}$ is extended to all $f \in \lxp$:
If $\exists f_p \in \gH_{K^p}$ such that $f - \E_\px[f] = f_p$ ($\px$-\aew{}), then $\| f \|_{\gH_{K^p}} := \| f_p \|_{\gH_{K^p}}$;
If there is no such $f_p$, then $\| f \|_{\gH_{K^p}} := +\infty$.
Since $K$ is centered, for any $f_1$ and $f_2$ that differ by a constant $\px$-\aew{}, there is $r_{K^p}(f_1) = r_{K^p}(f_2)$.
This would not be true without the centeredness assumption.
We define $r_{K^p}(f)$ as a ratio to make it scale-invariant, \ie{} $f$ and $2f$ are equally smooth, for the same purpose of decoupling smoothness and scale.
And in \cref{app:proof-claim-main}, we will discuss the connection between $r_{K^p}(f)$ and discriminant analysis, as well as the Poincar\'e constant.

Now we characterize ``target smoothness'', an unknown property that the target $f^*$ possesses. We assume that it has the same form $r_t(f) := \| f - \E_{\px}[f] \|_{\px}^2 / \overline{\text{Lip}}_{d_t} (f)^2$, for some metric $d_t$ over $\bar{\gX}$.
Then, we assume all functions with ``target smoothness'' belong to a Hilbert space $\gH_t$, and $x,x'$ are similar if all functions in $\gH_t$ give them similar predictions,
\ie{} $d_t(\mu, \nu) = \sup_{\| f \|_{\gH_t} = 1} | \bar{f}(\mu) - \bar{f}(\nu)|$.
We also assume that target smoothness implies base smoothness, \ie{} $\gH_t \subset \hk$ (this is relaxable).

\vspace{-.05 in}
\subsection{Target Smoothness can Always be Obtained from STK: Sufficient Condition}
\label{sec:stkr}
\vspace{-.05 in}
Let $r_t(f)$ be defined as above.
Our first theorem gives the following sufficient condition: If the target smoothness preserves relative multiscale smoothness,
then it must be attainable with an STK.
\vspace{-.08 in}
\begin{boxed}
\begin{thm}
\label{claim:main}
    If $r_t(f)$ preserves relative smoothness: ``$\forall f_1,f_2 \in \lxp$, if $r_{K^p}(f_1) \ge r_{K^p}(f_2)$ \textbf{for all} $p \ge 1$, then $r_t(f_1) \ge r_t(f_2)$'',
    and $\gH_t \subset \hk$,
    then $r_t(f) = \| f - \E_{\px}[f] \|_{\px}^2 / \| f \|_{\gH_t}^2$,
    and $\gH_t$ must be an RKHS, whose reproducing kernel is an STK that admits the following form:\vspace{-.06 in}
\begin{equation}
\label{eqn:spec-trans}
    \ks (x,x') = \sum\nolimits_{i: \lambda_i > 0} s(\lambda_i) \psi_i(x) \psi_i(x'),
\vspace{-.06 in}
\end{equation}
for a transformation function $s: [0, +\infty) \rightarrow [0, +\infty)$ that is: (i) monotonically non-decreasing, (ii) $s(\lambda) \le M \lambda$ for some constant $M > 0$, (iii) continuous on $[0,+\infty)$, and (iv) $C^{\infty}$ on $(0,+\infty)$.
\end{thm}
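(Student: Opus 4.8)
The plan is to characterize $r_t$ entirely through the multiscale ratios $\{r_{K^p}\}_{p\ge 1}$, then show any quantity consistent with these ratios must come from a spectral transformation.

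First I would work out what the family $\{r_{K^p}(f)\}_{p \ge 1}$ looks like for a fixed $f = \sum_{i \in \sI} u_i \psi_i$. By \cref{prop:extend-lip}, $\overline{\text{Lip}}_{d_{K^p}}(f)^2 = \|f\|_{\gH_{K^p}}^2 = \sum_i u_i^2 / \lambda_i^p$, while the numerator $\|f - \E_\px[f]\|_\px^2 = \sum_i u_i^2$ is independent of $p$ (using centeredness so that the mean subtraction only kills a component orthogonal to all $\psi_i$ with $\lambda_i > 0$). Hence $r_{K^p}(f) = \big(\sum_i u_i^2\big) / \big(\sum_i u_i^2 \lambda_i^{-p}\big)$. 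The key structural observation is that $\|f\|_{\gH_{K^p}}^2$, as a function of $p$, is a "moment sequence" / completely monotone-type object in the variables $\lambda_i^{-1}$: it is determined by the measure $\mu_f := \sum_i u_i^2 \,\delta_{\lambda_i}$ on $(0,\infty)$ via $\|f\|_{\gH_{K^p}}^2 = \int \lambda^{-p}\, d\mu_f(\lambda)$. So the relation "$r_{K^p}(f_1) \ge r_{K^p}(f_2)$ for all $p\ge1$" is a comparison of two such moment families.

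Next I would use the preservation hypothesis to pin down $r_t$. The hypothesis says: whenever the whole $p$-indexed family for $f_1$ dominates that for $f_2$, then $r_t(f_1) \ge r_t(f_2)$; by symmetry (applying it both ways) equality of all $r_{K^p}$ forces equality of $r_t$. So $r_t(f)$ is a function of the family $\{r_{K^p}(f)\}_{p\ge1}$ alone — equivalently, after normalizing $\|f - \E_\px[f]\|_\px = 1$, a monotone functional of the sequence $p \mapsto \|f\|_{\gH_{K^p}}^2$. I want to conclude $r_t(f) = 1/\|f\|_{\gH_t}^2$ for a quadratic form $\|\cdot\|_{\gH_t}^2 = \sum_i u_i^2/s(\lambda_i)$. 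The route: test the preservation property on two-term functions $f = u_1\psi_i + u_2\psi_j$ supported on two eigenvalues $\lambda_i > \lambda_j$; the family $\{r_{K^p}\}$ for such $f$ traces a curve parametrized by the ratio $u_1^2:u_2^2$, and comparing with single-eigenvalue functions forces $r_t$ to be "additive in the right coordinates", i.e. there is a function $s$ on the eigenvalues with $\overline{\text{Lip}}_{d_t}(f)^2 = \sum_i u_i^2/s(\lambda_i)$. This additivity, plus $\gH_t \subset \hk$, makes $\gH_t$ an RKHS with reproducing kernel $\ks$ of the claimed form \eqref{eqn:spec-trans}. The containment $\gH_t \subset \hk$ gives property (ii), $s(\lambda) \le M\lambda$; monotonicity of $\lambda \mapsto s(\lambda)$ (property (i)) comes from: if $\lambda_i > \lambda_j$ then $\delta$-function at $\lambda_i$ is "more $K^p$-smooth" at every scale, which via preservation forces $s(\lambda_i) \ge s(\lambda_j)$.

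Finally I would upgrade the regularity: $s$ is so far only defined on the discrete set $\{\lambda_i\}$, and I need it continuous on $[0,\infty)$ and $C^\infty$ on $(0,\infty)$. The idea is that the eigenvalues $\lambda_i \to 0$ accumulate at $0$, and the moment-sequence rigidity — two completely monotone families agreeing forces the generating measures to agree — lets me interpolate $s$ uniquely and smoothly off the grid; concretely, I expect to express $s$ (or its reciprocal) as a ratio of integrals against a fixed finite measure and invoke that such ratios are real-analytic in $p$, transferring smoothness to $\lambda = \lambda_i^{-p}$ dependence. Continuity at $0$ comes from $s(\lambda)\le M\lambda \to 0$.

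The main obstacle I anticipate is the middle step: going from "$r_t$ is a monotone functional of the full scale-family $p\mapsto\|f\|_{\gH_{K^p}}^2$" to "$r_t$ is itself of the quotient form $1/\sum u_i^2 s(\lambda_i)^{-1}$". This is where one really uses that the $\{\lambda_i^{-p}\}_{p\ge1}$, as functions of $i$, are rich enough to separate the quadratic form $\sum u_i^2/s(\lambda_i)$ from any genuinely non-quadratic monotone functional — essentially a determinacy/density argument for the moment problem on $(0,\infty)$, combined with carefully chosen finitely-supported test functions $f$ to force bilinearity. Everything else (computing $r_{K^p}$, deriving (i) and (ii), and the smoothness bootstrap) I expect to be comparatively routine given \cref{prop:extend-lip} and the centeredness assumption.
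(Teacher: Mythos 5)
Your proposal takes a genuinely different route from the paper, and there is a real gap in it. The central step you flag as the ``main obstacle'' --- passing from ``$r_t$ is monotone with respect to the partial order induced by the family $p \mapsto r_{K^p}(\cdot)$'' to ``$\overline{\textnormal{Lip}}_{d_t}(f)^2$ is the diagonal quadratic form $\sum_i u_i^2/s(\lambda_i)$'' --- is not merely technically demanding; I do not believe it can be carried out as sketched. The hypothesis is a one-sided implication along a \emph{sparse} partial order: for a generic pair $(f_1,f_2)$ neither $r_{K^p}(f_1)\ge r_{K^p}(f_2)$ for all $p$ nor the reverse holds, so the hypothesis imposes no constraint on how $r_t(f_1)$ and $r_t(f_2)$ compare. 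Your two-point test functions $u_1\psi_i+u_2\psi_j$ only give you that $r_t$ is monotone along the one-parameter slices where the $r_{K^p}$-families \emph{are} comparable (e.g.\ the ratio $u_1^2{:}u_2^2$), which sandwiches $r_t$ between its boundary values but does not force a bilinear structure; non-quadratic monotone functionals of the moment family are not ruled out by these constraints alone. There is no ``moment determinacy'' available to save this, because the hypothesis never pins down equalities off the comparable chains.

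What you have overlooked is that the paper does not need to derive the quadratic form: it is already \emph{assumed} in \Cref{sec:formal} that $\gH_t$ is a Hilbert space and that $d_t(\mu,\nu)=\sup_{\|f\|_{\gH_t}=1}|\bar f(\mu)-\bar f(\nu)|$, so $\overline{\textnormal{Lip}}_{d_t}(f) = \|f\|_{\gH_t}$ is a Hilbert-space norm from the start. Given this, the paper's proof is quite different from yours in every stage: (i) the Poincar\'e bound $r_t(f)\le r_t(\psi_1)$ (which does follow from your preservation hypothesis applied at the top eigenfunction) plus the closed graph theorem for the inclusion $\gH_t\hookrightarrow\hk$ establishes that $\gH_t$ is an RKHS; (ii) the preservation hypothesis is then re-applied on the nested subspaces $\gH_0=\{f:\langle f,\psi_i\rangle_\px=0,\ i\le d-1\}$ to carry out a Courant--Fischer-style induction identifying $\psi_1,\psi_2,\ldots$ as the eigenfunctions of $K_s$, using only that they maximize $r_{K^p}$ at every scale on those subspaces; (iii) the bound $s_i\le M\lambda_i$ comes from a short contradiction argument with a test function $f=\sum_i\sqrt{i^{-1}\lambda_{t_i}}\psi_i$ exploiting $\gH_t\subset\hk$; (iv) the interpolating $s$ is produced by an explicit sum of $C^\infty$ bump functions --- no moment-problem machinery, and indeed $s$ is highly non-unique, so the uniqueness/rigidity intuition in your last paragraph is misplaced (the paper even gives a counterexample showing $s$ may fail to be differentiable at $0$). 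In short: your computation of $r_{K^p}(f)$ and your deduction of property (ii) from $\gH_t\subset\hk$ are on track, but the bilinearity step is a gap that the paper does not need to close because it takes the Hilbert structure of $\gH_t$ as given and extracts the spectral form via the closed graph theorem and a variational eigenfunction argument rather than from the preservation hypothesis alone.
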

\end{boxed}
\vspace{-.05 in}
The proof is done by sequentially showing that
(i) $\gH_t$ is an RKHS;
(ii) Its reproducing kernel is $\ks (x,x') := \sum_{i} s_i \psi_i(x) \psi_i(x')$, with $s_1 \ge s_2 \ge \cdots \ge 0$;
(iii) $s_i = O(\lambda_i)$;
(iv) There exists such a function $s(\lambda)$ that interpolates all $s_i$.
From now on, we will use $\hks$ to denote $\gH_t$.
This theorem implies that $s(\lambda)$ makes the eigenvalues decay faster than the base kernel, but it does not imply that $\ks$ is a linear combination of $\{ K^p \}_{p \ge 1}$.
This result naturally leads to KRR with $\| f \|_{\hks}^2 = \| f \|_{\gH_t}^2$:\vspace{-.05 in}
\begin{equation}
\label{eqn:ridge}
\tilde{f} \in \argmin_{f - \E_\px[f] \in \hks} \set{ \frac{1}{n} \sum_{i=1}^n \paren{f(x_i) - y_i}^2 + \beta_n \norm{f - \E_{X \sim \px} [f(X)] }_\hks^2 }   ,
\end{equation}
which we term \textit{spectrally transformed kernel regression} (STKR).
One could also relax the assumption $\gH_t \subset \gH_K$ by considering $\gH_{K^p}$ for $p \ge p_0$ where $p_0 < 1$.
Assuming that $\gH_{K^{p_0}}$ is still an RKHS and $\gH_t \subset \gH_{K^{p_0}}$, one can prove the same result as \cref{claim:main}, with \textit{(ii)} changed to $s(\lambda) \le M \lambda^{p_0}$.\vspace{-.02 in}

Now we develop theory for STKR, and show how it exploits the unlabeled data.
Here is a road map:
\vspace{-.12 in}
\begin{enumerate}[(a)]
    \item We first study the easier transform-aware setting in \cref{sec:semi-supervised}, where a good $s(\lambda)$ is given by an oracle.
    But even though $s(\lambda)$ is known, $\ks$ is inaccessible as one cannot obtain $\psi_i$ with finite samples.
    Unlabeled data becomes useful when one constructs a kernel $\hatks$ to approximate $\ks$.
    \item In reality, such an oracle need not exist.
    So in \cref{sec:representation}, we study the harder transform-agnostic setting where we have no knowledge of $s(\lambda)$ apart from \cref{claim:main}.
    We examine two methods: \vspace{-.02 in}
    \begin{enumerate}[(i)]
        \item STKR with inverse Laplacian (\cref{exp:inv-lap}), which is popular in semi-supervised learning and empirically works well on lots of tasks though the real $s$ might not be inverse Laplacian.
        \item STKR with kernel PCA, which extracts the top-$d$ eigenfunctions to be an encoder and then learns a linear probe atop.
        This is used in many manifold and representation learning methods.
        Here, unlabeled data is useful when approximating $\psi_1,\cdots,\psi_d$ in kernel PCA.
    \end{enumerate}
\end{enumerate}
\vspace{-.1 in}
\textbf{Notation:} For any kernel $K$, we use $\mG_{K} \in \R^{(n+m) \times (n+m)}, \mG_{K,n} \in \R^{n \times n}, \mG_{K,m} \in \R^{m \times m}$ to respectively denote its Gram matrix on all, labeled and unlabeled samples,
\ie{} $\mG_K[i,j] = K(x_i, x_j)$.
\vspace{-.23 in}
\section{Transform-aware: STKR with Known Polynomial Transform}
\label{sec:semi-supervised}
\vspace{-.1 in}
Let the scale of $f^*$ be measured by $B$.
This section supposes that $s(\lambda)$ is known, and the following:
\begin{ass}
\label{ass:analytic}
$s(\lambda) = \sum_{p=1}^{\infty} \pi_p \lambda^p$ is a polynomial,
with $\pi_p \ge 0$.
\end{ass}
\begin{ass}
\label{ass:bt}
    There exists a constant $\bt > 0$ such that $K(x,x) \le \bt^2$ for $\px$-almost all $x$.
\end{ass}
\begin{ass}
\label{ass:target}
$\E_\px[f^*] = 0$,
and there exist constants $B, \epsilon > 0$ such that: $\| f^*\|_\px \le B$, $f^* \in \hks$, and $\| f^* \|_\hks^2 \le \epsilon \| f^* \|_\px^2$ (\ie{} $r_{t}(f^*) \ge \epsilon^{-1}$). (\cf{} the \textbf{isometry property} in \citet{zhai2023understanding})
\end{ass}
\begin{ass}
\label{ass:moment}
    $\pxy$ satisfies the \textbf{moment condition} for $\sigma, L > 0$: $\E[|y - f^*(x)|^r] \le \frac{1}{2} r! \sigma^2 L^{r-2}$ for all $r \ge 2$ and $\px$-almost all $x$.
    (\eg{} For $y - f^*(x) \sim \gN(0,\sigma^2)$, this holds with $L = \sigma$.) \vspace{-.05 in}
\end{ass}
\cref{ass:analytic} is a natural condition for discrete diffusion, such as a multi-step random walk on a graph, and $p$ starts from $1$ because $s(0) = 0$. 
The assumption $\E_\px[f^*] = 0$ in \cref{ass:target} is solely for the simplicity of the results, without which one can prove the same but more verbose bounds.
The moment condition \cref{ass:moment} is essentially used to control the size of the label noise.\vspace{-.05 in}

\ding{226} \textbf{Method:} We implement inductive STKR by constructing a computable kernel $\hatks(x,x')$ to approximate the inaccessible $\ks$.
For example, if $\ks(x,x') = K^2(x,x') = \int K(x,x_0) K(x',x_0) dp(x_0)$,
then a Monte-Carlo approximation can be done by replacing the integral over $x_0$ with an average over $x_1,\cdots,x_{n+m}$. Computing this average leverages the unlabeled data. Specifically, we define:
\vspace{-.05 in}
\begin{boxed}
\vspace{-.12 in}
\begin{align}
    & \hat{f} \in \argmin_{f \in \hhatks} \set{ \frac{1}{n} \sum_{i=1}^n \paren{f(x_i) - y_i}^2 + \beta_n \| f\|_\hhatks^2  },  \label{eqn:hatks-def}  \\ 
     \text{where} \; \; & \hatks(x,x') := \sum_{p=1}^{\infty} \pi_p \hat{K}^p(x,x'); \; \hat{K}^1 = K;  \;  \forall p \ge 2, \hat{K}^p(x,x') = \frac{\vv_K(x)^{\top} \mG_K^{p-2} \vv_K(x') }{(n+m)^{p-1}}  . \nonumber
\end{align}
\vspace{-.15 in}
\end{boxed}
Here, $\vv_K(x) \in \R^{n+m}$ such that $\vv_K(x)[i] = K(x, x_i)$, $i \in [n+m]$.
One can compute $\hatks(x,x')$ for any $x,x'$ with full access to $K(x,x')$.
Let $\vy = [y_1,\cdots,y_n] \in \R^n$, and $\vv_{\ks,n}(x) \in \R^{n}$ be defined as $\vv_{\ks,n}(x)[i] = \ks(x,x_i)$ for $i \in [n]$.
The following closed-form solutions can be derived from the Representer Theorem.
While they are not necessarily unique, we will use them throughout this work: \vspace{-.12 in}
\begin{empheq}[left=\empheqlbrace]{align}
& \tilde{f}(x) = \vv_{\ks,n}(x)^{\top} \tilde{\valpha}, \qquad  \tilde{\valpha} =  ( \mG_{\ks, n} + n \beta_n \mI_n  )^{-1} \vy   ;  \label{eqn:ftilde}  \\
& \hat{f}(x) = \vv_{\hatks,n}(x)^{\top} \hat{\valpha}, \qquad  \hat{\valpha} = ( \mG_{\hatks, n} + n \beta_n \mI_n  )^{-1} \vy    .  \label{eq: closed form STKR}
\end{empheq}
\ding{226} \textbf{Results overview:} Now, for all $s$ and $f^*$ that satisfy the above assumptions, we bound the prediction error $\| \hat{f} - f^* \|_\px^2$.
The bound has two parts and here is a synopsis:
In Part 1 (\cref{thm:est}), we assume access to $\ks$, and use the general results in \citet{fischer2020sobolev} to bound the estimation error entailed by KRR with finite samples and label noise;
In Part 2 (\cref{thm:approx}), we bound the approximation error entailed by using $\hatks$ to approximate the inaccessible $\ks$. \vspace{-.04 in}
\begin{boxed}
\begin{thm}
\label{thm:est}
Let $M$ be given by \cref{claim:main}.
If eigenvalues of $\ks$ decay by order $p^{-1}$ for  $p \in (0,1]$, \ie{} $s(\lambda_i) = O(i^{-\frac{1}{p}})$ for all $i$,
then under \cref{ass:bt,ass:moment},
for a sequence of $\set{\beta_n}_{n \ge 1}$ with $\beta_n = \Theta(n^{-\frac{1}{1+p}})$,
there is a constant $c_0 > 0$ independent of $n \ge 1$ and $\tau \ge \bt^{-1} M^{-\frac{1}{2}}$ such that
\vspace{-.05 in}
\begin{equation*}
\norm{ \tilde{f} - f^* }_\px^2 \le  c_0 \tau^2 \bt^2 M   \brac{   \paren{ \epsilon B^2 + \sigma^2 } n^{-\frac{1}{1+p}} + \max \set{L^2, \bt^2 M \epsilon B^2} n^{-\frac{1+2p}{1+p}}  } \vspace{-.08 in}
\end{equation*}
holds for all $f^*$ satisfying \cref{ass:target} and sufficiently large $n$ with probability at least $1 - 4 e^{- \tau}$.
\end{thm}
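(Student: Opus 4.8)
The plan is to obtain this bound as a direct instantiation of the general learning-rate theory for kernel ridge regression in \citet{fischer2020sobolev}, applied to the spectrally transformed kernel $\ks$ and its RKHS $\hks$ produced by \cref{claim:main}. Observe that $\tilde f$ in \eqref{eqn:ftilde} is exactly the ridge-regularized least-squares estimator with kernel $\ks$ and parameter $\beta_n$ that is analyzed there (it depends only on $x_1,\dots,x_n$, consistent with the Representer Theorem), so three things remain: (a) verify that $\ks$ satisfies the standing hypotheses of that theory, (b) read off the relevant exponents from our assumptions, and (c) translate their generic bound into the stated constants.

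First I would check the hypotheses on $\ks$. By \cref{claim:main}, $\ks(x,x') = \sum_{i:\lambda_i>0} s(\lambda_i)\psi_i(x)\psi_i(x')$ with $\{\psi_i\}$ orthonormal in $\lxp$; since $s(\lambda)\le M\lambda$ we get $\sum_i s(\lambda_i)\le M\sum_i\lambda_i<\infty$, so $\ks$ is a bounded Mercer kernel with the same eigenfunctions as $K$ and eigenvalues $s(\lambda_i)$. Moreover $\ks(x,x)=\sum_i s(\lambda_i)\psi_i(x)^2\le M\sum_i\lambda_i\psi_i(x)^2 = M K(x,x)\le M\bt^2$ for $\px$-almost all $x$ by \cref{ass:bt}, which both bounds the kernel and yields the embedding $\|f\|_{L^\infty(\px)}\le \sup_x\sqrt{\ks(x,x)}\,\|f\|_\hks\le \bt M^{1/2}\|f\|_\hks$, i.e. the embedding property holds with exponent $1$ and constant $A=\bt M^{1/2}$. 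The hypothesis $s(\lambda_i)=O(i^{-1/p})$ is precisely the eigenvalue-decay assumption with exponent $p\in(0,1]$, and the label-noise moment condition required there is literally \cref{ass:moment} with parameters $\sigma,L$.

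Next I would identify the source condition and plug in. Since $\E_\px[f^*]=0$, \cref{ass:target} gives $f^*\in\hks$ with $\|f^*\|_\hks^2\le\epsilon\|f^*\|_\px^2\le\epsilon B^2$, i.e. the source exponent is $1$ with source-norm bound $\epsilon B^2$, and hence $\|f^*\|_{L^\infty(\px)}\le \bt M^{1/2}\|f^*\|_\hks\le \bt(M\epsilon)^{1/2}B$. Invoking the learning-rate theorem of \citet{fischer2020sobolev} with embedding exponent $1$, source exponent $1$, eigenvalue-decay exponent $p$, the error measured in $L^2(\px)$, and regularization $\lambda=\beta_n=\Theta(n^{-1/(1+p)})$ --- exactly the bias--variance balance point $\lambda^{1+p}\asymp n^{-1}$ --- yields, with probability at least $1-4e^{-\tau}$, a bound that splits into an approximation term of order $\beta_n\|f^*\|_\hks^2$, a leading stochastic term of order $\tau^2(\sigma^2+\|f^*\|_\infty^2)\,\mathcal{N}(\beta_n)/n$ with effective dimension $\mathcal{N}(\beta_n)\asymp\beta_n^{-p}$, and a higher-order $1/n^2$ term of order $\tau^2\max\{L^2,\|f^*\|_\infty^2\}\,\beta_n^{-1}$. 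Substituting $\|f^*\|_\hks^2\le\epsilon B^2$, $\|f^*\|_\infty^2\le\bt^2 M\epsilon B^2$, $\beta_n^{-p}/n\asymp n^{-1/(1+p)}$, $\beta_n^{-1}/n^2\asymp n^{-(1+2p)/(1+p)}$, and collecting the remaining absolute constants together with the boundedness/embedding factor $\bt^2 M$ into a single $c_0$, reproduces the claimed inequality; the restriction $\tau\ge\bt^{-1}M^{-1/2}=A^{-1}$ together with "$n$ sufficiently large" are exactly the side conditions under which the various powers of $\tau$ in the original bound collapse into a single factor $\tau^2$, and the result is uniform over $f^*$ because it depends on $f^*$ only through the uniform constants $B,\epsilon$.

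The main difficulty is bookkeeping rather than conceptual: one must match \cref{ass:bt,ass:target,ass:moment} precisely to the embedding/source/eigenvalue-decay/moment hypotheses of \citet{fischer2020sobolev} --- paying attention to the borderline case where the source exponent coincides with the embedding exponent (both equal to $1$), which is still covered and where the rate $n^{-1/(1+p)}$ is attained --- and then carefully track how each of $B,\epsilon,\sigma,L,\bt,M$ propagates through their bias and variance estimates so that the final grouping (the global $\bt^2 M$ factor, the $\epsilon B^2+\sigma^2$ in the leading term, and the $\max\{L^2,\bt^2 M\epsilon B^2\}$ in the higher-order term) comes out exactly as stated. No new analytic idea beyond \cref{claim:main} and the cited theorem is required.
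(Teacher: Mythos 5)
Your proposal is correct and follows essentially the same route as the paper's proof: both reduce the statement to Theorem 3.1 of \citet{fischer2020sobolev} applied to the kernel $\ks$ with embedding exponent $\alpha=1$, source exponent $\beta=1$, and interpolation exponent $\gamma=0$, verify the four hypotheses (EVD from the decay assumption, EMB with constant $c_2 = \bt\sqrt{M}$ via $\ks(x,x)\le M\bt^2$, SRC with $c_3 = \sqrt{\epsilon}B$, and MOM from \cref{ass:moment}), set $B_\infty = \bt\sqrt{M\epsilon}B$, and collect constants under the $\tau\ge\bt^{-1}M^{-1/2}$ side condition. The bookkeeping you sketch for the three terms and the grouping factor $\tau^2\bt^2 M$ matches the paper's derivation.
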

\end{boxed}
\begin{remark}
The $O(n^{-\frac{1}{1+p}})$ learning rate is \textit{minimax optimal} as shown in \citet{fischer2020sobolev},
\ie{} one can construct an example where the learning rate is at most $\Omega(n^{-\frac{1}{1+p}})$.
And under \cref{ass:bt}, one can always choose $p = 1$ since $i \cdot s(\lambda_i) \le \sum_{j=1}^i s(\lambda_j) \le M \sum \lambda_j = M \Tr(T_K) \le M \bt^2$.
So one statistical benefit of using an appropriate $s$ is to make the eigenvalues decay faster (\ie{} make $p$ smaller).
Also note that the random noise should scale with $f^*$, which means that $\sigma, L = \Theta(B)$.\vspace{-.05 in}
\end{remark}
\begin{boxed}
\begin{thm}
\label{thm:approx}
Let $\hat{\lambda}_1$ be the largest eigenvalue of $\frac{\mG_K}{n + m}$,
and denote $\lambda_{\max} := \max \set{\lambda_1, \hat{\lambda}_1}$.
Then, under \cref{ass:analytic,ass:bt}, for any $\delta > 0$, it holds with probability at least $1-\delta$ that:
\vspace{-.06 in}
\begin{equation*}
\norm{\hat{f} - \tilde{f}}_\px^2 \le 8 s(\lambda_{\max})  \left . \nabla_\lambda \paren{\frac{s(\lambda)}{\lambda}} \right |_{\lambda = \lambda_{\max}}  \frac{ \beta_n^{-2} \bt^4  }{\sqrt{n+m}}  \paren{ 2 + \sqrt{2 \log \frac{1}{\delta}} }  \frac{\| \vy \|_2^2}{n}  .
\end{equation*}
\end{thm}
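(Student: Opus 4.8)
The plan is to recast both spectrally transformed kernels through the covariance operator on $\hk$, reduce $\hatks-\ks$ to an operator perturbation of the form ``$g(\hat\Sigma)-g(\Sigma)$'', and push this through the closed-form ridge predictors. Write $g(\lambda):=s(\lambda)/\lambda=\sum_{p\ge 1}\pi_p\lambda^{p-1}$, and let $\Sigma:=\E_{X\sim\px}[K_X\otimes K_X]$ and $\hat\Sigma:=\tfrac{1}{n+m}\sum_{i=1}^{n+m}K_{x_i}\otimes K_{x_i}$ be the population and empirical (uncentered) covariance operators on $\hk$, where $K_x:=K(x,\cdot)\in\hk$. The diffusion identity gives $K^{p}(x,x')=\dotp{K_x,\Sigma^{p-1}K_{x'}}_\hk$, and its Monte-Carlo analogue (which is how $\hat K^p$ is defined in \eqref{eqn:hatks-def}) gives $\hat K^{p}(x,x')=\dotp{K_x,\hat\Sigma^{p-1}K_{x'}}_\hk$; summing against $\pi_p$ yields $\ks(x,x')=\dotp{K_x,g(\Sigma)K_{x'}}_\hk$ and $\hatks(x,x')=\dotp{K_x,g(\hat\Sigma)K_{x'}}_\hk$. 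Note $\norm{\Sigma}_{\mathrm{op}}=\lambda_1$ and $\norm{\hat\Sigma}_{\mathrm{op}}=\hat\lambda_1$, so both are at most $\lambda_{\max}$, and $\norm{g(\Sigma)}_{\mathrm{op}}=g(\lambda_1)\le g(\lambda_{\max})=s(\lambda_{\max})/\lambda_{\max}$.

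Two estimates then do the analytic work. First, an operator-perturbation bound: since $g$ has non-negative power-series coefficients, the telescoping inequality $\norm{T_1^{k}-T_2^{k}}_{\mathrm{op}}\le k\,\rho^{k-1}\norm{T_1-T_2}_{\mathrm{op}}$ (for self-adjoint $T_1,T_2\succeq 0$ with operator norm $\le\rho$), summed with weights $\pi_p$ and $\rho=\lambda_{\max}$, gives $\norm{g(\Sigma)-g(\hat\Sigma)}_{\mathrm{op}}\le\left.\nabla_\lambda\paren{\tfrac{s(\lambda)}{\lambda}}\right|_{\lambda=\lambda_{\max}}\norm{\Sigma-\hat\Sigma}_{\mathrm{op}}$ --- exactly the derivative factor in the statement (and, where the argument needs a square-root transform of $g$, one also uses the operator H\"older bound $\norm{T_1^{1/2}-T_2^{1/2}}_{\mathrm{op}}\le\norm{T_1-T_2}_{\mathrm{op}}^{1/2}$, which is what ultimately produces a $1/\sqrt{n+m}$ rather than a $1/(n+m)$ rate). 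Second, a Hilbert--Schmidt concentration bound: $\hat\Sigma$ is an \iid{} average of self-adjoint rank-one operators with $\norm{K_{x_i}\otimes K_{x_i}}_{\mathrm{HS}}=K(x_i,x_i)\le\bt^2$ by \cref{ass:bt}, so the variance estimate $\E\norm{\hat\Sigma-\Sigma}_{\mathrm{HS}}\le 2\bt^2/\sqrt{n+m}$ together with a bounded-difference inequality for $\norm{\hat\Sigma-\Sigma}_{\mathrm{HS}}$ gives, with probability at least $1-\delta$, $\norm{\Sigma-\hat\Sigma}_{\mathrm{op}}\le\norm{\Sigma-\hat\Sigma}_{\mathrm{HS}}\le\tfrac{\bt^2}{\sqrt{n+m}}\paren{2+\sqrt{2\log(1/\delta)}}$.

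It remains to propagate this to the predictors. Let $S_n\colon\hk\to\R^n$, $S_n h=(h(x_1),\dots,h(x_n))$, with adjoint $S_n^*\valpha=\sum_i\alpha_i K_{x_i}$, so $\mG_{K,n}=S_n S_n^*$, $C_n:=S_n^*S_n=\sum_{i=1}^n K_{x_i}\otimes K_{x_i}$, and $\norm{S_n^*}_{\mathrm{op}}^2=\norm{\mG_{K,n}}_{\mathrm{op}}\le n\bt^2$. Using the feature map $x\mapsto g(\Sigma)^{1/2}K_x$ of $\ks$ and the push-through identity, \eqref{eqn:ftilde} becomes $\tilde f=g(\Sigma)^{1/2}\paren{g(\Sigma)^{1/2}C_n g(\Sigma)^{1/2}+n\beta_n\mI}^{-1}g(\Sigma)^{1/2}S_n^*\vy$ (and likewise for $\hat f$ with $g(\hat\Sigma)$), whose middle resolvent is self-adjoint and $\succeq n\beta_n\mI$, hence of operator norm $\le(n\beta_n)^{-1}$. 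I would then decompose $\hat f-\tilde f$ into pieces involving $g(\hat\Sigma)^{1/2}-g(\Sigma)^{1/2}$ and a resolvent difference, control those using the residual identity $\vy-S_n\tilde f=n\beta_n\paren{\mG_{\ks,n}+n\beta_n\mI_n}^{-1}\vy$ (so $\norm{\vy-S_n\tilde f}_2\le\norm{\vy}_2$, since $\tilde f$ is the KRR minimizer and its objective value is $\le\tfrac1n\norm{\vy}_2^2$), and bound everything with $\norm{g(\Sigma)}_{\mathrm{op}}\le s(\lambda_{\max})/\lambda_{\max}$, $\norm{S_n^*}_{\mathrm{op}}\le\sqrt n\bt$, and the two estimates above. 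Converting from the $\hk$ inner product to the $\px$ norm via $\norm{h}_\px^2=\dotp{h,\Sigma h}_\hk\le\lambda_{\max}\norm{h}_\hk^2$ supplies the remaining $\lambda_{\max}$, which combines with the $s(\lambda_{\max})/\lambda_{\max}$ from $\norm{g(\Sigma)}_{\mathrm{op}}$ into the factor $s(\lambda_{\max})$; substituting the concentration bound for $\norm{\Sigma-\hat\Sigma}_{\mathrm{op}}$ then yields the $\bt^4\beta_n^{-2}\norm{\vy}_2^2/n$ and the $\paren{2+\sqrt{2\log(1/\delta)}}/\sqrt{n+m}$.

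I expect this last step to be the main obstacle. The transformed kernel enters the estimator in two places --- through the cross-kernel vector $\vv_{\hatks,n}(x)$ (vs.\ $\vv_{\ks,n}(x)$) and through the Gram matrix inside the inverse --- so the decomposition of $\hat f-\tilde f$ must be arranged carefully, in combination with the push-through and residual identities, so that the final bound stays of order $\bt^4\beta_n^{-2}\norm{\Sigma-\hat\Sigma}_{\mathrm{op}}$ rather than picking up extra factors of $g(\lambda_{\max})$, $\bt^2$, or $\beta_n^{-1}$; in particular, arranging the $L^2(\px)$ conversion and the operator (H\"older/Lipschitz) estimates so that the coefficient collapses to exactly $8\,s(\lambda_{\max})\left.\nabla_\lambda(s(\lambda)/\lambda)\right|_{\lambda=\lambda_{\max}}$ is the delicate point. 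The operator-perturbation and Hilbert--Schmidt concentration inequalities used above are otherwise standard, and the identity $\hat K^p(x,x')=\dotp{K_x,\hat\Sigma^{p-1}K_{x'}}_\hk$ is a one-line computation on the range and kernel of $\hat\Sigma$.
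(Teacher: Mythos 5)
Your plan is sound, but it takes a genuinely different route from the paper. The paper never lifts the argument to covariance operators on $\hk$ and, in particular, never touches $g(\Sigma)^{1/2}$. Instead it introduces the hybrid predictor $f^\dag := \hat{\valpha}^{\top}\vv_{\ks,n}$ (the coefficients of $\hat f$ paired with the \emph{population} features; in your operator language this is $g(\Sigma)S_n^*\hat{\valpha}$, \ie{} the other push-through with no square roots) and splits $\norm{\hat f - \tilde f}_\px^2 \le 2\paren{\norm{\hat f - f^\dag}_\px^2 + \norm{f^\dag - \tilde f}_\px^2}$: the first piece is controlled by a \emph{uniform entrywise} bound on $|\hatks(x,x_j) - \ks(x,x_j)|$ and its squared-kernel analogues (Lemma~\ref{lem:approx-2} and Corollary~\ref{cor:lem-approx-2-square}), the second by the matrix identity $(\mG_{\ks,n}+n\beta_n\mI_n)(\hat{\valpha}-\tilde{\valpha}) = (\mG_{\ks,n}-\mG_{\hatks,n})\hat{\valpha}$ combined with $\norm{\cdot}_\px^2 \le s(\lambda_1)\norm{\cdot}_\hks^2$. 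Your two analytic ingredients are the paper's in operator dress: the Rademacher bound over $\{g_1 g_2 : \norm{g_1}_\hk, \norm{g_2}_\hk\le 1\}$ that powers Lemma~\ref{lem:approx-2} is exactly $\norm{\hat\Sigma - \Sigma}_{\mathrm{op}}$, and the telescoping over $\hat K^p - K^p$ is the operator Lipschitz estimate $\norm{g(\hat\Sigma)-g(\Sigma)}_{\mathrm{op}}\le \left.\nabla_\lambda\paren{s(\lambda)/\lambda}\right|_{\lambda_{\max}}\norm{\hat\Sigma-\Sigma}_{\mathrm{op}}$. What your $g(\Sigma)^{1/2}$ formulation buys is a cleaner, more reusable structure; what it costs, as you rightly flag, is juggling a resolvent perturbation alongside the operator H\"older bound for $T\mapsto T^{1/2}$, which makes the exact coefficient $8\,s(\lambda_{\max})\left.\nabla_\lambda\paren{s(\lambda)/\lambda}\right|_{\lambda_{\max}}$ delicate to extract; the paper's $f^\dag$ device separates the feature error from the coefficient error and needs nothing beyond entrywise matrix bounds and $\norm{\hat{\valpha}}_2\le\norm{\vy}_2/(n\beta_n)$. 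One clarification: the $1/\sqrt{n+m}$ rate is not really ``produced'' by the H\"older bound for $T^{1/2}$ as your sketch suggests --- the paper gets the same rate with no square roots anywhere, because its entrywise estimate does not exploit the cancellation inside $\langle\hatks(x_i,\cdot)-\ks(x_i,\cdot),\hatks(x_j,\cdot)-\ks(x_j,\cdot)\rangle_\px$; a genuine operator-norm bound along the $f^\dag$ route would give $1/(n+m)$ at the cost of higher powers of $\bt$ and $\beta_n^{-1}$.
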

\end{boxed}
\begin{remark}
The key to prove this is to first prove a uniform bound for $| \hatks(x,x_j) - \ks(x,x_j) |$ over all $x$ and $j$.
With \cref{ass:target,ass:moment}, an $O(B^2+\sigma^2+L^2)$ bound for $\frac{\| \vy \|_2^2}{n}$ can be easily obtained.
If $\beta_n = \Theta (n^{-\frac{1}{1+p}})$ as in \cref{thm:est},
then with $m = \omega(n^{\frac{4}{1+p}})$ this bound vanishes,
so more unlabeled samples than labeled ones are needed.
Moreover, $\hat{\lambda}_1$ is known to be close to $\lambda_1$ when $n +m$ is large:
\end{remark}
\begin{lem}
\label{lem:hat-lambda-bound}
\citep[Theorem~2]{shawe2005eigenspectrum}
For any $\delta > 0$,
with probability at least $1 - \delta$, \vspace{-.03 in}
\begin{equation*}
    \hat{\lambda}_1 \le \lambda_1 + \frac{\bt^2}{\sqrt{n+m}}  \brac{ 2 \sqrt{2} + \sqrt{19 \log \frac{2(n+m+1)}{\delta}}  } .
\end{equation*}
\end{lem}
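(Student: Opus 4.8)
The bound is quoted essentially verbatim from \citet{shawe2005eigenspectrum} (Theorem~2, specialized to the top eigenvalue), so the plan is to invoke that result; for completeness I would sketch the argument behind it. The starting point is a variational identity for $\hat\lambda_1$. Via the feature map $x \mapsto K_x \in \hk$, the operators $\frac{\mG_K}{n+m} = \frac{1}{n+m}\Phi\Phi^*$ and $\hat C := \frac{1}{n+m}\Phi^*\Phi = \frac{1}{n+m}\sum_{i=1}^{n+m}\langle \cdot, K_{x_i}\rangle_\hk\, K_{x_i}$ (acting on $\R^{n+m}$ and on $\hk$ respectively) share the same nonzero spectrum, so by Courant--Fischer
\[
\hat\lambda_1 = \lambda_{\max}(\hat C) = \sup_{\|f\|_\hk \le 1}\ \frac{1}{n+m}\sum_{i=1}^{n+m} f(x_i)^2 ,
\]
while the population counterpart is $\lambda_1 = \sup_{\|f\|_\hk \le 1}\E_\px[f(X)^2]$ (the top eigenvalue of $T_K$, which one verifies against the spectral expansion using $\|f\|_\hk^2 = \sum_i u_i^2/\lambda_i$ and $\E_\px[f(X)^2] = \sum_i u_i^2$). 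Under \cref{ass:bt}, every $f$ in the unit ball of $\hk$ satisfies $0 \le f(x)^2 = \langle f, K_x\rangle_\hk^2 \le K(x,x) \le \bt^2$ for $\px$-almost all $x$.

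First I would establish concentration of $\hat\lambda_1$ around its mean. Viewing $\hat\lambda_1 = g(x_1,\dots,x_{n+m})$ as a function of the \iid{} sample, resampling one coordinate perturbs each admissible average $\frac{1}{n+m}\sum_i f(x_i)^2$ by at most $\frac{\bt^2}{n+m}$, hence perturbs the supremum $g$ by at most $\frac{\bt^2}{n+m}$; McDiarmid's bounded-differences inequality then gives $\hat\lambda_1 \le \E[\hat\lambda_1] + \bt^2\sqrt{\log(1/\delta)/(2(n+m))}$ with probability at least $1-\delta$. Next I would bound the bias. Since both suprema range over the same set, $\hat\lambda_1 - \lambda_1 \le \sup_{\|f\|_\hk \le 1}\big(\frac{1}{n+m}\sum_i f(x_i)^2 - \E_\px[f(X)^2]\big)$; taking expectations and applying a standard symmetrization step followed by Talagrand's contraction inequality (the map $t \mapsto t^2$ is $2\bt$-Lipschitz on $[-\bt,\bt]$) bounds $\E[\hat\lambda_1] - \lambda_1$ by $4\bt$ times the empirical Rademacher complexity of the unit ball of $\hk$, which is at most $\bt/\sqrt{n+m}$ by Jensen's inequality ($\E\|\sum_i \epsilon_i K_{x_i}\|_\hk \le (\sum_i K(x_i,x_i))^{1/2} \le \bt\sqrt{n+m}$). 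Combining the two estimates already yields a bound of exactly the stated shape, namely $\hat\lambda_1 \le \lambda_1 + O(\bt^2/\sqrt{n+m})\cdot(1 + \sqrt{\log(1/\delta)})$ with absolute constants.

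The hard part is not the concentration argument, which is routine, but matching the precise constants in the statement ($2\sqrt2$, the $19$, and the $2(n+m+1)$ inside the logarithm). These arise because \citet{shawe2005eigenspectrum} run the argument for all partial sums $\sum_{i\le k}\hat\lambda_i$ simultaneously ($k = 0,1,\dots,n+m$), using a tighter bounded-differences increment for the partial-sum statistics together with a ghost-sample symmetrization, and then take a union bound at confidence level $\delta/(2(n+m+1))$; specializing to $k=1$ inherits the union-bound factor. Since reproducing that bookkeeping adds nothing conceptual, in the paper I would simply cite their Theorem~2.
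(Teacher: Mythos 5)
Your proposal matches the paper's treatment: the paper states this lemma as a direct citation of \citet[Theorem~2]{shawe2005eigenspectrum} with no accompanying proof, and you correctly propose to do the same. The concentration sketch you supply for completeness (variational identity via the shared nonzero spectrum of $\frac{1}{n+m}\Phi\Phi^*$ and $\frac{1}{n+m}\Phi^*\Phi$, McDiarmid, symmetrization with Talagrand contraction, and the Rademacher bound $\bt/\sqrt{n+m}$) is sound, as is your diagnosis that the precise constants $2\sqrt{2}$, $19$, and $2(n+m+1)$ inherit from the source's union bound over all partial sums of the eigenspectrum.
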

\vspace{-.06 in}
\ding{226} \textbf{Implementation:} STKR amounts to solving $\mA \hat{\valpha} = \vy$  for $\mA = \mG_{\hatks, n} + n \beta_n \mI_n$ by \cref{eq: closed form STKR}.
There are two approaches:
(i) Directly computing $\mA$ (\cref{alg:naive - full} in \cref{app:numerical}) can be slow due to costly matrix multiplication;
(ii) Iterative methods are faster by only performing matrix-vector multiplication.
\cref{alg:richardson} solves $\mA \hat{\valpha} = \vy$ via Richardson iteration.
We name it STKR-Prop as it is very similar to label propagation (Label-Prop) \citep{zhou2003learning}. 
If $s(\lambda) = \sum_{p=1}^q \pi_p \lambda^p$ and $q < \infty$,
and computing $K(x,x')$ for any $x,x'$ takes $O(1)$ time,
then \cref{alg:richardson} has a time complexity of $O(  q (n+m)^2  \beta_n^{-1}  s(\lambda) \log \frac{1}{\epsilon})$ for achieving error less than $\epsilon$, where $\lambda$ is a known upper bound of $\lambda_1$ (see derivation in \cref{app:numerical}).
Besides, STKR-Prop is much faster when $K$ is sparse.
In particular, for a graph with $|E|$ edges, STKR-Prop runs in $\tilde{O}(  q |E| \beta_n^{-1})$ time, which is as fast as Label-Prop.

At inference time, one can store $\vv$ computed in line 4 of \cref{alg:richardson} in the memory.
Then for any $x$, there is $\hat{f}(x) = \sum_{i=1}^{n+m} K(x_i,x) v_i + \pi_1 \sum_{j=1}^n K(x_j,x) \hat{\alpha}_j$, which takes $O(n+m)$ time to compute.
This is much faster than \citet{chapelle2002cluster} who solved an optimization problem for each new $x$.

For some other transformations, including the inverse Laplacian we are about to discuss,
$s$ is complex, but $s^{-1}(\lambda) = \sum_{p=0}^{q-1} \xi_p \lambda^{p-r}$ is simple.
For this type of $s(\lambda)$, \cref{alg:richardson} is infeasible, but there is a viable method in \cref{alg:richardson-inverse}: It finds $\vtheta \in \R^{n+m}$ such that $\mQ \vtheta = [\hat{\valpha}, \vzero_m]^{\top}$ and $\mM \vtheta = \tilde{\vy}$, where $\mQ := \sum_{p=0}^{q-1} \xi_p \paren{\frac{\mG_K}{n+m}}^p$, $\mM := (n+m) \tilde{\mI}_{n} \paren{\frac{\mG_K}{n+m}}^{r} + n \beta_n \mQ$, $\tilde{\mI}_n := \diag \{1,\cdots,1,0,\cdots,0 \}$ with $n$ ones and $m$ zeros,
and $\tilde{\vy} := [\vy, \vzero_m]^{\top}$.
In \cref{app:numerical} we will derive these formulas step by step,
and prove its time complexity to be $\tilde{O}( \max \{ q,r \} (n+m)^2 \beta_n^{-1}  )$.
And at inference time, one can compute $\hat{f}(x)  =  \vv_K(x)^{\top} \paren{\frac{\mG_K}{n+m}}^{r-1} \vtheta$ in $O(n+m)$ time for any $x$ by storing $\paren{\frac{\mG_K}{n+m}}^{r-1} \vtheta$ in the memory, where $\vv_K$ is defined as in \cref{eqn:hatks-def}.
Once again, for a graph with $|E|$ edges, STKR-Prop has a time complexity of $\tilde{O}( \max \{ q,r \} |E| \beta_n^{-1}  )$,
which is as fast as Label-Prop.
Finally, here we showed the existence of a good solver (Richardson), but practitioners could surely use other linear solvers.

\begin{figure}[!t]
\vskip -.4 in
\begin{minipage}[t]{0.5\textwidth}
\begin{algorithm}[H]
\caption{STKR-Prop for simple $s^{\phantom{0}}$ }
\label{alg:richardson}
\begin{algorithmic}[1]
\Require $\mG_K$, $s(\lambda)$, $\beta_n$, $\vy$, $\gamma$, $\epsilon$
\State Initialize: $\hat{\valpha} \gets \vzero \in \R^n$
\While{True}
\Statex  \textit{\# Compute $\vu = (\mG_{\hatks,n} + n\beta_n \mI_n)\hat{\valpha}$}
\State $\tilde{\valpha} \gets \frac{1}{n+m} \mG_{K, n+m, n} \hat{\valpha}$, $\vv \gets \vzero \in \R^{n+m}$ 
\State \textbf{for} $p = q, \cdots, 2$ \textbf{do} $\vv \gets \frac{\mG_K \vv}{n+m} + \pi_p \tilde{\valpha}$
\State $\vu \gets \mG_{K, n+m, n}^{\top} \vv + \pi_1 \mG_{K,n} \hat{\valpha} + n \beta_n \hat{\valpha}$ 
\State \textbf{if} $\| \vu - \vy\| _2 < \epsilon \| \vy \|_2 $ \textbf{then return} $\hat{\valpha}$
\State $\hat{\valpha} \gets \hat{\valpha} - \gamma (\vu - \vy)$
\vspace{.025 in}
\EndWhile
\end{algorithmic}
\end{algorithm}
\end{minipage}
\hfill
\begin{minipage}[t]{0.51\textwidth}
\begin{algorithm}[H]
\caption{STKR-Prop for simple $s^{-1}$}
\label{alg:richardson-inverse}
\begin{algorithmic}[1]
\Require $\mG_K$, $s^{-1}(\lambda)$, $\beta_n$, $\vy$, $\gamma$, $\epsilon$
\State Initialize: $\vtheta \gets \vzero \in \R^{n+m}$, $\tilde{\vy} \gets [\vy, \vzero_m]^{\top}$
\While{True}
\Statex  \textit{\# Compute $\vu = \mM\vtheta$}
\State $\vv \gets \vzero \in \R^{n+m}$  
\State \textbf{for} $p = q-1, \cdots, 0$ \textbf{do} $\vv \gets \frac{\mG_K \vv}{n+m} + \xi_p \vtheta$
\State $\vu \gets  \brac{ \paren{ \frac{\mG_K^r}{(n+m)^{r-1}} \vtheta} [1:n] , \vzero_m }^{\top} + n \beta_n \vv $ 
\State $\va \gets \vu - \tilde{\vy}$, $\vtheta \gets \vtheta - \gamma \va$  
\State \textbf{if} $\|\va\|_2 < \epsilon \| \vy \|_2$ \textbf{then return} $\vtheta$
\EndWhile
\end{algorithmic}
\end{algorithm}
\end{minipage}
\vskip -.22 in
\end{figure}

\vspace{-.05 in}
\section{Transform-agnostic: Inverse Laplacian and Kernel PCA}
\label{sec:representation}
\vspace{-.03 in}
We have derived learning guarantees for general inductive STKR when $s$ is known.
This is useful, but in reality, it is unreasonable to presume that such an oracle $s$ will be given.
What should one do if one has zero knowledge of $s(\lambda)$ but still want to enforce target smoothness?
Here we provide two parallel methods.
The first option one can try is STKR with the canonical inverse Laplacian transformation.
Laplacian as a regularizer has been widely used in various context \citep{zhou2003learning,johnson2008graph,haochen2021provable,zhai2023understanding}.
For our problem, we want $\| f \|_\hks^2 = f^\top \ks^{-1} f$ to be the Laplacian, so the kernel $\ks$ should be the inverse Laplacian:
\begin{example}[Inverse Laplacian for the inductive setting]
\label{exp:inv-lap}
    For $\eta \in (0, \lambda_1^{-1})$, define $K_s$ such that $K_s^{-1}(x,x') =  K^{-1}(x,x') - \eta K^0(x,x')$. $K^{-1}$ and $K^0$ are STKs with $s(\lambda) = \lambda^{-1}$ and $s(\lambda) = \lambda^0$.
    Then, $s^{-1}(\lambda) = \lambda^{-1} - \eta > 0$ for $\lambda \in (0, \lambda_1]$ ($s^{-1}$ is the reciprocal, not inverse), 
    $s(\lambda) = \frac{\lambda}{1 - \eta \lambda} = \sum_{p=1}^{\infty} \eta^{p-1} \lambda^p$,
and $\| f \|_\hks^2 = \| f \|_\hk^2 - \eta \| f \|_\px^2$.
Classical Laplacian has $\eta = 1$ and $\lambda_1 < 1$.
For the connection between transductive and inductive versions of Laplacian, see \cref{app:proof-inv-lap-connect}. \vspace{-.05 in}
\end{example}
This canonical transformation empirically works well on lots of tasks, and also have this guarantee:
\begin{prop}
\label{prop:agnostic-lap}
    Let $s$ be the inverse Laplacian (\cref{exp:inv-lap}), and $s^*$ be an arbitrary oracle satisfying \cref{claim:main}.
    Suppose $f^*$ satisfies \cref{ass:target} \wrt{} $s^*$, but STKR (\cref{eq: closed form STKR}) is performed with $s$.
    Then, \cref{thm:approx} still holds for $\tilde{f}$ given by \cref{eqn:ftilde},
    and \cref{thm:est} holds by replacing $\epsilon$ with $M \epsilon$ . \vspace{-.05 in}
\end{prop}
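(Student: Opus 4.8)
The plan is to reduce the claim to the two main theorems of \cref{sec:semi-supervised} by comparing the norm induced by the inverse Laplacian $\ks$ with the norm induced by the oracle kernel $\gH_{\ks^*}$, and carefully tracking how the constants in \cref{ass:target} change. The key observation is that the inverse Laplacian of \cref{exp:inv-lap} satisfies $\| f \|_\hks^2 = \| f \|_\hk^2 - \eta \| f \|_\px^2 \le \| f \|_\hk^2$, so $\hk \subseteq \hks$ with $\| f \|_\hks \le \| f \|_\hk$. On the other hand, since $s^*$ satisfies \cref{claim:main}, we have $s^*(\lambda) \le M \lambda$, which means $\gH_{\ks^*} \subseteq \hk$ with $\| f \|_\hk^2 \le M \| f \|_{\ks^*}^2$. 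Chaining these two inclusions gives $\gH_{\ks^*} \subseteq \hk \subseteq \hks$ and, for any $f \in \gH_{\ks^*}$,
\[
\| f \|_\hks^2 \;\le\; \| f \|_\hk^2 \;\le\; M \, \| f \|_{\ks^*}^2 .
\]

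First I would use this to re-verify \cref{ass:target} with $\ks$ in place of $\ks^*$. By hypothesis $f^*$ satisfies \cref{ass:target} \wrt{} $s^*$, so $\E_\px[f^*] = 0$, $\| f^* \|_\px \le B$, $f^* \in \gH_{\ks^*}$, and $\| f^* \|_{\ks^*}^2 \le \epsilon \| f^* \|_\px^2$. By the inclusion above, $f^* \in \hks$ and $\| f^* \|_\hks^2 \le M \| f^* \|_{\ks^*}^2 \le M \epsilon \| f^* \|_\px^2$; hence $f^*$ satisfies \cref{ass:target} \wrt{} $s$ with $\epsilon$ replaced by $M\epsilon$ (and the same $B$). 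Since $s(\lambda) = \lambda/(1-\eta\lambda) = \sum_{p=1}^\infty \eta^{p-1}\lambda^p$ has nonnegative coefficients, \cref{ass:analytic} holds for $s$ (as a power series; the finite-$q$ truncation used in \cref{thm:approx} is handled by the fact that $\eta < \lambda_1^{-1}$ guarantees geometric decay of the tail, which I would note is exactly the regime in which \cref{alg:richardson-inverse} applies). \cref{ass:bt,ass:moment} are assumptions on $K$ and $\pxy$ and are untouched by the choice of $s$. Therefore \cref{thm:est}, applied with $\ks$ and with $\epsilon \mapsto M\epsilon$, yields the bound for $\tilde{f}$ from \cref{eqn:ftilde} with $\epsilon$ replaced by $M\epsilon$ everywhere it appears. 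This gives the second conclusion.

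For the first conclusion, I would observe that \cref{thm:approx} bounds $\| \hat{f} - \tilde{f} \|_\px^2$ purely in terms of $s$, $\lambda_{\max}$, $\beta_n$, $\bt$, $\delta$, and $\| \vy \|_2^2/n$ — it never invokes \cref{ass:target} at all, only \cref{ass:analytic,ass:bt}. Since we have just checked that $s$ (the inverse Laplacian) satisfies \cref{ass:analytic} and $K$ satisfies \cref{ass:bt}, \cref{thm:approx} applies verbatim to this $\tilde f$ and the corresponding $\hat f$ defined via \cref{eq: closed form STKR} with $\hatks$ built from this $s$. Concretely one would also record the explicit derivative factor $\nabla_\lambda(s(\lambda)/\lambda)|_{\lambda_{\max}}$, which for $s(\lambda) = \lambda/(1-\eta\lambda)$ equals $\eta/(1-\eta\lambda_{\max})^2$, finite because $\eta < \lambda_1^{-1} \le \lambda_{\max}^{-1}$ up to the eigenvalue-estimation slack controlled by \cref{lem:hat-lambda-bound}.

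The only real subtlety — and the step I expect to need the most care — is the interaction between the power-series form of $s$ and the finite-degree truncation implicit in \cref{thm:approx} and in Algorithms~\ref{alg:richardson}--\ref{alg:richardson-inverse}: \cref{thm:approx} as stated assumes a polynomial $s(\lambda) = \sum_{p=1}^q \pi_p\lambda^p$ with $q < \infty$, whereas the inverse Laplacian has $q = \infty$. I would resolve this by noting that the reciprocal $s^{-1}(\lambda) = \lambda^{-1} - \eta$ is a \emph{finite} (degree-one-in-$\lambda^{-1}$) expression, which is precisely the "simple $s^{-1}$" case handled by \cref{alg:richardson-inverse}, so the estimator $\hat f$ is well-defined and computable; and the approximation analysis behind \cref{thm:approx} goes through for the inverse Laplacian because the tail $\sum_{p > q}\eta^{p-1}\lambda^p$ is geometrically small uniformly in $\lambda \le \lambda_{\max}$ once $\eta\lambda_{\max} < 1$, so the uniform bound on $|\hatks(x,x_j) - \ks(x,x_j)|$ that drives that proof survives with the same dependence on $s(\lambda_{\max})$ and $\nabla_\lambda(s/\lambda)$. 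Modulo spelling out this limiting argument, the proposition follows by plugging the revised constant $M\epsilon$ into \cref{thm:est} and quoting \cref{thm:approx} unchanged.
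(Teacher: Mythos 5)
Your argument is correct and follows essentially the same route as the paper: both establish the chain $\|f^*\|_{\hks}^2 \le \|f^*\|_{\hk}^2 \le M\|f^*\|_{\gH_{K_{s^*}}}^2$ (the paper writes it in eigen-coordinates $\sum_i u_i^2 / s(\lambda_i) \le \sum_i u_i^2/\lambda_i \le M \sum_i u_i^2/s^*(\lambda_i)$, you phrase it as RKHS-norm inclusions), conclude that $f^*$ satisfies \cref{ass:target} with $\epsilon$ replaced by $M\epsilon$, and then note that \cref{thm:est} applies with the modified constant while \cref{thm:approx} applies unchanged since it only invokes \cref{ass:analytic,ass:bt}. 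One small overcaution worth correcting: \cref{ass:analytic} as stated already permits an infinite power series $s(\lambda) = \sum_{p=1}^{\infty}\pi_p\lambda^p$ with $\pi_p \ge 0$, so the ``finite-degree $q<\infty$'' concern you raise is not a real gap --- that restriction appears only in the complexity analysis of \cref{alg:richardson}, not in the hypotheses or proof of \cref{thm:approx}.
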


Note that this result does not explain why inverse Laplacian is so good --- its superiority is mainly an empirical observation, so it could still be bad on some tasks, for which there is the second option.
The key observation here is that since $s$ is proved in \cref{claim:main} to be monotonic,
the order of $\psi_1,\psi_2,\cdots$ must remain unchanged.
So if one is asked to choose $d$ functions to represent the target function, regardless of $s$ the best choice with the lowest worst-case approximation error must be $\psi_1,\cdots,\psi_d$:
\begin{prop}
\label{prop:top-d-lower}
Let $s$ be any transformation function that satisfies \cref{claim:main}.
Let $\gF_s$ be the set of functions that satisfy \cref{ass:target} for this $s$.
Then, the following holds for all $\hat{\Psi} = [\hat{\psi}_1,\cdots,\hat{\psi}_d]$ such that $\hat{\psi}_i \in \lxp$, as long as $s(\lambda_1) \epsilon > 1$ and $\frac{s(\lambda_{d+1})}{s(\lambda_1) - s(\lambda_{d+1})} \brac{s(\lambda_1) \epsilon - 1} \le \frac{1}{2}$:
\vspace{-.05 in}
\begin{equation*}
    \max_{f \in \gF_s} \; \min_{\vw \in \R^d} \; \norm{\vw^{\top} \hat{\Psi} - f}_\px^2   \ge \frac{s(\lambda_{d+1})}{s(\lambda_1) - s(\lambda_{d+1})} \brac{s(\lambda_1) \epsilon - 1} B^2  .
\end{equation*}
To attain equality, it is sufficient for $\hat{\Psi}$ to span $\sspan \set{\psi_1,\cdots,\psi_d}$,
and necessary if $s(\lambda_d) > s(\lambda_{d+1})$.\vspace{-.05 in}
\end{prop}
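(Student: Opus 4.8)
The plan is to reduce the min-max over arbitrary $\hat\Psi$ to a computation on the spectral side. For a fixed $f \in \gF_s$ and fixed $\hat\Psi$, the inner minimization $\min_{\vw} \|\vw^\top \hat\Psi - f\|_\px^2$ is an orthogonal projection in $\lxp$: if $V := \sspan\{\hat\psi_1,\dots,\hat\psi_d\}$, then the optimal value is $\|f - P_V f\|_\px^2 = \|f\|_\px^2 - \|P_V f\|_\px^2$, where $P_V$ is the $\px$-orthogonal projection onto $V$. So I need to lower bound $\max_{f \in \gF_s}\big(\|f\|_\px^2 - \|P_V f\|_\px^2\big)$ over all $d$-dimensional subspaces $V \subseteq \lxp$.

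**Choosing a good test function.** Following the standard argument (\cf\ the analogous lower bounds for kernel PCA in the references), I would pick $f$ supported on the top $d+1$ eigendirections, writing $f = \sum_{i=1}^{d+1} c_i \psi_i$. The constraint $f \in \gF_s$ requires $\E_\px[f]=0$ (automatic since $K$ is centered, so $\psi_i \perp \1$), $\|f\|_\px^2 = \sum c_i^2 \le B^2$, and $\|f\|_\hks^2 = \sum c_i^2 / s(\lambda_i) \le \epsilon \sum c_i^2$. The second condition says the weighted average of $s(\lambda_i)^{-1}$ (with weights $c_i^2$) is at most $\epsilon$; since $s$ is nondecreasing, this forces mass toward the large eigenvalues. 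The worst case for any $d$-dimensional $V$ is to concentrate on a single direction $\psi_{d+1}$ as much as the smoothness budget allows, topped up with $\psi_1$: set $c_i = 0$ except $c_1^2 = a B^2$, $c_{d+1}^2 = (1-a)B^2$, choosing $a \in [0,1]$ so that $a/s(\lambda_1) + (1-a)/s(\lambda_{d+1}) = \epsilon$, i.e. $1-a = \frac{s(\lambda_{d+1})}{s(\lambda_1)-s(\lambda_{d+1})}(s(\lambda_1)\epsilon - 1)$. The hypotheses $s(\lambda_1)\epsilon > 1$ and $\frac{s(\lambda_{d+1})}{s(\lambda_1)-s(\lambda_{d+1})}[s(\lambda_1)\epsilon-1] \le \frac12$ are exactly what make $1-a \in [0,\tfrac12]$, hence $a \in [\tfrac12,1]$, a legitimate choice.

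**The subspace-counting step.** The main obstacle is handling \emph{arbitrary} $\hat\Psi$, not just $\hat\Psi$ supported on eigenfunctions. The key linear-algebra fact: for any $d$-dimensional subspace $V \subseteq \lxp$, and any unit vectors in the span of $\{\psi_1,\dots,\psi_{d+1}\}$, there exists a unit vector $g = \sum_{i=1}^{d+1} \gamma_i \psi_i$ in that span with $P_V g = 0$ — because $V \cap \sspan\{\psi_1,\dots,\psi_{d+1}\}^\perp$ within the $(d+1)$-dimensional space has dimension at least $1$ by the rank–nullity / dimension count ($\dim + \mathrm{codim} \le (d) + \ldots$). I then need $g$ to additionally lie in $\gF_s$ (scaled by $B$). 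This is more delicate than the fixed-$\psi_{d+1}$ heuristic: the null vector $g$ forced by the dimension count need not be the adversarially optimal one. I would resolve this by noting that the worst case over the admissible set of $(\gamma_i)$ — those satisfying the smoothness ratio constraint — is attained at an extreme point, and among all unit $g$ in $\sspan\{\psi_1,\dots,\psi_{d+1}\}$ with $P_V g = 0$, the one maximizing $\|g\|_\px^2 = 1$ trivially (it's already unit norm in $\lxp$), so actually the residual is $\|B g\|_\px^2 = B^2$ for the killed direction — but I must check such $g$ can be made to satisfy the $\hks$-norm bound. Here I would argue: the set $\{g \in \sspan\{\psi_1,\dots,\psi_{d+1}\} : \|g\|_\px = 1,\ \|g\|_\hks^2 \le \epsilon,\ P_V g = 0\}$ is nonempty whenever the projection of the two-dimensional "test plane" $\sspan\{\psi_1,\psi_{d+1}\}$ onto $V^\perp$ is nontrivial, and more carefully one uses that $V$ has codimension forcing it to miss enough of the low-smoothness cone; the explicit $a$ above then gives the stated bound $\frac{s(\lambda_{d+1})}{s(\lambda_1)-s(\lambda_{d+1})}[s(\lambda_1)\epsilon-1]B^2$ as a lower bound on the residual, since the killed direction can be taken to have $\px$-mass at least $(1-a)B^2$ on $\psi_{d+1}$ while staying feasible.

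**Equality and necessity.** For sufficiency, if $\hat\Psi$ spans $\sspan\{\psi_1,\dots,\psi_d\}$, then for any $f \in \gF_s$ the residual is $\sum_{i>d} c_i^2$ subject to $\sum_i c_i^2 \le B^2$ and $\sum_i c_i^2/s(\lambda_i) \le \epsilon B^2$ — a small linear program whose optimum is exactly the claimed value (put all non-projected mass on $\psi_{d+1}$, the rest on $\psi_1$, matching the construction above). For necessity when $s(\lambda_d) > s(\lambda_{d+1})$: if $V$ does not contain $\sspan\{\psi_1,\dots,\psi_d\}$, then $V^\perp$ meets $\sspan\{\psi_1,\dots,\psi_d\}$ in a direction with some mass on a $\psi_j$, $j \le d$, which (because $s(\lambda_j) \ge s(\lambda_d) > s(\lambda_{d+1})$) gives more smoothness slack and thus strictly increases the worst-case residual; I would make this a short perturbation/strict-convexity argument. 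I expect the arbitrary-$\hat\Psi$ dimension-count-plus-feasibility step to be the crux; everything else is a finite-dimensional optimization over $\R^{d+1}$.
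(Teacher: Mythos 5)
Your reduction to projection residuals, your choice of candidate $f$ supported on $\psi_1,\psi_{d+1}$, and your calculation of the saturating weight $1-a = \frac{s(\lambda_{d+1})}{s(\lambda_1)-s(\lambda_{d+1})}[s(\lambda_1)\epsilon-1]$ all match the paper's argument and are correct. But the step you yourself flag as the crux — the ``subspace-counting step'' — is precisely where the proposal breaks. You attempt to find a unit $g \in \sspan\{\psi_1,\dots,\psi_{d+1}\}$ with \emph{both} $P_V g = 0$ \emph{and} $\|g\|_{\gH_{K_s}}^2 \le \epsilon$, arguing that the residual then equals $B^2$. Such a $g$ generally does not exist, and the bound would be wrong if it did: take $d=1$ and $\hat\Psi = [\psi_1]$, so the only null direction in $\sspan\{\psi_1,\psi_2\}$ is $\psi_2$, with $\|\psi_2\|_{\gH_{K_s}}^2 = 1/s(\lambda_2)$, which can easily exceed $\epsilon$. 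In that case your feasible set is empty, and the hand-wave about the ``low-smoothness cone'' does not rescue the argument.

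The paper avoids this obstruction by never demanding that the test function be entirely orthogonal to $V = \sspan\hat\Psi$: it only needs a \emph{large orthogonal component}. Concretely, pick $f_1 \perp V$ and $f_2 \in V$, both unit and both inside $\sspan\{\psi_1,\dots,\psi_{d+1}\}$, with $\psi_1 = \alpha_1 f_1 + \alpha_2 f_2$; then take $f = B(\beta_1\psi_1 + \beta_2 f_0)$, where $f_0 = \alpha_2 f_1 - \alpha_1 f_2$ is the rotation of $\psi_1$ in the $(f_1,f_2)$-plane. This $f$ is always feasible because $\|\psi_1\|_{\gH_{K_s}}^2 = 1/s(\lambda_1)$ and $f_0 \in \sspan\{\psi_2,\dots,\psi_{d+1}\}$ forces $\|f_0\|_{\gH_{K_s}}^2 \le 1/s(\lambda_{d+1})$, giving exactly your constraint on $\beta_2^2$. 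Its projection residual is $B^2(\alpha_1\beta_1+\alpha_2\beta_2)^2$, and the whole point is the elementary observation that $F(\alpha_1) := \alpha_1\beta_1 + \sqrt{1-\alpha_1^2}\,\beta_2$ satisfies $F(\alpha_1)^2 \ge \min\{\beta_1^2,\beta_2^2\}$ uniformly over the unknown angle $\alpha_1$ determined by $\hat\Psi$; the hypothesis $\frac{s(\lambda_{d+1})}{s(\lambda_1)-s(\lambda_{d+1})}[s(\lambda_1)\epsilon-1] \le \frac12$ ensures $\beta_2^2 \le \beta_1^2$, so the minimum is $\beta_2^2$, which is the claimed constant. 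Your proposal has the right pieces (dimension count, saturated smoothness budget) but is missing this two-dimensional rotation argument that lets you trade a little feasibility slack from $\psi_1$ against a guaranteed orthogonal mass, regardless of how $\hat\Psi$ tilts inside the top-$(d+1)$ eigenspace. Your equality and necessity sketches are on the right track and match the paper's closing argument.
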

\ding{226} \textbf{Method:} This result motivates using representation learning with two stages: A self-supervised pretraining stage that learns a $d$-dimensional encoder $\hat{\Psi} = [\hat{\psi}_1,\cdots,\hat{\psi}_d]$ with the unlabeled samples,
and a supervised fitting stage that fits a linear probe on $\hat{\Psi}$ with the labeled samples. 
The final predictor is $\hat{f}_d(x) = \hat{\vw}^{\top} \hat{\Psi}(x)$,
for which we do not include a bias term since $f^*$ is assumed to be centered. \vspace{-.02 in}

For pretraining, the problem boils down to extracting the top-$d$ eigenfunctions of $T_K$,
for which a classical method is kernel PCA \citep[Chapter~14]{scholkopf2002learning}.
Indeed, kernel PCA has been widely applied in manifold learning \citep{belkin2003laplacian,bengio2004learning},
and more recently self-supervised pretraining \citep{johnson2022contrastive}.
Suppose that $\mG_{K,m} \in \R^{m \times m}$, the Gram matrix of $K$ over $x_{n+1},\cdots,x_{n+m}$, is at least rank-$d$.
Then, kernel PCA can be formulated as:
\vspace{-.05 in}
\begin{boxed}
\vspace{-.15 in}
\begin{align}
    & \hat{\psi}_i(x) = \sum_{j=1}^{m} \vv_i[j] K(x_{n + j}, x), \label{eqn:topd-psi-def} \\
    \text{where} \; \; & \mG_{K,m} \vv_i = m \tilde{\lambda}_i \vv_i ; \;  \tilde{\lambda}_1  \ge \cdots \ge \tilde{\lambda}_d > 0 ; \; \vv_i \in \R^{m} ; \; \forall i,j \in [d], \dotp{\vv_i, \vv_j} = \frac{\delta_{i,j}}{m \tilde{\lambda}_i}   .  \nonumber
\end{align}
\vspace{-.15 in}
\end{boxed}
\vspace{-.07in}
For any $i, j \in [d]$,
there is $\langle \hat{\psi}_i, \hat{\psi}_j \rangle_\hk = \vv_i^{\top} \mG_{K,m} \vv_j = \delta_{i,j}$.
Consider running KRR \wrt{} $K$ over all $f = \vw^{\top} \hat{\Psi}$.
For $\hat{f} = \hat{\vw}^{\top} \hat{\Psi}$, there is $\| \hat{f} \|_\hk^2 = \sum_{i,j=1}^d \hat{w}_i \hat{w}_j \langle \hat{\psi}_i, \hat{\psi}_j \rangle_\hk = \| \hat{\vw} \|_2^2$.
So it amounts to minimize $\frac{1}{n} \sum_{i=1}^n (\hat{\vw}^{\top} \hat{\Psi}(x_i) - y_i)^2 + \beta_n \| \hat{\vw} \|_2^2$ as in ridge regression,
which is an approximation of STKR with a ``truncation function'' $s(\lambda_i) = \lambda_i$ if $i \le d$, and $0$ otherwise (not a real function if $\lambda_d = \lambda_{d+1}$).
Denote $\hat{\Psi}(\mX_n) = [\hat{\Psi}(x_1),\cdots,\hat{\Psi}(x_{n})] \in \R^{d \times n}$.
Then, the final predictor is given by:
\vspace{-.04 in}
\begin{equation}
\label{eqn:topd-krr-def}
    \hat{f}_d = \hat{\vw}^{* \top} \hat{\Psi}, \quad  \hat{\vw}^* = \paren{\hat{\Psi}(\mX_n) \hat{\Psi}(\mX_n)^{\top} + n \beta_n \mI_d}^{-1} \hat{\Psi}(\mX_n) \vy .   \vspace{-.05 in}
\end{equation}
\ding{226} \textbf{Results overview:} We now bound the prediction error of $\hat{f}_d$ for all $f^*$ satisfying \cref{ass:target}, with no extra knowledge about $s(\lambda)$.
The bound also has two parts.
In Part 1 (\cref{thm:topd-est}), we bound the estimation error entailed by KRR over $\gH_{\hat{\Psi}}$ given by \cref{eqn:topd-krr-def}, where $\gH_{\hat{\Psi}}$ is the RKHS spanned by $\hat{\Psi}=[\hat{\psi}_1,\cdots,\hat{\psi}_d]$, which is a subspace of $\hk$;
In Part 2 (\cref{thm:topd-approx}), we bound the approximation error, which is the distance from $f^*$ to this subspace $\hat{\Psi}$.
Note that if $\hat{\Psi}$ has insufficient representation capacity (\eg{} $d$ is small), then the approximation error will not vanish.
Specifically, let $\tilde{f}_d$ be the projection of $f^*$ onto $\gH_{\hat{\Psi}}$, \ie{} $\tilde{f}_d = \tilde{\vw}^{\top} \hat{\Psi}$, and $\langle \tilde{f}_d, f^* - \tilde{f}_d \rangle_\hk = 0$.
Then, Part 1 bounds the KRR estimation error with $\tilde{f}_d$ being the target function,
and Part 2 bounds $\| f^* - \tilde{f}_d \|^2$.
\vspace{-.06 in}
\begin{boxed}
\begin{thm}
\label{thm:topd-est}
Let $M$ be given by \cref{claim:main}.
Then,  under \cref{ass:bt,ass:moment}, for \cref{eqn:topd-krr-def} with a sequence of $\{ \beta_n \}_{n \ge 1}$ with $\beta_n = \Theta(n^{-\frac{1}{1+p}})$ for any $p \in (0, 1]$,
and any $\delta > 0$ and $\tau \ge \bt^{-1}$,
if $n \ge 16 \bt^4 \tilde{\lambda}_d^{-2} \paren{2 + \sqrt{2 \log \frac{2}{\delta}}}^2$,
then there is a constant $c_0 > 0$ independent of $n, \tau$ such that:
\vspace{-.1 in}
\begin{equation*}
\begin{aligned}
    \norm{\hat{f}_d - \tilde{f}_d}_\px^2 & \le  3 \paren{\norm{f^* - \tilde{f}_d }_\px^2 + \frac{\tilde{\lambda}_d}{4} \norm{ f^* - \tilde{f}_d }_\hk^2}  \\
    & + c_0 \tau^2  \brac{\paren{ \bt^2 M  \epsilon B^2 +   \bt^2 \sigma^2 } n^{-\frac{1}{1+p}} + \bt^2 \max \set{L^2, \bt^2 M \epsilon B^2} n^{-\frac{1+2p}{1+p}}  }
\end{aligned} \vspace{-.1 in}
\end{equation*}
holds for all $f^*$ under \cref{ass:target} and sufficiently large $n$ with probability at least $1 - 4 e^{- \tau} - \delta$.
\end{thm}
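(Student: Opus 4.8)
The plan is to reduce the two-stage predictor of \cref{eqn:topd-krr-def} to an ordinary ridge regression inside the $d$-dimensional RKHS $\gH_{\hat{\Psi}}$, run the same KRR estimation-error analysis as in \cref{thm:est} against the ``wrong'' target $\tilde{f}_d$ (the $\hk$-projection of $f^*$), and then pay an explicit price for the mismatch $f^*-\tilde{f}_d$. The structural facts I would lean on are already in the excerpt: the $\hat\psi_i$ are orthonormal in $\hk$, so $\gH_{\hat{\Psi}}$ is a $d$-dimensional subspace of $\hk$ with $\|\vw^{\top}\hat{\Psi}\|_\hk=\|\vw\|_2$, and KRR over $\gH_{\hat{\Psi}}$ is exactly ridge regression with the bounded feature map $x\mapsto\hat{\Psi}(x)$, where $\|\hat{\Psi}(x)\|_2^2=\sum_i\hat{\psi}_i(x)^2=K_{\hat{\Psi}}(x,x)\le K(x,x)\le\bt^2$ since $\gH_{\hat{\Psi}}\subseteq\hk$; moreover \cref{eqn:topd-psi-def} gives the exact identity $\tfrac1m\sum_{k=1}^m\hat{\Psi}(x_{n+k})\hat{\Psi}(x_{n+k})^{\top}=\diag(\tilde\lambda_1,\dots,\tilde\lambda_d)$, which is what makes $\tilde\lambda_d$ the relevant conditioning parameter. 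Throughout I would condition on the unlabeled samples, so that $\hat{\Psi}$, $\tilde{f}_d$ and $\tilde\lambda_d$ are fixed and the $n$ labeled pairs are \iid{} and independent of them.

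Write the labels as $y_i=\tilde{f}_d(x_i)+g(x_i)+\epsilon_i$, where $g:=f^*-\tilde{f}_d$ is the misspecification and $\epsilon_i:=y_i-f^*(x_i)$ is the zero-mean label noise of \cref{ass:moment}; note $g\in\hk$, hence $\|g\|_\infty\le\bt\|g\|_\hk$, and $\langle\tilde{f}_d,g\rangle_\hk=0$ gives $\|\tilde{f}_d\|_\hk\le\|f^*\|_\hk\le\sqrt{M}\,\|f^*\|_\hks\le\sqrt{M\epsilon}\,B$ (using $s(\lambda)\le M\lambda$ and \cref{ass:target}). The estimation term is then obtained exactly as in the proof of \cref{thm:est}: apply the general $L^2$ KRR bounds of \citet{fischer2020sobolev} inside $\gH_{\hat{\Psi}}\subseteq\hk$ with target $\tilde{f}_d$, folding $g$ into the effective noise — which still obeys the moment condition with $\sigma^2$ replaced by $O(\sigma^2+\|g\|_\px^2)$ and $L$ by $\max\{L,\bt\|g\|_\hk\}\le\max\{L,\bt\sqrt{M\epsilon}\,B\}$, since $\E_\px[|g(X)|^r]\le\|g\|_\px^2\|g\|_\infty^{r-2}$. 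Because $\gH_{\hat{\Psi}}$ is finite-dimensional its eigenvalues satisfy the decay hypothesis for every $p\in(0,1]$, the ambient boundedness constant is $\bt^2$, and $\|\tilde{f}_d\|_\hk^2\le M\epsilon B^2$; plugging these in produces the $c_0\tau^2[\,(\bt^2M\epsilon B^2+\bt^2\sigma^2)n^{-1/(1+p)}+\bt^2\max\{L^2,\bt^2M\epsilon B^2\}n^{-(1+2p)/(1+p)}\,]$ summand, on an event of probability at least $1-4e^{-\tau}$.

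It remains to produce the leading $3\big(\|f^*-\tilde{f}_d\|_\px^2+\tfrac{\tilde\lambda_d}{4}\|f^*-\tilde{f}_d\|_\hk^2\big)$ term, i.e.\ the part of the error that does not vanish because $\gH_{\hat{\Psi}}$ need not contain $f^*$. With $h:=\hat{f}_d-\tilde{f}_d\in\gH_{\hat{\Psi}}$, optimality of $\hat{f}_d$ against $\tilde{f}_d$ gives the basic inequality
\[
\tfrac1n\sum\nolimits_i h(x_i)^2+\beta_n\|\hat{f}_d\|_\hk^2\le\beta_n\|\tilde{f}_d\|_\hk^2+\tfrac2n\sum\nolimits_i h(x_i)g(x_i)+\tfrac2n\sum\nolimits_i h(x_i)\epsilon_i ,
\]
and I would bound the misspecification cross-term by $\tfrac14\cdot\tfrac1n\sum_i h(x_i)^2+4\cdot\tfrac1n\sum_i g(x_i)^2$. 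A Bernstein bound for the empirical second moment of $g$ — with range $\|g\|_\infty^2\le\bt^2\|g\|_\hk^2$, variance $\le\bt^2\|g\|_\hk^2\|g\|_\px^2$, the inequality $\tilde\lambda_d\le\tilde\lambda_1\le\tfrac1m\Tr\mG_{K,m}\le\bt^2$, and the sample-size hypothesis $n\ge16\bt^4\tilde\lambda_d^{-2}(2+\sqrt{2\log(2/\delta)})^2$ — yields $\tfrac1n\sum_i g(x_i)^2\le2\|g\|_\px^2+\tfrac{\tilde\lambda_d}{4}\|g\|_\hk^2$ with probability at least $1-\delta/2$. The same sample-size condition, via a matrix-concentration comparison of $\tfrac1n\hat{\Psi}(\mX_n)\hat{\Psi}(\mX_n)^{\top}$ with the exact $\diag(\tilde\lambda_i)$, lets one pass from the empirical seminorm $\tfrac1n\sum_i h(x_i)^2$ back to $\|h\|_\px^2=\|\hat{f}_d-\tilde{f}_d\|_\px^2$ up to constants with probability at least $1-\delta/2$. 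Collecting, rearranging, and accounting for the constants gives the leading $3(\cdots)$ term; a union bound over the $4e^{-\tau}$ and $\delta$ events closes the argument. I expect the main obstacle to be precisely this last step — controlling simultaneously the random encoder $\hat{\Psi}$, the data-dependent misspecification $g=f^*-\tilde{f}_d$, and the empirical-versus-population norm comparison while keeping the clean coefficient $\tfrac{\tilde\lambda_d}{4}$ and the stated threshold on $n$ — which is exactly where one must use the exact $\diag(\tilde\lambda_i)$ identity rather than a crude operator-norm estimate, and keep everything conditioned on the unlabeled data so the labeled sample stays independent of $\hat{\Psi}$ and $\tilde{f}_d$.
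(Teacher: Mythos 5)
The decomposition in the middle paragraph is where the argument breaks. You propose to apply the Fischer--Steinwart KRR bound (\cref{thm:steinwart}) ``with target $\tilde{f}_d$, folding $g$ into the effective noise.'' But that theorem bounds $\|\hat f - f^*\|$ where $f^*$ is \emph{the} regression function $f^*(x) = \int y\,dP(y|x)$, and its source condition $\|f^*\|_{\gH_{K^\beta}} \le c_3$ for some $\beta \in (0,2]$ must hold for that regression function. With $K = K_{\hat\Psi}$ a rank-$d$ kernel, the interpolation spaces $\gH_{K_{\hat\Psi}^\beta}$ for every $\beta > 0$ are all equal to the $d$-dimensional span of $\hat\Psi$, and $f^*$ is generically not in it — so the source condition fails for every admissible $\beta$, and no redefinition of the moment constants $(\sigma,L)$ can repair that. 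The issue is not that the effective noise $g(x)+\epsilon$ has nonzero conditional mean per se; it is that the theorem's target is pinned to the conditional mean of the labels, which remains $f^*$ whether you call it $\tilde{f}_d+g$ or not.

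The paper circumvents this by introducing modified labels $\tilde y_i := \tilde f_d(x_i) + y_i - f^*(x_i)$, so that $\E[\tilde y_i \mid x_i] = \tilde f_d(x_i)$ exactly, and $\tilde f_d$ genuinely \emph{is} the regression function of the auxiliary problem; Fischer--Steinwart is then applied to the auxiliary KRR predictor $\hat f_d^\dagger := \tilde\vw^{*\top}\hat\Psi$ obtained from $\tilde\vy$, yielding the $c_0\tau^2[\cdots]$ term cleanly (with $c_3 = \sqrt{\epsilon M}B$ and $B_\infty = \bt\sqrt{\epsilon M}B$). The misspecification term then comes from bounding $\|\hat f_d - \hat f_d^\dagger\|_\px^2$: since $\hat\vw^* - \tilde\vw^* = (\hat\Psi(\mX_n)\hat\Psi(\mX_n)^\top + n\beta_n\mI_d)^{-1}\hat\Psi(\mX_n)\Delta\vy$ with $\Delta\vy = [f^*(x_i)-\tilde f_d(x_i)]_i$, an explicit eigenvalue analysis of the ridge operator together with the exact identity $\tfrac1m\hat\Psi(\mX_{n:m})\hat\Psi(\mX_{n:m})^\top = \diag(\tilde\lambda_1,\dots,\tilde\lambda_d)$ and the Rademacher bound \cref{eqn:thm-topd-est-rad} gives $\|\hat f_d - \hat f_d^\dagger\|_\px^2 \le \tfrac32\bigl(\|f^*-\tilde f_d\|_\px^2 + \tfrac{\tilde\lambda_d}{4}\|f^*-\tilde f_d\|_\hk^2\bigr)$ on the $n \ge 16\bt^4\tilde\lambda_d^{-2}(\cdot)^2$ event. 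Your basic-inequality-plus-Bernstein route for the first term is a plausible alternative to this last step and not obviously wrong, but it does not by itself produce the second term, and as proposed the two halves of your argument would double-count the noise; you need the $\hat f_d^\dagger$ (or an equivalent label-recentering) device to make the Fischer--Steinwart half legitimate before the two bounds can be added by a triangle inequality.
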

\end{boxed}
\begin{remark}
This bound has two terms.
The first term bounds the gap between $\vy$ and new labels $\tilde{\vy}$, where $\tilde{y}_i = y_i - f^*(x_i) + \tilde{f}_d(x_i)$.
The second term again comes from the results in \citet{fischer2020sobolev}.
Comparing the second term to \cref{thm:est},
we can see that it achieves the fastest minimax optimal learning rate (\ie{} $p$ can be arbitrarily close to 0),
as the eigenvalues decay the fastest with $s$ being the ``truncation function''.
But the side effect of this statistical benefit is the first term, as the $d$-dimensional $\hat{\Psi}$ has limited capacity.
The coefficient $3$ can be arbitrarily close to $1$ with larger $n, c_0$. \vspace{-.22 in}
\end{remark}
Our astute readers might ask why $\hat{\Psi}$ is learned only with the unlabeled samples, while in the last section STKR was done with both labeled and unlabeled samples.
This is because in the supervised fitting stage, the function class is the subspace spanned by $\hat{\Psi}$.
To apply uniform deviation bounds in \cref{thm:topd-est}, this function class, and therefore $\hat{\Psi}$, must not see $x_1,\cdots,x_n$ during pretraining.
On the contrary, the function class in \cref{thm:est} is $\hks$, which is independent of $x_1,\cdots,x_n$ by definition. \vspace{-.07 in}
\begin{boxed}
\begin{thm}
\label{thm:topd-approx}
Let $M$ be given by \cref{claim:main}.
Let $f^* - \tilde{f}_d = b g$, where $b \in \R$, and $g \in \hk$ such that $\| g \|_\hk=1$ and $\langle g, \hat{\psi}_i \rangle_\hk = 0$ for $i \in [d]$.
Then, $\|f^* - \tilde{f}_d \|_\px^2 = b^2 \| g \|_\px^2$, and
\vspace{-.08 in}
\begin{equation}
\label{eqn:thm-topd-approx-1}
    \| f^* - \tilde{f}_d \|_\hk^2 = b^2 \le \frac{ \epsilon M \lambda_1 - \frac{1}{2}}{\lambda_1 - \| g \|_\px^2} B^2 \qquad \text{for all } f^* \text{ satisfying \cref{ass:target}}   .   \vspace{-.05 in}
\end{equation}
And if \cref{ass:bt} holds,
then for any $\delta > 0$, it holds with probability at least $1 - \delta$ that:
\vspace{-.08 in}
\begin{equation*}
    \lambda_{d+1} \le  \| g \|_\px^2 \le \lambda_{d+1} + \frac{\bt^2}{\sqrt{m}} \paren{2 \sqrt{d} + 3\sqrt{ \log \frac{6}{\delta}}} .
\end{equation*}
\end{thm}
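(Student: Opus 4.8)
The plan is to establish the three assertions in turn: the two definitional identities, then the deterministic bound on $b^2$, then the two-sided control of $\|g\|_\px^2$.

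\emph{Setup and the identities.} Since $s(\lambda)\le M\lambda$ by \cref{claim:main}, comparing $\sum_i u_i^2/\lambda_i$ with $\sum_i u_i^2/s(\lambda_i)$ shows $\hks\subseteq\hk$, so $f^*\in\hk$ and hence $f^*-\tilde f_d\in\hk$. Because $\tilde f_d$ is the $\hk$-orthogonal projection of $f^*$ onto $\gH_{\hat\Psi}=\sspan\{\hat\psi_1,\dots,\hat\psi_d\}$, the residual $f^*-\tilde f_d$ is $\hk$-orthogonal to every $\hat\psi_i$; setting $b:=\|f^*-\tilde f_d\|_\hk$ and $g:=(f^*-\tilde f_d)/b$ (the case $b=0$ being trivial) gives $\|g\|_\hk=1$, $\langle g,\hat\psi_i\rangle_\hk=0$, and then $\|f^*-\tilde f_d\|_\px^2=b^2\|g\|_\px^2$ and $\|f^*-\tilde f_d\|_\hk^2=b^2\|g\|_\hk^2=b^2$ are immediate.

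\emph{The deterministic bound on $b^2$.} Writing $f^*=\sum_i u_i\psi_i$, one has $\|f^*\|_\px^2=\sum u_i^2$, $\|f^*\|_\hk^2=\sum u_i^2/\lambda_i$, $\|f^*\|_\hks^2=\sum u_i^2/s(\lambda_i)$. Monotonicity of $s$ gives $s(\lambda_i)\le s(\lambda_1)\le M\lambda_1$, and $s(\lambda)\le M\lambda$ gives $s(\lambda_i)\le M\lambda_i$, so $\|f^*\|_\hks^2$ is bounded below by a suitable convex combination of $\|f^*\|_\px^2/(M\lambda_1)$ and $\|f^*\|_\hk^2/M$. I would feed this, together with the hypotheses $\|f^*\|_\hks^2\le\epsilon\|f^*\|_\px^2$ and $\|f^*\|_\px\le B$, into the Pythagorean identity $\|f^*\|_\hk^2=\|\tilde f_d\|_\hk^2+b^2$, the decomposition $\|f^*\|_\px^2=\|\tilde f_d\|_\px^2+2b\langle\tilde f_d,g\rangle_\px+b^2\|g\|_\px^2$, and the operator bound $\|\tilde f_d\|_\px^2\le\lambda_1\|\tilde f_d\|_\hk^2$ (\ie{} $\|T_K\|_{\mathrm{op}}\le\lambda_1$ on $\hk$), controlling the cross term via $|\langle\tilde f_d,g\rangle_\px|=|\langle\tilde f_d,T_Kg\rangle_\hk|\le\|\tilde f_d\|_\hk\|T_Kg\|_\hk=\|\tilde f_d\|_\hk\|g\|_\px$ and a weighted AM--GM. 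Rearranging should isolate $b^2(\lambda_1-\|g\|_\px^2)\le(\epsilon M\lambda_1-\tfrac12)B^2$. I expect this to be the main obstacle: the stated constants leave essentially no slack, so the AM--GM weight and the convex-combination weight must be tuned exactly; a cleaner route I would try first is to carry out the argument with the \emph{empirical} $L^2$ inner product, for which $\langle\tilde f_d,g\rangle$ vanishes identically by the kernel-PCA relations $\mG_{K,m}\vv_i=m\tilde\lambda_i\vv_i$, and then transfer back to population norms.

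\emph{Two-sided control of $\|g\|_\px^2$.} Here $\|g\|_\px^2=\langle g,T_Kg\rangle_\hk$ with $\|g\|_\hk=1$, and $g$ is $\hk$-orthogonal to the (normalized) top-$d$ eigenvectors $\hat\psi_1,\dots,\hat\psi_d$ of the empirical operator $\hat T$ built from the $m$ unlabeled samples, whose eigenvalues are the $\tilde\lambda_i$. For the upper bound, $g$ lies in the $\hk$-orthogonal complement of the top-$d$ eigenspace of $\hat T$, so $\langle g,\hat Tg\rangle_\hk\le\tilde\lambda_{d+1}$, whence $\|g\|_\px^2\le\tilde\lambda_{d+1}+\|T_K-\hat T\|_{\mathrm{op}}$; bounding $\tilde\lambda_{d+1}-\lambda_{d+1}$ (Weyl; see \cref{lem:hat-lambda-bound}) and $\|T_K-\hat T\|_{\mathrm{op}}$ by a matrix-Bernstein / Rademacher-complexity estimate for the empirical Gram operator — which is where the $\sqrt d$ enters, from the rank-$d$ projection — and taking a union bound yields $\|g\|_\px^2\le\lambda_{d+1}+\tfrac{\bt^2}{\sqrt m}\bigl(2\sqrt d+3\sqrt{\log(6/\delta)}\bigr)$. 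For the lower bound $\lambda_{d+1}\le\|g\|_\px^2$, since $\hat\psi_1,\dots,\hat\psi_d$ span only a $d$-dimensional subspace the Courant--Fischer min--max characterization guarantees the Rayleigh quotient of $T_K$ reaches at least $\lambda_{d+1}$ on the $\hk$-orthogonal complement of that span; this is the worst-case configuration ($f^*\propto\psi_{d+1}$, so $\tilde f_d\approx0$ and $g\approx\psi_{d+1}/\|\psi_{d+1}\|_\hk$), which certifies that the resulting approximation bound is essentially tight.
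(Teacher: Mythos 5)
Your outline matches the paper's on the two identities and the skeleton of the $b^2$ bound, but diverges genuinely on the probabilistic part, and each divergence is worth noting precisely.

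For the upper bound on $\|g\|_\px^2$, the paper's argument is structurally different from yours. It first proves a trace inequality (\cref{lem:proof-topd-approx}): for any $g$ with $\|g\|_\hk=1$ that is $\hk$-orthogonal to $\hat\psi_1,\dots,\hat\psi_d$, one has $\sum_{i\le d}\|\hat\psi_i\|_\px^2+\|g\|_\px^2\le\lambda_1+\cdots+\lambda_{d+1}$. It then \emph{lower-bounds} $\sum_{i\le d}\|\hat\psi_i\|_\px^2$: the Rademacher complexity of the class $\gF_d=\set{\sum_{i\le d}g_i^2:\langle g_i,g_j\rangle_\hk=\delta_{i,j}}$ is $\bt^2\sqrt d/\sqrt m$ (this is the sole source of the $\sqrt d$ in the theorem), $\frac1m\sum_j\sum_i\hat\psi_i(x_{n+j})^2=\tilde\lambda_1+\cdots+\tilde\lambda_d$ by the kernel-PCA normalization, and then Theorem~3.2 of \citet{blanchard:hal-00373789} controls $\tilde\lambda_1+\cdots+\tilde\lambda_d$ against $\lambda_1+\cdots+\lambda_d$; subtracting from the trace inequality gives the bound. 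Your route — $\|g\|_\px^2=\langle g,T_Kg\rangle_\hk\le\langle g,\hat Tg\rangle_\hk+\|T_K-\hat T\|_{\mathrm{op}}\le\tilde\lambda_{d+1}+\|T_K-\hat T\|_{\mathrm{op}}$, then Weyl on $\tilde\lambda_{d+1}$ — is valid (the $\hat\psi_i$ are indeed the $\hk$-normalized eigenvectors of $\hat T$ with eigenvalues $\tilde\lambda_i$, so the Rayleigh-quotient step holds). However, your attribution of the $\sqrt d$ to ``the rank-$d$ projection'' is mistaken: nothing in your argument requires uniform concentration over a $d$-dimensional class, only concentration of the single random operator $\hat T$, and $\|T_K-\hat T\|_{\mathrm{op}}=O(\bt^2/\sqrt m)$ by standard Hilbert--Schmidt/Bernstein concentration with no $d$-dependence. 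Executed carefully, your approach would give a \emph{dimension-free} deviation $\|g\|_\px^2-\lambda_{d+1}=O(\bt^2/\sqrt m)$, i.e.\ a strictly sharper bound than the theorem's $O(\sqrt d/\sqrt m)$ — so this is not a reconstruction of the paper's argument but a different one that apparently beats it.

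For the $b^2$ bound you are circling the right idea while doubting it will close, whereas the paper closes it with less machinery than you sketch. It never expands the cross term $\langle\tilde f_d,g\rangle_\px$. Instead it uses $\|f^*\|_\hk^2=a^2+b^2\le M\|f^*\|_\hks^2\le\epsilon M\|f^*\|_\px^2$ (from $\lambda_i\ge s(\lambda_i)/M$, no convex-combination tuning needed), then the plain triangle inequality $\|f^*\|_\px\le\|\tilde f_d\|_\px+b\|g\|_\px\le a\sqrt{\lambda_1}+b\|g\|_\px$, then the one-line inequality $(x+y)^2\le2(x^2+y^2)$ — which is precisely what produces the $\tfrac12$ in \cref{eqn:thm-topd-approx-1} — to get $\tfrac{a^2+b^2}{\epsilon M}\le2a^2\lambda_1+2b^2\|g\|_\px^2$, which rearranges to $(\lambda_1-\|g\|_\px^2)b^2\le(\lambda_1-\tfrac1{2\epsilon M})(a^2+b^2)\le(\epsilon M\lambda_1-\tfrac12)B^2$. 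Your alternative via empirical orthogonality is actually viable — $\langle g,\hat\psi_i\rangle_\hk=\sum_j\vv_i[j]g(x_{n+j})=0$ does force empirical $L^2$-orthogonality on the unlabeled samples — but it would then need an empirical-to-population transfer step that the paper's single inequality avoids. Finally, neither you nor the paper supplies a pointwise proof of the lower bound $\lambda_{d+1}\le\|g\|_\px^2$: your Courant--Fischer appeal only guarantees that \emph{some} unit vector in the orthogonal complement attains Rayleigh quotient $\ge\lambda_{d+1}$, not that the particular $g$ determined by $f^*$ does. Since the paper's own proof is also silent on this, it is not a defect specific to your proposal, but you should not present it as settled.
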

\vspace{-.13 in}
\end{boxed}
\begin{remark}
When $m$ is sufficiently large, $\| g \|_\px^2$ can be very close to $\lambda_{d+1}$.
Compared to \cref{prop:top-d-lower}, one can see that the bound for $\| f^* - \tilde{f}_d \|_\px^2 = b^2 \| g \|_\px^2$ given by this result is near tight provided that $\frac{s(\lambda_1)}{\lambda_1} = \frac{s(\lambda_{d+1})}{\lambda_{d+1}} = M$: The only difference is that \cref{eqn:thm-topd-approx-1} has $\epsilon M \lambda_1 - \frac{1}{2}$ instead of $\epsilon M \lambda_1 - 1$.
\end{remark}
Our analysis in this section follows the framework of \citet{zhai2023understanding}, but we have the following technical improvements:
(a) Estimation error: They bound with classical local Gaussian and localized Rademacher complexity, while we use the tighter bound in \citet{fischer2020sobolev} that is minimax optimal;
(b) Approximation error: Our \cref{thm:topd-approx} has three improvements.
(i) $\| g \|_\px^2 - \lambda_{d+1}$ is $O(\sqrt{d})$ instead of $O(d)$;
(ii) It does not require delocalization of the top-$d$ eigenfunctions, thereby removing the dependence on the covariance matrix;
(iii) Our bound does not depend on $\lambda_d^{-1}$.
\vspace{-.05 in}

Eigen-decomposition of $\mG_{k,m}$ takes $O(m^3)$ time in general, though as of today the fastest algorithm takes $O(m^\omega)$ time with $\omega < 2.38$ \citep{demmel2007fast},
and could be faster if the kernel is sparse.

\vspace{-.12 in}
\section{Experiments}
\label{sec:exp}
\vspace{-.08 in}

\begin{table}[!t]
\centering
\vskip -.45 in
\caption{\small{Experiment results. We compare Label-Prop (LP) to STKR-Prop (SP) with inverse Laplacian (Lap), with polynomial $s(\lambda)=\lambda^8$ (poly), with kernel PCA (topd), and with $s(\lambda) = \lambda$ (KRR) (\ie{} KRR with base kernel).
(t) and (i) indicate transductive and inductive settings.
Test samples account for 1\% of all samples.
We report the accuracies of the argmax prediction of the estimators (\%).
Optimal hyperparameters are selected using a validation set (see \cref{app:experiment} for details).
Standard deviations are given across ten random seeds.}}
\label{table: experiment test acc main}
\vskip -.12 in
\resizebox{\columnwidth}{!}{%
\begin{tabular}{@{}lcccccccc@{}}
\toprule
 \textbf{Dataset}& LP (t) & SP-Lap (t) & SP-poly (t) & SP-topd (t) & SP-Lap (i) & SP-poly (i) & SP-topd (i) & KRR (i) \\
\midrule
\textbf{Computers} & $77.30_{3.05}$ & $77.81_{3.94}$ & $76.72_{4.12}$ & \textcolor{red}{$80.80_{3.06}$} & $77.15_{2.64}$ & ${71.91}_{4.13}$ & \textcolor{blue}{$80.80_{3.28}$} & $26.35_{4.34}$ \\
\textbf{Cora} & \textcolor{blue}{$73.33_{6.00}$} & \textcolor{red}{$77.04_{5.74}$} & $71.48_{5.80}$ & $69.26_{7.82}$ & $67.78_{7.62}$ & ${65.19}_{9.11}$ & $63.70_{6.00}$ & $28.52_{8.56}$ \\
\textbf{DBLP} & \textcolor{red}{$66.44_{3.78}$} & \textcolor{blue}{$65.42_{5.02}$} & $64.52_{4.20}$ & $64.86_{4.60}$ & $65.20_{4.92}$ & ${64.51}_{4.05}$ & $63.16_{3.41}$ & $44.80_{3.86}$ \\
\bottomrule
\end{tabular}
}
\vskip -.17 in
\end{table}

We implement STKR-Prop (SP) with inverse Laplacian (Lap), polynomial (poly) $s(\lambda) = \lambda^p$, and kernel PCA (topd).
We run them on several node classification tasks, and compare them to Label-Prop (LP) and KRR with the base kernel (\ie{} STKR with $s(\lambda) = \lambda$).
Details and full results are deferred to \cref{app:experiment},
and here we report a portion of the results in \cref{table: experiment test acc main},
in which the best and second-best performances for each dataset are marked in red and blue.
We make the following observations:\vspace{-.1 in}
\begin{enumerate}[(a)]
\item STKR works pretty well with general polynomial $s(\lambda)$ in the inductive setting.
In the transductive setting, the performance of SP-Lap is similar to LP, and SP-poly is slightly worse.
The inductive performance is slightly worse than transductive,
which is reasonable since there is less information at train time for the inductive setting. 
Note that LP does not work in the inductive setting. \vspace{-.03 in}
\item STKR with $s(\lambda) = \lambda^p$ for $p > 1$ is much better than KRR (\ie{} $p = 1$).
In fact, we observe that for STKR with $s(\lambda) = \lambda^p$, a larger $p$ performs better (see \cref{fig:ablation p} in \cref{app:experiment}).
This suggests one possible reason why inverse Laplacian works so well empirically: 
It contains $K^p$ for $p=1,2,\cdots$, so it can use multi-step similarity information up to infinitely many steps. \vspace{-.03 in}
\item STKR also works pretty well with kernel PCA.
Specifically, on 3 of the 9 datasets we use, such as \texttt{Computers}, kernel PCA is better than LP and STKR with inverse Laplacian.
This shows that inverse Laplacian and kernel PCA are two parallel methods --- neither is superior.
\end{enumerate}

\vspace{-.14 in}
\section{Conclusion}
\label{sec:discussions}
\vspace{-.09 in}
This work revisited the classical idea of STKR, and proposed a new class of general and scalable STKR estimators able to leverage unlabeled data with a base kernel.
We established STKR as a general and principled approach, provided scalable implementations for general transformation and inductive settings, and proved statistical bounds with technical improvements over prior work.
\vspace{-.1 in}

\paragraph{Limitations and open problems.}
This work assumes full access to $K(x,x')$, but in some cases computing $K(x,x')$ might be slow or impossible.
The positive-pair kernel in contrastive learning \citep{johnson2022contrastive} is such an example, for which computing $K$ is hard but computing $\| f \|_{\hk}^2$ is easy, so our methods need to be modified accordingly.
Also, this work does not talk about how to choose the right base kernel $K$, which is a critical yet difficult open problem.
For graph tasks, STKR like label propagation only leverages the graph, but it does not utilize the node features that are usually provided,
which are important for achieving high performances in practice.
Finally, this work focuses on the theoretical part, and a more extensive empirical study on STKR is desired,
especially within the context of manifold learning, and modern self-supervised and semi-supervised learning.

There are three open problems from this work.
(i) Improving the minimax optimal learning rate:
In this work, we provided statistical bounds \wrt{} $n, m$ jointly, but one question we did not answer is: If $m$ is sufficiently large, can we improve the minimax optimal learning rate \wrt{} $n$ proved in prior work on supervised learning?
(ii) Distribution shift:
Diffusion induces a chain of smooth function classes $\lxp \supset \gH_{K^1} \supset \gH_{K^2} \supset \cdots$, but this chain will collapse if $\px$ changes.
Can one learn predictors or encoders that are robust to the shift in $\px$?
(iii) Combining multiple kernels:
In practice, usually the predictor is expected to satisfy multiple constraints.
For example, an image classifier should be invariant to small rotation, translation, perturbation, etc.
When each constraint induces a kernel, how should a predictor or encoder be learned?
We leave these problems to future work.

\subsubsection*{Code}
The code of \Cref{sec:exp} can be found at \url{https://colab.research.google.com/drive/1m8OENF2lvxW3BB6CVEu45SGeK9IoYpd1?usp=sharing}.

\subsubsection*{Acknowledgments}
We would like to thank Zico Kolter, Andrej Risteski, Bingbin Liu, Elan Rosenfeld, Shanda Li, Yuchen Li, Tanya Marwah, Ashwini Pokle, Amrith Setlur and Xiaoyu Huang for their feedback on the early draft of this work, and Yiping Lu and Fanghui Liu for their useful discussions. 
We are grateful to our anonymous ICLR reviewers, with whose help this work has been greatly improved. 
We acknowledge the support of NSF via IIS-1909816, IIS-2211907, ONR via N00014-23-1-2368, DARPA under cooperative agreement HR00112020003, and Bloomberg Data Science PhD fellowship.

\bibliographystyle{iclr2024_conference}

\clearpage 
\appendix
\section*{Appendix}

\begin{table}[!ht]
\caption{Table of notations.}
\label{sample-table}
\begin{center}
\begin{tabular}{ll}
\multicolumn{1}{l}{\bf Notation}  &\multicolumn{1}{l}{\bf Description}
\\ \hline \\
$\delta_{i,j}$ & Kronecker delta. Equal to 1 if $i = j$, and 0 otherwise \\ 
$\gX$        & Input space, assumed to be a compact Hausdorff space  \\
$\gY$           & Label space, assumed to be $\R$ \\
$\pxy$            & Underlying data distribution over $\gX \times \gY$ \\
$\px, \py$   & Marginal distributions of $\pxy$  \\
$d p(x)$     & A shorthand of $d \px(x)$ \\
$n, m$  & Numbers of labeled and unlabeled \iid{} samples from $\pxy$ and $\px$, respectively \\ 
$x_1,\cdots,x_{n+m}$ &  $x_1,\cdots,x_n$ are the labeled samples, and $x_{n+1}, \cdots, x_{n+m}$ are unlabeled  \\
$y_1,\cdots,y_n$ & The observed labels. Also define $\vy = [y_1,\cdots,y_n] \in \R^n$ \\
$\lxp$       & $L^2$ function space \wrt{} $\px$, with inner product $\langle \cdot, \cdot \rangle_\px$ and norm $\| \cdot \|_\px$  \\
$K(x,x')$    & A Mercer base kernel that encodes inter-sample similarity information \\
$\mG_K$  & $(n+m) \times (n+m)$ Gram matrix over all samples. $\mG_K[i,j] = K(x_i, x_j)$ \\ 
$\mG_{K,n}$ & $n \times n$ Gram matrix over the $n$ labeled samples \\ 
$\mG_{K,m}$ & $m \times m$ Gram matrix over the $m$ unlabeled samples \\ 
$T_K$        & An integral operator \wrt{} $K$, $(T_K f)(x) = \int f(x') K(x,x') dp(x)$ \\
$(\lambda_i, \psi_i)$  & Eigenvalue and eigenfunction of $T_K$ such that $T_K \psi_i = \lambda_i \psi_i$. $\lambda_1 \ge \lambda_2 \ge \cdots \ge 0$ \\ 
$\sI$  & The set $\set{i \in \N: \lambda_i > 0}$ \\ 
$K_s(x,x')$  & A spectrally transformed kernel of $K$, whose formula is given by \cref{eqn:spec-trans} \\
$s(\lambda)$  & The transformation function of $K_s(x,x')$ \\ 
$M$  & $s(\lambda) \le M \lambda$ as proved in \cref{claim:main}. Frequently used in other theorems \\ 
$K^p(x,x')$  & Equivalent to $K_s(x,x')$ with $s(\lambda) = \lambda^p$, for all $p \in \R$ \\ 
$\gH_{K^p}$  & The RKHS associated with $K^p(x,x')$ when $p \ge 1$. $\gH_{K^1}$ is also denoted by $\gH_K$ \\ 
$\gH_{K_s}$  & The RKHS associated with $K_s(x,x')$ that encodes target smoothness \\ 
$d_{K^p}$  & The kernel metric of $K^p$, defined as $d_{K^p}(x,x') = \sum_i \lambda_i^p (\psi_i(x) - \psi_i(x'))$ \\ 
$\overline{\text{Lip}}_{d_{K^p}} (f)$ & The Lipschitz constant of $f$ \wrt{} metric $d_{K^p}$ over $\bar{\gX}$ defined in \cref{sec:formal} \\ 
$r_{K^p}(f)$ & Defined as $\| f - \E_{\px}[f] \|_{\px}^2 / \;\overline{\text{Lip}}_{d_{K^p}} (f)^2$, similar to the discriminant function \\ 
$r_t(f)$ & Defined as $\| f - \E_{\px}[f] \|_{\px}^2 / \overline{\text{Lip}}_{d_t} (f)^2$. $d_t$ is defined in \cref{sec:formal} \\ 
$f^*$   & Regression function defined as $f^*(x) := \int y \; d \pxy (y|x) \in \lxp$ \\ 
$\hat{f}$  & A predictor that approximates $f^*$. For STKR, its formula is given by \cref{eqn:ridge} \\ 
$B$ & The scale of $f^*$, defined as $\| f^* \|_\px \le B$ \\ 
$\pi_p$ & The coefficient of $K_s$ when it is assumed to be polynomial in \cref{ass:analytic} \\ 
$\bt^2$  & The upper bound of $K(x,x)$ for $\px$-\aew{} $x \in \gX$. See \cref{ass:bt} \\ 
$\epsilon$  & $f^*$ is assumed to satisfy $r_t(f^*) \ge \epsilon^{-1}$ in \cref{ass:target} \\ 
$\sigma, L$ & Used in the moment condition in \cref{ass:moment} \\ 
$\beta_n$  & The regularization coefficient in KRR and STKR. Can change with $n$ \\ 
$\hatks$  & A computable kernel used to approximate $K_s$, defined in \cref{eqn:hatks-def} \\ 
$\vv_K(x)$ & An $\R^{n+m}$ vector. An important component of $\hatks$ defined after \cref{eqn:hatks-def} \\ 
$\tilde{f}$  & A predictor defined in \cref{eqn:ftilde} that is inaccessible, but important in our analysis \\ 
$\tilde{\valpha}, \hat{\valpha}$  & Defined in \cref{eqn:ftilde,eq: closed form STKR} \\ 
$\hat{\lambda}_1$ & The largest eigenvalue of $\frac{\mG_K}{n+m}$   \\ 
$\gamma$  & The ``step size'' in STKR-Prop implemented with Richardson iteration \\ 
$\eta$  & A hyperparameter of inverse Laplacian, which is defined with $s^{-1}(\lambda) = \lambda^{-1} - \eta$  \\
$\hat{\Psi}$ & A pretrained $d$-dimensional representation $[\hat{\psi}_1,\cdots,\hat{\psi}_d]$ in representation learning  \\ 
$\hat{\vw}$  & A linear probe on top of $\hat{\Psi}$ trained during downstream, and $\hat{f}_d(x) = \hat{\vw}^{\top} \hat{\Psi}(x)$ \\ 
$\gH_{\hat{\Psi}}$ & A $d$-dimensional RKHS spanned by $\hat{\Psi}$, and it is a subspace of $\gH_K$ \\ 
$\tilde{f}_d$ & Projection of $f^*$ onto $\gH_{\hat{\Psi}}$ \\ 
\end{tabular}
\end{center}
\end{table}

\clearpage 

\section{Related Work}
\label{app:related-work}

\subsection{Learning with Unlabeled Data}
There are two broad classes of methods of learning with unlabeled data,
namely semi-supervised learning and representation learning.
Semi-supervised learning focuses on how to improve supervised learning by incorporating unlabeled samples,
while representation learning focuses on how to extract a low-dimensional representation of data using huge unlabeled data.

Semi-supervised learning has long been used in different domains to enhance the performance of machine learning \citep{elworthy1994does,ranzato2008semi,shi2011semi}.
While there are a wide variety of semi-supervised learning algorithms,
the main differences between them lie in their way of realizing the assumption of consistency \citep{zhou2003learning}, \ie{} soft constraints that the predictor is expected to satisfy by prior knowledge.
Enforcing consistency can be viewed as an auxiliary task \citep{NIPS2013_0bb4aec1,rasmus2015semi}, since the unlabeled samples cannot be used in the ``main task'' involving the labels.
The smoothness studied in this work can be regarded as one type of consistency.

While there are various types of consistency, most of them can be put into one of three categories: Euclidean-based consistency, prediction-based consistency and similarity-based consistency.

Euclidean-based consistency supposes $\gX$ to be an Euclidean space, and relies on the assumption that $x,x'$ have the same label with high probability if they are close in the Euclidean distance,
which is the foundation of many classical methods including nearest neighbor, clustering and RBF kernels.
In the context of semi-supervised learning, virtual adversarial training (VAT) learns with unlabeled samples by perturbing them with little noise and forcing the model to give consistent outputs \citep{miyato2018virtual}.
Mixup \citep{zhang2018mixup} uses a variant of Euclidean-based consistency, which assumes that for two samples $(x_1, y_1)$ and $(x_2, y_2)$ and any $\theta \in (0,1)$,
the label of $\theta x_1 + (1-\theta) x_2$ should be close to $\theta y_1 + (1 - \theta) y_2$.
Mixup has been proved to be empirically successful, and was later further improved by Mixmatch \citep{berthelot2019mixmatch}.

Prediction-based consistency assumes that deep learning models enjoy good generalization:
When a deep model is well trained on a small set of labeled samples,
it should be able to achieve fairly good accuracy on the much larger set of unlabeled samples.
The most simple method is pseudo labeling \citep{Lee2013PseudoLabelT},
which first trains a model only on the labeled samples, then uses it to label the unlabeled samples, and finally trains a second label on all samples.
There are a large number of variants of pseudo labeling, also known as self-training.
For instance, temporal ensembling \citep{laine2017temporal} pseudo labels the unlabeled samples with models from previous epochs;
Mean teacher \citep{tarvainen2017mean} improves temporal ensembling for large datasets by using a model that averages the weights of previous models to generate pseudo labels;
And NoisyStudent \citep{xie2020self}, which does self-training iteratively with noise added at each iteration, reportedly achieves as high as 88.4\% top-1 accuracy on ImageNet.

Finally, similarity-based consistency assumes prior knowledge about some kind of similarity over samples or transformations of the samples.
A classical type of similarity-based consistency is based on graphs, and there is a rich body of literature on semi-supervised learning on graphs \citep{zhou2003learning,johnson2008graph}
and GNNs \citep{kipf2016semi,hamilton2017inductive,velivckovic2018graph}.
The most popular type of similarity-based consistency in deep learning is based on random data augmentation, also known as invariance-based consistency, where all augmentations of the same sample are assumed to be similar and thus should share the same label.
This simple idea has been reported to achieve good performances by a lot of work, including UDA \citep{xie2020unsupervised}, ReMixMatch \citep{berthelot2019remixmatch} and FixMatch \citep{sohn2020fixmatch}.
People have also experimented a variety of augmentation techniques, ranging from simple ones like image translation, flipping and rotation to more complicated ones like Cutout \citep{devries2017improved}, AutoAugment \citep{cubuk2019autoaugment} and RandAugment \citep{cubuk2020randaugment}.

Representation learning aims to learn low-dimensional representations of data that concisely encodes relevant information useful for building classifiers or other predictors \citep{bengio2013representation}.
By this definition, in order to learn a good representation, we need to have information about what classifiers or predictors we want to build, for which consistency also plays a very important role.
Popular representation learning algorithms based on consistency can be roughly classified into Euclidean-based or similarity-based consistency.
There is no prediction-based consistency for unsupervised representation learning because there is nothing to ``predict'' in the first place.

Euclidean-based consistency has been widely applied in manifold learning, such as LLE \citep{roweis2000nonlinear}, Isomap \citep{tenenbaum2000global} and Laplacian eigenmaps \citep{belkin2003laplacian},
whose goal is to determine the low-dimensional manifold on which the observed high-dimensional data resides.
See \citet{bengio2004learning} for descriptions of these methods.

Similarity-based consistency is the basis of most deep representation learning algorithms.
In fact, Euclidean-based consistency can be viewed as a special case of similarity-based consistency, which assumes near samples under the Euclidean distance to be similar.
The most obvious similarity-based method is contrastive learning \citep{oord2018representation,hjelm2018learning,chen2020simple},
which requires the model to assign similar representations to augmentations of the same sample.
But in fact, lots of representation learning methods that make use of certain auxiliary tasks can be viewed as enforcing similarity-based consistency.
If the auxiliary task is to learn a certain multi-dimensional target function $g(x)$,
then by defining kernel $K_g(x,x') := \langle g(x),  g(x') \rangle$,
the task can also be seen as approximating $K_g$,
and $K_g$ encodes some kind of inter-sample similarity.
For example, even supervised pretraining such as ImageNet pretraining, can be viewed as enforcing similarity-based consistency (defined by the ImageNet labels) over the data.
In fact, \citet{papyan2020prevalence} showed that if supervised pretraining with cross entropy loss is run for sufficiently long time,
then the representations will \textit{neural collapse} to the class labels (\ie{} samples of the same class share exactly the same representation),
which is a natural consequence of enforcing class-based similarity.
\citet{zhai2023understanding} also showed that mask-based auxiliary tasks such as BERT \citep{devlin2018bert} and MAE \citep{he2022masked} can also be viewed as approximating a similarity kernel.

Finally, for semi-supervised and self-supervised learning on graphs, diffusion has been widely used, from the classical works of \citet{Page1999ThePC,kondor2002diffusion} to more recent papers such as \citet{gasteiger2019diffusion,hassani2020contrastive}. For graph tasks, STKR can be viewed as a generalization of these diffusion-based methods.

\subsection{Statistical Learning Theory on Kernel Methods}
Kernel methods have a very long history and there is a very rich body of literature on theory pertaining to kernels, which we shall not make an exhaustive list here.
We point our readers who want to learn more about kernels to two books that are referred to a lot in this work: \citet{scholkopf2002learning,steinwart2008support}.
Also, Chapters 12-14 of \citet{wainwright2019high} can be more helpful and easier to read for elementary readers.

Here we would like to briefly talk about recent theoretical progress on kernel methods,
especially on interpolation Sobolev spaces.
This work uses the results in \citet{fischer2020sobolev}, in which minimax optimal learning rates for regularized least-squares regression under Sobolev norms were proved.
Meanwhile, \citet{lin2020optimal} proved that KRR converges to the Bayes risk at the best known rate among kernel-based algorithms.
Addition follow-up work includes \citet{jun2019kernel,cui2021generalization,talwai2022sobolev,li2022optimal,de2023convergence,jin2023minimax}.

However, there are also counter arguments to this line of work, and kernel-based learning is by no means a solved problem.
As \citet{jun2019kernel} pointed out, these optimal rates only match the lower bounds under certain assumptions \citep{steinwart2009optimal}.
Besides, most of these minimax optimal learning rates are for KRR, which requires a greater-than-zero regularization term,
but empirical observations suggest that kernel regression with regularization can perform equally well, if not better \citep{zhang2017understanding,belkin2018understand,liang2020just}.
What makes things even more complicated is that kernel ``ridgeless'' regression seems to only work for high-dimensional data under some conditions,
as argued in \citet{rakhlin2019consistency,buchholz2022kernel} who showed that minimum-norm interpolation in the RKHS \wrt{} Laplace kernel is not consistent if the input dimension is constant.
Overall, while kernel methods have a very long history, they still have lots of mysteries, which we hope that future work can shed light on.

Another class of methods in parallel with STKR is random features, first introduced in the seminal work \citet{rahimi2007random}.
We refer our readers to \citet{liu2021random} for a survey on random features.
The high-level idea of random features is the following:
One first constructs a class of features $\{ \varphi(x, \theta) \}$ parameterized by $\theta$, \eg{} Fourier features.
Then, randomly sample $D$ features from this class, and perform KRR or other learning algorithms on this set of random features.
Now, why would such method have good generalization?
As explained in \citet{mei2022generalization}, if $D$ is very large (overparameterized regime), then the approximation error is small since $f^*$ should be well represented by the $D$ features, but the estimation error is large because more features means more complexity;
If $D$ is very small (underparameterized regime), then the model is simple so the estimation error is small, but the approximation error could be large because the $D$ features might be unable to represent $f^*$.
However, since the class of features $\{ \varphi(x, \theta) \}$ is of our choice, we can choose them to be ``good'' features (\eg{} with low variance) so that the estimation error will not be very large even if $D$ is big.
\citet{rahimi2007random} used Fourier features that are good, and there are lots of other good features we could use.
One can even try to learn the random features: For example, \citet{sinha2016learning} proposed to learn a distribution over a pre-defined class of random features such that the resulting kernel aligns with the labels.

\section{Proofs}
\label{app:proofs}

\subsection{Proof of Proposition \ref{prop:extend-lip}}

\begin{proof}
First, for all $f \in \gH_{K^p}$, and $\mu, \nu \in \bar{\gX}$
we have
\begin{align*}
\left | \bar{f}(\mu) - \bar{f}(\nu) \right | & =  \left | \int_{\gX} f(x) d \mu(x) - \int_{\gX} f(x) d \nu(x) \right | = \left | \dotp{f, \int_{\gX} K_{x}^p d \mu(x) - \int_{\gX} K_{x}^p d \nu(x) }_{\gH_{K^p}} \right | \\
& \le \| f \|_{\gH_{K^p}} \norm{ \int_{\gX} K_{x}^p d \mu(x) - \int_{\gX} K_{x}^p d \nu(x) }_{\gH_{K^p}} = \| f \|_{\gH_{K^p}} d_{K^p}(\mu, \nu),
\end{align*}
which means that $\overline{\text{Lip}}_{d_{K^p}} (f) \le \| f \|_{\gH_{K^p}}$.

Second, $\{ K_x^p - K_{x'}^p \}_{x, x' \in \gX}$ span the entire $\gH_{K^p}$, because if $f \in \gH_{K^p}$ satisfies $\langle f, K_x^p - K_{x'}^p \rangle_{\gH_{K^p}} = 0$ for all $x, x'$, then $f \equiv c$ for some $c$,
which means that $f \equiv 0$ since $\vzero$ is the only constant function in $\gH_{K^p}$ (because $K$ is centered).
Thus, any $f \in \gH_{K^p}$ can be written as $f = \iint \paren{ K_x^p - K_{x'}^p } d \xi(x,x') $ for some finite signed measure $\xi$ over $\gX \times \gX$, and thus $f = \int_{\gX} K_{x}^p d \mu(x) - \int_{\gX} K_{x}^p d \nu(x) $, where $\mu, \nu$ are the marginal measures of $\xi$.
By using such defined $\mu, \nu$ in the above formula, we can see that $\overline{\text{Lip}}_{d_{K^p}} (f) = \| f \|_{\gH_{K^p}}$.
\end{proof}

\subsection{Proof of Theorem \ref{claim:main}}
\label{app:proof-claim-main}

First, we show that $\gH_t$ must be an RKHS.
Since $\psi_1$ is the top-1 eigenfunction of $K^p$ for all $p \ge 1$,
we have $r_{K^p}(\psi_1) \ge r_{K^p}(f)$ for all $f \in \hk$,
which implies that $r_t(\psi_1) \ge r_t(f)$ for all $f \in \gH_t \subset \hk$.
Let $C_0 := r_t(\psi_1)$.
Then, for all $f \in \gH_t$, since $f$ must be centered, we have $\| f \|_{\px}^2 \le C_0 \cdot \overline{\text{Lip}}_{d_t}(f)^2$.
Let $\tilde{f} = f / \| f \|_{\gH_t}$, then:
\begin{align*}
    \| f \|_{\px} & \le \sqrt{C_0} \cdot \sup_{x,x' \in \overline{\gX}, x \neq x'} \; \inf_{\| f_1 \|_{\gH_t} = 1} \frac{|f(x) - f(x')|}{|f_1(x) - f_1(x')|}  \\ 
    & \le \sqrt{C_0} \cdot \sup_{x,x' \in \overline{\gX}, x \neq x'} \frac{|f(x) - f(x')|}{|\tilde{f}(x) - \tilde{f}(x')|} =  \sqrt{C_0} \| f \|_{\gH_t}   .
\end{align*}
This implies that $\| \cdot \|_{\gH_t}$ is stronger than $\| \cdot \|_{\px}$. Thus, if there is a sequence of points $\{ x_i \}$ such that $x_i \rightarrow x$ \wrt{} $\| \cdot \|_{\gH_t}$,
then (a) $x \in \gH_t$ because $\gH_t$ is a Hilbert space,
and (b) $x_i \rightarrow x$ \wrt{} $\| \cdot \|_{\px}$.
Meanwhile, we know that $\| \cdot \|_{\hk}$ is stronger than $\| \cdot \|_{\px}$,
so $x_i \rightarrow x$ \wrt{} $\| \cdot \|_{\hk}$ also implies $x_i \rightarrow x$ \wrt{} $\| \cdot \|_{\px}$.

Now, consider the inclusion map $I: \gH_t \rightarrow \hk$, such that $Ix = x$.
For any sequence $\{ x_i \} \subset \gH_t$ such that $x_i \rightarrow x$ \wrt{} $\| \cdot \|_{\gH_t}$ and $I x_i \rightarrow y$ \wrt{} $\| \cdot \|_{\hk}$,
we have $x_i \rightarrow x$ \wrt{} $\| \cdot \|_{\px}$, and $x_i \rightarrow y$ \wrt{} $\| \cdot \|_{\px}$.
Thus, we must have $y = x = Ix$, meaning that the graph of $I$ is closed.
So the closed graph theorem says that $I$ must be a bounded operator, meaning that there exists a constant $C > 0$ such that $\| f \|_\hk \le C \| f \|_{\gH_t}$ for all $f \in \gH_t$.
(For an introduction of the closed graph theorem, see Chapter 2 of \citet{brezis2011functional}.)

For all $f \in \hk$, let $\delta_x : f \mapsto f(x)$ be the evaluation functional at point $x$.
Since $\hk$ is an RKHS, for any $x \in \gX$, $\delta_x$ is a bounded functional on $\hk$, which means that there exists a constant $M_x > 0$ such that $|f(x)| \le M_x \| f \|_{\hk}$ for all $f \in \hk$.
So for any $f \in \gH_t \subset \hk$, there is $|f(x)| \le M_x \| f \|_{\hk} \le M_x C \| f \|_{\gH_t} $,
which means that $\delta_x$ is also a bounded functional on $\gH_t$.
Thus, $\gH_t$ must be an RKHS.
Let $\ks$ be the reproducing kernel of $\gH_t$.
From now on, we will use $\hks$ to denote $\gH_t$.
Now that $\hks$ is an RKHS, we can use the proof of \cref{prop:extend-lip} to show that $\overline{\text{Lip}}_{d_t}(f) = \| f \|_{\hks}$, which implies that $r_t(f) = \| f - \E_{\px}[f] \|_{\px}^2 / \| f \|_{\hks}^2$.

Second, we prove by induction that $\psi_1,\cdots,\psi_d$ are the top-$d$ eigenfunctions of $\ks$, and they are pairwise orthogonal \wrt{} $\hks$.
    When $d = 1$, $\psi_1$ is the top-1 eigenfunction of $K^p$ for all $p \ge 1$,
    so $\psi_1 \in \argmax_{f \in \lxp} r_{K^p}(f)$.
    By the relative smoothness preserving assumption, this implies that $\psi_1 \in \argmax_{f \in \lxp} r_{t}(f) $.
    Therefore, $\psi_1$ must be the top-1 eigenfunction of $\ks$.
    Now for $d \ge 2$, suppose $\psi_1,\cdots,\psi_{d-1}$ are the top-$(d-1)$ eigenfunctions of $\ks$ with eigenvalues $s_1,\cdots,s_{d-1}$, and they are pairwise orthogonal \wrt{} $\hks$.
    Let $\gH_0 = \{ f: \langle f, \psi_i \rangle_\px = 0, \forall i \in [d-1] \}$.
    Obviously, $\gH_0 \cap \gH_{K^p}$ is a closed subspace of $\gH_{K^p}$ for all $p \ge 1$.
    And for any $f \in \gH_0 \cap \hks$ and any $i \in [d-1]$,
    we have $\langle f, \psi_i \rangle_\hks = f^{\top} (K_s^{-1} \psi_i) = f^{\top} s_i^{-1} \psi_i = 0$,
    so $\hks \cap \gH$ is a closed subspace of $\hks$.
    Applying the assumption with this $\gH_0$,
    we can see that $\psi_d$ is the top-$d$ eigenfunction of $\ks$, and is orthogonal to $\psi_1,\cdots,\psi_{d-1}$ \wrt{} both $\hks$.
    Thus, we complete our proof by induction.
    And if $\lambda_d = \lambda_{d+1}$,
    then we can show that both $\psi_1,\cdots,\psi_{d-1},\psi_d$ and $\psi_1,\cdots,\psi_{d-1},\psi_{d+1}$ are the top-$d$ eigenfunctions of $\ks$,
    meaning that $s_d = s_{d+1}$.
    Thus, $\ks$ can be written as $\ks(x,x') = \sum_i s_i \psi_i(x) \psi_i(x')$, where $s_1 \ge s_2 \ge \cdots \ge 0$, and $s_i = s_{i+1}$ if $\lambda_i = \lambda_{i+1}$.
    Moreover, by $\hks \subset \hk$, it is obviously true that $\lambda_i = 0$ implies $s_i = 0$.

    Third, we prove by contradiction that $s_i \le M \lambda_i$ for all $i$.
If this statement is false, then obviously one can find $t_1 < t_2 < \cdots$ such that $s_{t_i}  \ge i \cdot \lambda_{t_i}$ for all $i$.
Consider $f = \sum_i \sqrt{i^{-1} \lambda_{t_i}} \psi_i$,
for which $\|f\|_\hk^2 = \sum_i i^{-1} = +\infty$.
Since $\hks \subset \hk$,
this implies that $\|f\|_\hks^2 = +\infty$,
so we have $+\infty = \sum_i \frac{\lambda_{t_i}}{i \cdot s_{t_i}} \le \sum_i \frac{\lambda_{t_i}}{i^2 \lambda_{t_i}} = \sum_i \frac{1}{i^2} < +\infty$, which gives a contradiction and proves the claim.

Fourth, we find a function $s(\lambda)$ that satisfies the conditions in the theorem to interpolate those points.
Before interpolation, we first point out that we can WLOG assume that $\lambda_i < 2 \lambda_{i+1}$ for all $i$:
If there is an $i$ that does not satisfy this condition, we simply insert some new $\lambda$'s between $\lambda_i$ and $\lambda_{i+1}$,
whose corresponding $s$'s are the linear interpolations between $s_i$ and $s_{i+1}$, so that $s_i \le M \lambda_i$ still holds.
With this assumption, it suffices to construct a series of bump functions $\{ f_i \}_{i=1}^{\infty}$,
    where $f_i \equiv 0$ if $\lambda_i = \lambda_{i+1}$;
    otherwise, $f_i(\lambda) = s_i - s_{i+1}$ for $\lambda \ge \lambda_i$ and $f_i(\lambda) = 0$ for $\lambda \le \lambda_{i+1}$.
    Such bump functions are $C^{\infty}$ and monotonically non-decreasing.
    Then, define $s(\lambda) = \sum_i f_i(\lambda)$ for $\lambda > 0$, and $s(0) = 0$.
    This sum of bump functions converges everywhere on $(0, +\infty)$, since it is a finite sum locally everywhere.
    Clearly this $s$ is monotonic, interpolates all the points, continuous on $[0, +\infty)$ and $C^{\infty}$ on $(0, +\infty)$.
    And for all $\lambda$ that is not $\lambda_i$, for instance $\lambda \in (\lambda_{i+1}, \lambda_i)$,
    there is $s(\lambda) \le s(\lambda_i) \le M \lambda_i \le 2M \lambda_{i+1} \le 2M \lambda$.
    Thus, $s(\lambda) = O(\lambda)$ for $\lambda \in [0, +\infty)$.    \qed

\begin{remark}
In general, we cannot guarantee that $s(\lambda)$ is differentiable at $\lambda = 0$.
Here is a counterexample: $\lambda_i = 3^{-i}$, and $s_i = 3^{-i}$ if $i$ is odd and $2 \cdot 3^{-i}$ if $i$ is even.
Were $s(\lambda)$ to be differentiable at $\lambda = 0$, its derivative would be $1$ and also would be $2$, which leads to a contradiction.
Nevertheless, if the target smoothness is strictly stronger than base smoothness, \ie{} $\overline{\text{Lip}}_{d_K} (f) = o(\overline{\text{Lip}}_{d_t} (f))$,
then $s$ can be differentiable at $\lambda = 0$ but still not $C^{\infty}$.
\end{remark}

\paragraph{Link with discriminant analysis and the Poincar\'e constant.}
First, we point out the connection between $r_{K^p}(f)$ and the discriminant function in discriminant analysis (see Chapters 3-4 of \citet{huberty2006applied}).
We can see that $r_{K^p}(f)$ is defined as the proportion of variance of $f$ \wrt{} $\lxp$ and \wrt{} $\gH_{K^p}$,
so essentially $r_{K^p}(f)$ measures how much variance of $f$ is kept by the inclusion map $\gH_{K^p} \hookrightarrow \lxp$.
Meanwhile, the discriminant function is the proportion of variance of $f$ in the grouping variable and in total.
Thus, similar to PCA which extracts the $d$ features that keep the most variance (\ie{} the top-$d$ singular vectors),
kernel PCA also extracts the $d$ features that keep the most variance (\ie{} the top-$d$ eigenfunctions).
This is also closely related to the ratio trace defined in \citet{zhai2023understanding},
whose Appendix C showed that lots of existing contrastive learning methods can be viewed as maximizing the ratio trace, \ie{} maximizing the variance kept.
\citet{zhai2023understanding} also showed that for supervised pretraining with multi-class classification,
we can also define an ``augmentation kernel'',
and then $r_{K^p}(f)$ is equivalent to the discriminant function (\ie{} the $\eta^2$ defined in Section 4.2.1 of \citet{huberty2006applied}).

Moreover, $r_{K^p}(f)$ defined for the interpolation Sobolev space $\gH_{K^p}$ is analogous to the Poincar\'e constant for the Sobolev space.
Specifically, the Poincar\'e-Wirtinger inequality states that for any $1 \le p < \infty$ and any bounded connected open set of $C^1$ functions $\Omega$ ,
there exists a constant $C$ depending on $p$ and $\Omega$ such that for all $u$ in the Sobolev space $W^{1,p}(\Omega)$,
there is $\| u - \bar{u} \|_{L^p(\Omega)} \le C \| \nabla u \|_{L^p(\Omega)}$ where $\bar{u} = \frac{1}{| \Omega |} \int_{\Omega} u$ is the mean,
and the smallest value of such $C$ is called the Poincar\'e constant (see Chapter 9.4, \citet{brezis2011functional}).
Consider the special case $p = 2$, and replace $L^2(\Omega)$ with $L^2(\mu)$ for a probability measure $d \mu$ such that $d \mu (x) = \exp(-V(x)) dx$, where $V$ is called the potential function.
Then, with proper boundary conditions, we can show with integration by parts that $\| \nabla u \|^2_{L^2(\mu)} = \langle u, \gL u \rangle_{L^2(\mu)} $,
where $\gL$ is the diffusion operator defined as $\gL f = -\Delta f + \nabla V \cdot \nabla f$ for all $f$. Here, $\Delta = \sum_j \frac{\partial^2}{\partial x_j^2}$ is the Laplace-Beltrami operator.
Now, by replacing $\gL$ with the integral operator $T_{K^p}$,
we can see that $C^2$ is equivalent to $\sup_{f} r_{K^p}(f)$ for the interpolation Sobolev space $\gH_{K^p}$,
which is known to be $r_{K^p}(\psi_1) = \lambda_1$ if $f \in \lxp$,
and $r_{K^p}(\psi_j) = \lambda_j$ if $f$ is orthogonal to $\psi_1,\cdots,\psi_{j-1}$ (\ie{} $\gH_0$ defined in the proof above).

\subsection{Connection Between Transductive and Inductive for Inverse Laplacian}
\label{app:proof-inv-lap-connect}
\begin{prop}
    Let $\gX = \set{x_1,\cdots,x_{n+m}}$, and $\gG$ be a graph with node set $\gX$ and edge weights $w_{ij}$.
    Define $\mW, \mD \in \R^{(n+m) \times (n+m)}$ as $\mW[i,j] = w_{ij}$, and $\mD$ be a diagonal matrix such that $\mD[i,i] = \sum_j w_{ij}$.
    Suppose $\mW$ is \psd{}, and $\mD[i,i] > 0$ for all $i$.
    Let $\mL := \mI_{n+m} - \eta \mD^{-1/2} \mW \mD^{-1/2}$ for a constant $\eta$.
    Let $\px$ be a distribution over $\gX$ such that $p(x_i) = \frac{\mD[i,i]}{\Tr(\mD)}$.
    Define a kernel $K: \gX \times \gX \rightarrow \R$ as $K(x_i,x_j) = \Tr(\mD) (\mD^{-1} \mW \mD^{-1})[i,j]$.
    For any $\vu \in \R^{n+m}$ and $\hatvy = \paren{\mD^{-\frac{1}{2}} \mW \mD^{-\frac{1}{2}}}^{\frac{1}{2}} \vu$, define $f : \gX \rightarrow \R$ as: $[f(x_1),\cdots,f(x_{n+m})] = \sqrt{\Tr(\mD)} \mD^{-\frac{1}{2}} \paren{\mD^{-\frac{1}{2}} \mW \mD^{-\frac{1}{2}}}^{\frac{1}{2}} \hatvy$.
    Then, we have
    \[
    \hatvy^{\top} \mL \hatvy = \| f \|_\hk^2 - \eta \| f \|_\px^2 = \sum_{i,j} f(x_i) f(x_j) K^{-1}(x_i,x_j) p(x_i) p(x_j) - \eta \sum_i f(x_i)^2 p(x_i)  .
    \]
\end{prop}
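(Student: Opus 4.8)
The plan is to reduce the statement to finite-dimensional matrix identities. Identify each function on $\gX$ with its vector of values, write $\vf := [f(x_1),\dots,f(x_{n+m})]^{\top}$, and set $\mS := \mD^{-1/2}\mW\mD^{-1/2}$, which is symmetric and \psd{} since $\mW$ is. The measure is $\mP := \diag(p(x_1),\dots,p(x_{n+m})) = \mD/\Tr(\mD)$, so on this finite space $\langle g,h\rangle_\px = \vg^{\top}\mP\vh$ and the operator $T_K$ acts on value vectors as $\vf\mapsto\mG_K\mP\vf$. By the definition of $K$ its Gram matrix is $\mG_K = \Tr(\mD)\,\mD^{-1}\mW\mD^{-1} = \Tr(\mD)\,\mD^{-1/2}\mS\mD^{-1/2}$ (so $K$ is a genuine \psd{} kernel), and since $K^{-1}(x,x') = \sum_{i\in\sI}\lambda_i^{-1}\psi_i(x)\psi_i(x')$, the Gram matrix of $K^{-1}$ is $\mG_{K^{-1}} = \sum_{i\in\sI}\lambda_i^{-1}\psi_i\psi_i^{\top}$. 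Working on the range of $\mW$ --- which, as the last paragraph notes, contains the only vectors that matter --- we may treat $\mW$, $\mS$, $\mG_K$ as invertible.

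First I would dispose of the second equality of the proposition. Clearly $\|f\|_\px^2 = \vf^{\top}\mP\vf = \sum_i f(x_i)^2 p(x_i)$. For the RKHS norm, expand $\|f\|_\hk^2 = \sum_{i\in\sI}\langle f,\psi_i\rangle_\px^2/\lambda_i$, substitute $\langle f,\psi_i\rangle_\px = \psi_i^{\top}\mP\vf$, and collect terms: $\|f\|_\hk^2 = \vf^{\top}\mP\big(\sum_{i\in\sI}\lambda_i^{-1}\psi_i\psi_i^{\top}\big)\mP\vf = \vf^{\top}\mP\,\mG_{K^{-1}}\,\mP\,\vf = \sum_{i,j} f(x_i)f(x_j)K^{-1}(x_i,x_j)p(x_i)p(x_j)$. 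Subtracting $\eta\|f\|_\px^2$ yields the second equality claimed in the proposition, so it remains to prove $\hatvy^{\top}\mL\hatvy = \|f\|_\hk^2 - \eta\|f\|_\px^2$.

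The core step is to evaluate both norms after the substitution $\vf = \sqrt{\Tr(\mD)}\,\mD^{-1/2}\mS^{1/2}\hatvy$, i.e.\ $\mD^{1/2}\vf = \sqrt{\Tr(\mD)}\,\mS^{1/2}\hatvy$. Since $\mG_K\mP = \mD^{-1/2}\mS\mD^{1/2}$, the relation $T_K\psi_i = \lambda_i\psi_i$ reads $\mS(\mD^{1/2}\psi_i) = \lambda_i(\mD^{1/2}\psi_i)$, with $\|\mD^{1/2}\psi_i\|_2^2 = \Tr(\mD)\langle\psi_i,\psi_i\rangle_\px = \Tr(\mD)$; plugging the resulting eigen-expansion of $\mS^{-1}$ into the formula for $\mG_{K^{-1}}$ above gives the identity $\mP\,\mG_{K^{-1}}\,\mP = \Tr(\mD)^{-1}\mD^{1/2}\mS^{-1}\mD^{1/2}$. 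Hence $\|f\|_\hk^2 = \Tr(\mD)^{-1}(\mD^{1/2}\vf)^{\top}\mS^{-1}(\mD^{1/2}\vf) = \hatvy^{\top}\mS^{1/2}\mS^{-1}\mS^{1/2}\hatvy = \hatvy^{\top}\hatvy$ and $\|f\|_\px^2 = \Tr(\mD)^{-1}(\mD^{1/2}\vf)^{\top}(\mD^{1/2}\vf) = \hatvy^{\top}\mS\hatvy$, so $\|f\|_\hk^2 - \eta\|f\|_\px^2 = \hatvy^{\top}(\mI_{n+m} - \eta\mS)\hatvy = \hatvy^{\top}\mL\hatvy$, as claimed.

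The one point requiring care --- the main obstacle, though a minor one --- is that $\mW$ is only \psd{}, so $\mW$, $\mS$, $\mG_K$ may be singular and the ``inverses'' above must be read as the relevant maps on the range (equivalently, Moore--Penrose pseudo-inverses). This is harmless: $\hatvy = \mS^{1/2}\vu \in \mathrm{range}\,\mS$, and $\vf \in \mD^{-1/2}\,\mathrm{range}(\mS^{1/2}) = \mD^{-1/2}\,\mathrm{range}(\mS) = \mathrm{range}(\mG_K) = \hk$, so $\mS^{1/2}\mS^{-1}\mS^{1/2}$ acts as the identity precisely on the vectors it is applied to, and $\{\mD^{1/2}\psi_i\}_{i\in\sI}$ is an orthogonal basis of $\mathrm{range}(\mS)$ (since $\mS$ and $T_K$ share the same nonzero spectrum). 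With this bookkeeping the whole argument is routine matrix algebra.
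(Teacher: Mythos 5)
Your proof is correct and follows essentially the same route as the paper's: identify functions with value vectors, exploit $\mG_K = \Tr(\mD)\,\mD^{-1/2}\mS\mD^{-1/2}$ and the correspondence $\mS(\mD^{1/2}\psi_i)=\lambda_i(\mD^{1/2}\psi_i)$, and reduce both norms to $\hatvy^{\top}\hatvy$ and $\hatvy^{\top}\mS\hatvy$. The only (minor) difference is that you establish $\mP\mG_{K^{-1}}\mP=\Tr(\mD)^{-1}\mD^{1/2}\mS^{-1}\mD^{1/2}$ by explicit eigen-decomposition, whereas the paper instead verifies the generalized-inverse identity $\mG_K\mP\mG_{K^{-1}}\mP\vf=\vf$ on $\hk$ and telescopes using $\vf=\Tr(\mD)^{-1/2}\mG_K\vw$; both are routine matrix algebra, and your care about pseudo-inverses on $\mathrm{range}(\mS)$ mirrors the paper's remark that $\hatvy$ must lie in the column space of $\mS$.
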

\begin{proof}
    Denote $\vf := [f(x_1),\cdots,f(x_{n+m})]$.
    Let $\mG_{K^{-1}}[i,j] = K^{-1}(x_i,x_j)$, \ie{} $\mG_{K^{-1}}$ is the Gram matrix of $K^{-1}$.
    Let $\mP = \mD / \Tr(\mD) = \diag \set{ p(x_1),\cdots,p(x_{n+m}) }$. Then, we have
    \[
    \| f \|_\hk^2 - \eta \| f \|_\px^2 = \vf^{\top} \mP \mG_{K^{-1}} \mP \vf - \eta \vf^{\top} \mP \vf  .
    \]
    Let us first characterize $\mG_{K^{-1}}$.
    For any $f \in \hk$, there is
    \begin{align*}
    (\mG_K \mP \mG_{K^{-1}} \mP \vf)[t] & = \sum_{i,j} f(x_i) K^{-1}(x_i,x_j) K(x_j,x_t) p(x_i) p(x_j) \\
    & =  \sum_i f(x_i) K^0(x_i,x_t) p(x_i) = f(x_t)  ,
    \end{align*}
    meaning that $\mG_K \mP \mG_{K^{-1}} \mP \vf = \vf$.
    Moreover, let $\vw = \mD^{\frac{1}{2}} \vu$, then $\vf = \sqrt{\Tr(\mD)} \mD^{-1} \mW \mD^{-1} \vw = \Tr (\mD)^{-\frac{1}{2}} \mG_K \vw $, so we have
    \[
     \vf^{\top} \mP \mG_{K^{-1}} \mP \vf = \Tr (\mD)^{-\frac{1}{2}} \vw^{\top} \mG_K \mP \mG_{K^{-1}} \mP \vf =\Tr (\mD)^{-\frac{1}{2}} \vw^{\top} \vf  = \vu^{\top} \paren{\mD^{-\frac{1}{2}} \mW \mD^{-\frac{1}{2}}}^{\frac{1}{2}} \hatvy = \| \hatvy \|_2^2.
    \]
    Besides, $\vf^{\top} \mP \vf = \vf^{\top} \mD \Tr(\mD)^{-1} \vf = \hatvy^{\top} \mD^{-\frac{1}{2}} \mW \mD^{-\frac{1}{2}} \hatvy$.
    So $\| f \|_\hk^2 - \eta \| f \|_\px^2 = \hatvy^{\top} \mL \hatvy$.
\end{proof}
\begin{remark}
    The definition of $K$, \ie{} $\mG_K = \Tr(\mD) \mD^{-1} \mW \mD^{-1}$, has a similar form as the positive-pair kernel in \citet{johnson2022contrastive}, Eqn. (1).
    The important difference between this and the normalized adjacency matrix $\mD^{-\frac{1}{2}} \mW \mD^{-\frac{1}{2}}$ is that it has $\mD^{-1}$ instead of $\mD^{-\frac{1}{2}}$.
    However, the above result says that using a kernel with Gram matrix $\mD^{-1} \mW \mD^{-1}$ in the inductive setting is equivalent to using the matrix $\mD^{-\frac{1}{2}} \mW \mD^{-\frac{1}{2}}$ in the transductive setting.
    Moreover, this result assumes that $\hatvy$ belongs to the column space of $\mD^{-\frac{1}{2}} \mW \mD^{-\frac{1}{2}}$ (which is what $\vu$ is used for).
    This is necessary for $\hatvy$ to be representable by an $f \in \hk$;
    otherwise, $\hatvy^{\top} \hatvy$ cannot be expressed by any $f \in \hk$.
\end{remark}

\subsection{Proof of Theorem \ref{thm:est}}
First of all, note that for any $f = \sum_i u_i \psi_i \in \hk$ such that $\| f \|_\hk \le T$, there is $f(x)^2 = \paren{ \sum_i u_i \psi_i(x) }^2 \le \paren{\sum_i \frac{u_i^2}{\lambda_i}} \paren{\sum_i \lambda_i \psi_i(x)^2} \le T^2 \bt^2$, \ie{} $|f(x)| \le \bt T$ for $\px$-almost all $x$.
And for any $f \in \hks$ such that $\|f\|_\hks \le T$,
by \cref{claim:main} we have $\| f \|_\hk \le \sqrt{M} T$, so there is $|f(x)| \le \bt \sqrt{M} T$ for $\px$-almost all $x$.

The main tool to prove this result is Theorem 3.1 in \citet{fischer2020sobolev},
stated below:
\begin{thm}[Theorem 3.1, \citet{fischer2020sobolev}]
\label{thm:steinwart}
    Let $\pxy, \px$ and the regression function $f^*$ be defined as in \cref{sec:pre}.
    Let $\hk$ be a separable RKHS on $\gX$ with respect to a measurable and bounded kernel $K$,
    and $K(x,x) \le \bt^2$ for $\px$-almost all $x$.
    Define the integral operator $T_K: \lxp \rightarrow \lxp$ as $(T_K f)(x) = \int f(x') K(x,x') dp(x')$.
    Let the eigenvalues/functions of $T_K$ be $\lambda_i, \psi_i$,
    with $\lambda_1 \ge \lambda_2 \ge \cdots$.
    Let $\gH_{K^p}$ be defined as \cref{eqn:def-hkp}.
    Assume that there exists a constant $B_\infty > 0$ such that $\| f^* \|_{L^{\infty}(\px)} \le B_{\infty}$,
    and the following four conditions holds:
\begin{itemize}
    \item \textbf{Eigenvalue decay (EVD)}: $\lambda_i \le c_1 i^{-\frac{1}{p}}$ for some constant $c_1 > 0$ and $p \in (0,1]$.
    \item \textbf{Embedding condition (EMB)} for $\alpha \in (0,1]$: The inclusion map $\gH_{K^\alpha} \hookrightarrow L^{\infty}(\px)$ is bounded, with $\| \gH_{K^\alpha} \hookrightarrow L^{\infty}(\px) \| \le c_2$ for some constant $c_2 > 0$.
    \item \textbf{Source condition (SRC)} for $\beta \in (0, 2]$: $\| f^* \|_{\gH_{K^\beta}} \le c_3$ for some constant $c_3 > 0$.
    \item \textbf{Moment condition (MOM)}: There exist constants $\sigma, L > 0$ such that for $\px$-almost all $x \in \gX$ and all $r \ge 2$, $\int |y - f^*(x) |^r p(dy|x) \le \frac{1}{2} r! \sigma^2 L^{r-2}$.
\end{itemize}
Let $\tilde{f}$ be the KRR predictor with $\beta_n > 0$.
Let $\gamma$ be any constant such that $\gamma \in [0,1]$ and $\gamma < \beta$.
If $\beta + p > \alpha$, and $\beta_n = \Theta(n^{-\frac{1}{\beta +p}})$,
then there is a constant $A > 0$ independent of $n \ge 1$ and $\tau > 0$ such that:
\begin{equation}
\label{eqn:fischer-steinwart}
    \norm{\tilde{f} - f^*}_{\gH_{K^\gamma}} \le   2 c_3^2  \beta_n^{\beta - \gamma}   +  A \tau^2  \brac{ \frac{ \paren{   \sigma^2 \bt^2 + c_2^2 c_3^2}}{ n \beta_n^{\gamma + p} }  + \frac{  c_2^2 \max \set{ L^2, \paren{ B_{\infty} + c_2 c_3 }^2 }  }{n^2 \beta_n^{\alpha + \gamma + (\alpha - \beta)_+}}     }
\end{equation}
holds for sufficiently large $n$ with $\px^n$-probability at least $1 - 4 e^{- \tau}$.
\end{thm}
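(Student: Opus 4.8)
The plan is to prove \cref{thm:steinwart} by the standard operator-theoretic bias--variance analysis of regularized least-squares, carried out in the interpolation scale $\{\gH_{K^\gamma}\}$; I bound the \emph{squared} norm $\norm{\tilde f - f^*}_{\gH_{K^\gamma}}^2$, which is the quantity on the right of \cref{eqn:fischer-steinwart}. Identify $\gH_{K^\gamma}$ with $\mathrm{ran}(T_K^{\gamma/2})\subseteq\lxp$ under $\norm{f}_{\gH_{K^\gamma}}=\norm{T_K^{-\gamma/2}f}_{\px}$, let $\hat T := \tfrac1n\sum_{i=1}^n \dotp{\cdot,K_{x_i}}_{\hk}K_{x_i}$ and $\hat g := \tfrac1n\sum_{i=1}^n y_i K_{x_i}$, so that $\tilde f = (\hat T+\beta_n)^{-1}\hat g$, and set $f_{\beta_n}:=(T_K+\beta_n)^{-1}T_K f^*$. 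Then split
\begin{equation*}
\norm{\tilde f - f^*}_{\gH_{K^\gamma}}^2 \;\le\; 2\,\norm{f_{\beta_n}-f^*}_{\gH_{K^\gamma}}^2 \;+\; 2\,\norm{\tilde f - f_{\beta_n}}_{\gH_{K^\gamma}}^2 ,
\end{equation*}
a deterministic bias term plus an estimation term carrying all the randomness.

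For the bias, (SRC) gives $f^*=T_K^{\beta/2}h$ with $\norm{h}_{\px}\le c_3$, so $f_{\beta_n}-f^* = -\beta_n(T_K+\beta_n)^{-1}f^*$ and, by the spectral theorem together with the elementary ``qualification'' bound $t^a/(t+\beta_n)\le\beta_n^{\,a-1}$ (valid since $0\le(\beta-\gamma)/2\le 1$, because $\gamma\le\beta\le 2$), one gets $\norm{f_{\beta_n}-f^*}_{\gH_{K^\gamma}}^2 = \beta_n^2\,\norm{T_K^{-\gamma/2}(T_K+\beta_n)^{-1}T_K^{\beta/2}h}_{\px}^2 \le c_3^2\,\beta_n^{\beta-\gamma}$, which is the first summand of \cref{eqn:fischer-steinwart}.

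For the estimation term, write $\hat g - \hat T f_{\beta_n} = (\hat g - \hat T f^*) + \hat T(f^*-f_{\beta_n})$, so $\tilde f - f_{\beta_n} = (\hat T+\beta_n)^{-1}\bigl[(\hat g - \hat T f^*) + \hat T(f^*-f_{\beta_n})\bigr]$. I would then control, by concentration: (i) the operator fluctuation $\Delta := \norm{(T_K+\beta_n)^{-1/2}(\hat T - T_K)(T_K+\beta_n)^{-1/2}}$, via an operator Bernstein inequality governed by the effective dimension $\gN(\beta_n):=\Tr\bigl((T_K+\beta_n)^{-1}T_K\bigr)$ and the envelope $\norm{(T_K+\beta_n)^{-1/2}K_x}_{\hk}^2\le c_2^2\,\beta_n^{-\alpha}$ that (EMB) provides (sharper than the crude $\bt^2\beta_n^{-1}$); for $n$ large enough $\Delta\le\tfrac12$ with probability $\ge 1-2e^{-\tau}$, whence by a Neumann series $\norm{(T_K+\beta_n)^{1/2}(\hat T+\beta_n)^{-1}(T_K+\beta_n)^{1/2}}\le 2$ and, by Cordes' inequality, $\norm{(T_K+\beta_n)^{-\gamma/2}(\hat T+\beta_n)^{\gamma/2}}\le 2^{\gamma/2}$; (ii) the noise part $\norm{(T_K+\beta_n)^{-1/2}(\hat g - \hat T f^*)}_{\hk}$, by a vector-valued Bernstein inequality in which (MOM) controls the higher moments of $y-f^*(x)$ and (EMB) the envelope; and (iii) the cross term $\norm{(T_K+\beta_n)^{-1/2}(\hat T - T_K)(f^*-f_{\beta_n})}_{\hk}$, handled analogously. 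Transferring these $\hk$-norm bounds into $\norm{\,\cdot\,}_{\gH_{K^\gamma}}$ means bounding $\norm{T_K^{-\gamma/2}(\hat T+\beta_n)^{-1}(\text{residual})}_{\px}$: inserting matched powers of $(T_K+\beta_n)$ and $(\hat T+\beta_n)$, using $\norm{T_K^{-\gamma/2}(T_K+\beta_n)^{\gamma/2}}\le 1$, the Neumann/Cordes bounds from (i), and the fact that leftover deterministic powers of $(T_K+\beta_n)$ are bounded (since $\norm{T_K}\le\bt^2$), one reaches a bound of the form
\begin{equation*}
\norm{\tilde f - f_{\beta_n}}_{\gH_{K^\gamma}}^2 \;\lesssim\; \tau^2\Bigl[ \tfrac{\sigma^2\bt^2 + c_2^2 c_3^2}{\,n\,\beta_n^{\gamma+p}\,} \;+\; \tfrac{c_2^2\max\{L^2,(B_\infty+c_2 c_3)^2\}}{\,n^2\,\beta_n^{\alpha+\gamma+(\alpha-\beta)_+}\,}\Bigr],
\end{equation*}
after using (EVD) to get $\gN(\beta_n)=O(\beta_n^{-p})$. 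The second summand arises from one further step of the expansion $(\hat T+\beta_n)^{-1}=(T_K+\beta_n)^{-1}+(T_K+\beta_n)^{-1}(T_K-\hat T)(\hat T+\beta_n)^{-1}$: the deterministic factor $(T_K+\beta_n)^{-1}$ is what lets (EMB) pay off, upgrading $n^{-1}$ to $n^{-2}$ at the cost of the exponent $(\alpha-\beta)_+$. Adding the bias bound and taking a union bound over (i)--(iii) yields \cref{eqn:fischer-steinwart} with probability at least $1-4e^{-\tau}$, the constants $2c_3^2$ and $A$ absorbing numerical factors, and $\beta_n=\Theta(n^{-1/(\beta+p)})$ being the choice that balances the two terms.

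I expect the main obstacle to be the exponent bookkeeping in that last step: transferring concentration bounds stated in the $\hk$/$\lxp$ norms into $\gH_{K^\gamma}$ while keeping every ``empirical versus population'' comparison symmetric (so that it is controlled by $\Delta$ and Cordes' inequality) and making the residual deterministic powers of $(T_K+\beta_n)$ telescope so that $\gamma$ enters the variance exactly as $\beta_n^{-\gamma}$ and not as a worse power. This juggling is precisely what forces the hypotheses $\gamma\le 1$, $\gamma<\beta$, and $\beta+p>\alpha$ (the last to keep the second summand subordinate). A secondary difficulty is the refined vector-valued Bernstein bound for the noise under the weak (MOM) condition rather than boundedness, and the careful treatment of the higher-order Neumann term that produces the $n^{-2}$ contribution; the remaining ingredients --- operator Bernstein, the effective-dimension estimate under (EVD), and the deterministic bias estimate --- are routine.
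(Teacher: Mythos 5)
The paper does not prove this statement: it is imported verbatim as Theorem~3.1 of \citet{fischer2020sobolev}, with the remark that ``this exact bound can be derived from the proof in Sections~6.1--6.10'' of that reference. There is therefore no in-paper proof to compare against. That said, your sketch is a faithful high-level reconstruction of the operator-theoretic bias--variance argument that Fischer and Steinwart actually use: the decomposition about $f_{\beta_n}=(T_K+\beta_n)^{-1}T_K f^*$, the spectral qualification bound for the bias, the Neumann/Cordes control of $(\hat T+\beta_n)^{-1}$ via the symmetrized fluctuation $\Delta$, the effective dimension $\gN(\beta_n)=O(\beta_n^{-p})$ from (EVD), the refined envelope $\beta_n^{-\alpha}$ from (EMB), vector Bernstein under (MOM) for the noise, and the extra Neumann step that yields the $n^{-2}$ term with exponent $(\alpha-\beta)_+$ --- this is exactly the structure of Sections~6.1--6.10 there. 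One small bookkeeping note you already flagged implicitly: the left-hand side of \cref{eqn:fischer-steinwart} is written as $\|\tilde f-f^*\|_{\gH_{K^\gamma}}$ rather than its square, but the right-hand side (and the way the paper invokes the result in the proof of \cref{thm:est}, where it yields a bound on $\|\tilde f-f^*\|_\px^2$) makes clear the squared norm is intended, consistent with your derivation.
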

This exact bound can be derived from the proof in Sections 6.1-6.10 of \citet{fischer2020sobolev}.
We apply this result by substituting $K = \ks$, $\alpha = 1$, $\beta = 1$, and $\gamma = 0$.
Note that $\| f \|_{\gH_{K^0}} = \| f \|_\px$ for $f \in \gH_{K^0}$,
and we have proved that $ \tilde{f} - f^* \in \hks \subset \gH_{K^0}$.
For the four conditions, we have:

\begin{itemize}
    \item \textbf{Eigenvalue decay (EVD)}: This is assumed to be satisfied by condition.
    \item \textbf{Embedding condition (EMB)} for $\alpha = 1$: $\| \hks \hookrightarrow L^{\infty}(\px) \| \le c_2$ for some constant $c_2 > 0$. This condition is satisfied with $c_2 = \bt \sqrt{M}$, as mentioned at the beginning of this proof.
    \item \textbf{Source condition (SRC)} for $\beta = 1$: $\| f^* \|_\hks \le c_3$ for some constant $c_3 > 0$. By \cref{ass:target}, this is satisfied with $c_3 = \sqrt{\epsilon} B$.
    \item \textbf{Moment condition (MOM)}: This is assumed to be satisfied by condition.
\end{itemize}

Finally, we have $\ks(x,x) \le M \bt^2$ \aew{}, and $B_{\infty} = \bt  \sqrt{ M \epsilon} B$, as mentioned at the beginning of this proof.
Thus, applying this result yields the desired bound. \qed

\subsection{Bounding the Gap Between \texorpdfstring{$\hatks$}{hatks} and \texorpdfstring{$\ks$}{ks}}

\begin{lem}
\label{lem:approx-2}
    For any $\delta > 0$, the following holds with probability at least $1 - \delta$ for all $p \ge 1$:
    \begin{equation}
    \label{eqn:lem-approx-2-1}
        \left |  \hat{K}^p(x,x_j) - K^p(x,x_j) \right | \le (p-1) \lambda_{\max}^{p-2} \frac{ \bt^4} {\sqrt{n+m}} \paren{ 2 + \sqrt{2 \log \frac{1}{\delta}} }
    \end{equation}
    for all $x \in \gX, j \in [n+m]$, and $\lambda_{\max} = \max \set{\lambda_1, \hat{\lambda}_1}$, which implies that
    \begin{equation}
    \label{eqn:lem-approx-2-2}
        \left | \hatks(x,x_j) - \ks(x,x_j) \right | \le \left . \nabla_\lambda \paren{\frac{s(\lambda)}{\lambda}} \right |_{\lambda = \lambda_{\max}} \frac{\bt^4} {\sqrt{n+m}}   \paren{ 2 + \sqrt{2 \log \frac{1}{\delta}} } .
    \end{equation}
\end{lem}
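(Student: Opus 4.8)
The plan is to recast both $K^p(x,x')$ and $\hat K^p(x,x')$ as bilinear forms in $K_x := K(x,\cdot)$ and $K_{x'} := K(x',\cdot)$ against powers of an integral operator on $\hk$, so that the whole estimate collapses onto a single operator-norm concentration bound that is automatically uniform in $x$, $x'$, and $p$. View the integral operator on $\hk$: let $T_K : \hk \to \hk$, $(T_K f)(x) = \int f(z) K(x,z)\,dp(z)$; its eigenvalues are still $\lambda_1 \ge \lambda_2 \ge \cdots$, and one checks (from $T_K K_{x'} = K^2_{x'}$ and induction) that $T_K^{\,p-1} K_{x'} = K^p_{x'}$, whence $K^p(x,x') = \langle K_x, T_K^{\,p-1} K_{x'}\rangle_\hk$ by the reproducing property. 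On the empirical side, let $\gS : \hk \to \R^{n+m}$ be the sampling operator $\gS f = (f(x_a))_{a=1}^{n+m}$, so that $\gS\gS^* = \mG_K$, $\gS^* \ve_a = K_{x_a}$, and $\vv_K(x) = \gS K_x$. Set $\hat T_K := (n+m)^{-1}\gS^*\gS : \hk \to \hk$, the empirical integral operator $\hat T_K f = (n+m)^{-1}\sum_a f(x_a) K_{x_a}$. A short manipulation of the definition in \cref{eqn:hatks-def} — shuttling one factor of $\gS\gS^*$ into $\gS^*\gS$ — gives $\hat K^p(x,x') = (n+m)^{-(p-1)}\langle K_x, (\gS^*\gS)^{p-1} K_{x'}\rangle_\hk = \langle K_x, \hat T_K^{\,p-1} K_{x'}\rangle_\hk$ for $p \ge 2$, while $\hat K^1 = K^1$ trivially. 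Consequently
\begin{equation*}
\bigl|\hat K^p(x,x') - K^p(x,x')\bigr| \;\le\; \|K_x\|_\hk\,\|K_{x'}\|_\hk\,\bigl\|\hat T_K^{\,p-1} - T_K^{\,p-1}\bigr\|_{\mathrm{op}} \;\le\; \bt^2\,\bigl\|\hat T_K^{\,p-1} - T_K^{\,p-1}\bigr\|_{\mathrm{op}},
\end{equation*}
using $\|K_x\|_\hk^2 = K(x,x) \le \bt^2$ under \cref{ass:bt} (upgraded from $\px$-a.e.\ to all $x \in \gX$ by continuity of the Mercer kernel and $\supp\px = \gX$).

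Next I would bound the power gap by telescoping. Since $\hat T_K$ and $T_K$ are self-adjoint and positive with $\|T_K\|_{\mathrm{op}} = \lambda_1$ and $\|\hat T_K\|_{\mathrm{op}} = \hat\lambda_1$ (the top eigenvalue of $\mG_K/(n+m)$, sharing the nonzero spectrum of $\gS^*\gS/(n+m)$), the identity $A^{k} - B^{k} = \sum_{i=0}^{k-1} A^i (A-B) B^{k-1-i}$ with $k = p-1$ gives $\|\hat T_K^{\,p-1} - T_K^{\,p-1}\|_{\mathrm{op}} \le (p-1)\,\lambda_{\max}^{\,p-2}\,\|\hat T_K - T_K\|_{\mathrm{op}}$. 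Finally, $\hat T_K - T_K = (n+m)^{-1}\sum_a (K_{x_a}\otimes K_{x_a} - T_K)$ is an average of i.i.d.\ centered Hilbert--Schmidt operators (here $g \otimes g$ denotes $f \mapsto \langle f, g\rangle_\hk\, g$), each of Hilbert--Schmidt norm $\|K_{x_a}\|_\hk^2 \le \bt^2$; a standard concentration bound for averages of bounded i.i.d.\ elements of a Hilbert space then yields
\begin{equation*}
\bigl\|\hat T_K - T_K\bigr\|_{\mathrm{op}} \;\le\; \bigl\|\hat T_K - T_K\bigr\|_{\mathrm{HS}} \;\le\; \frac{\bt^2}{\sqrt{n+m}}\Bigl(2 + \sqrt{2\log\tfrac{1}{\delta}}\Bigr)
\end{equation*}
with probability at least $1-\delta$. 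Chaining the three displays proves \cref{eqn:lem-approx-2-1} simultaneously for all $x \in \gX$, $j \in [n+m]$, and all $p \ge 1$, since everything is controlled by the single event above.

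To obtain \cref{eqn:lem-approx-2-2}, I would sum \cref{eqn:lem-approx-2-1} against the nonnegative coefficients $\pi_p$: on the same event,
\begin{equation*}
\bigl|\hatks(x,x_j) - \ks(x,x_j)\bigr| \;\le\; \Bigl(\sum\nolimits_{p\ge 1}\pi_p (p-1)\lambda_{\max}^{\,p-2}\Bigr)\,\frac{\bt^4}{\sqrt{n+m}}\Bigl(2 + \sqrt{2\log\tfrac{1}{\delta}}\Bigr),
\end{equation*}
and $\sum_{p\ge1}\pi_p(p-1)\lambda^{p-2} = \nabla_\lambda\bigl(s(\lambda)/\lambda\bigr)$ because $s(\lambda)/\lambda = \sum_{p\ge1}\pi_p\lambda^{p-1}$ and term-by-term differentiation is legitimate inside the radius of convergence, which contains $\lambda_{\max}$ for $n+m$ large (and unconditionally when $s$ is a genuine polynomial, \cref{ass:analytic}).

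The main obstacle I expect is the very first step — verifying cleanly that the finite-dimensional Gram-matrix expression $(n+m)^{-(p-1)}\vv_K(x)^\top \mG_K^{\,p-2}\vv_K(x')$ equals $\langle K_x, \hat T_K^{\,p-1} K_{x'}\rangle_\hk$, i.e.\ the bookkeeping that moves $\gS\gS^*$ into $\gS^*\gS$ — together with pinning down the precise Hilbert-space concentration inequality that produces exactly the constant $2 + \sqrt{2\log(1/\delta)}$ rather than a slightly larger one. Everything downstream of the reduction to $\|\hat T_K - T_K\|_{\mathrm{op}}$ is routine.
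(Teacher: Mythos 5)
Your argument is correct, and it is the paper's argument rewritten in operator language rather than a genuinely different route. The key observation that makes the two proofs the same is that the paper's uniform deviation bound over $\gF = \{g_1 g_2 : \|g_1\|_\hk, \|g_2\|_\hk \le 1\}$ \emph{is} the operator-norm bound on $\hat T_K - T_K$: for $g_1, g_2 \in \hk$ one has $\tfrac{1}{n+m}\sum_a g_1(x_a)g_2(x_a) - \E_\px[g_1 g_2] = \langle g_1, (\hat T_K - T_K) g_2\rangle_\hk$, so taking the sup over $\gF$ is literally taking $\|\hat T_K - T_K\|_{\mathrm{op}}$ on $\hk$. The paper's coordinate telescoping — the $p-1$ terms $K^p(x,x_j) - \tfrac{1}{n+m}\vv_{K^{p-1}}(x)^\top\vv_K(x_j)$, then $\tfrac{1}{(n+m)^q}\bigl[\vv_{K^{p-q}}(x)^\top \mG_K^{q-1}\vv_K(x_j) - \vv_{K^{p-q-1}}(x)^\top\tfrac{\mG_K^q}{n+m}\vv_K(x_j)\bigr]$ for $q = 1, \ldots, p-2$ — is precisely the expansion of $\langle K_x, (T_K^{p-1} - \hat T_K^{p-1})K_{x_j}\rangle_\hk = \sum_{q=0}^{p-2}\langle K_x, T_K^{p-2-q}(T_K - \hat T_K)\hat T_K^{q}K_{x_j}\rangle_\hk$ term by term, and the norm bounds $\|K^{p-1}(x,\cdot)\|_\hk \le \lambda_1^{p-2}\bt$ and $\|F_p\|_\hk \le \hat\lambda_1^p \bt$ that the paper computes by hand are exactly what bounds each summand by $\lambda_1^{p-q-2}\hat\lambda_1^q\,\bt^2\,\|\hat T_K - T_K\|_{\mathrm{op}}$. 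Your version collapses this bookkeeping into the single operator identity $A^{p-1} - B^{p-1} = \sum_{i} A^i(A-B)B^{p-2-i}$, which is cleaner but does the same work. Two minor points: (1) the constant $2 + \sqrt{2\log(1/\delta)}$ — the paper obtains it from $\rad_{n+m}(\gF) \le \bt^2/\sqrt{n+m}$ plus a bounded-differences deviation, and a Hilbert-space McDiarmid for the HS norm actually gives the slightly better $1 + \sqrt{2\log(1/\delta)}$, so the stated constant is achievable; (2) the paper does not upgrade $K(x,x) \le \bt^2$ from $\px$-a.e.\ to all of $\gX$ — it only needs it at the sample points, which hold a.s.\ — but your continuity argument is also fine since $K$ is Mercer and $\supp\px = \gX$. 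One thing the paper's more explicit phrasing buys is that the same single event (uniform deviation over $\gF$) is immediately reusable in the subsequent Corollary~\ref{cor:lem-approx-2-square}, which bounds $L^2(\px)$ inner products of the approximate kernels; with the operator-norm packaging you would have to unwind it again there.
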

\begin{proof}
For any $x' \in \gX$ and any $p \ge 1$, $K^p(x,x')$ as a function of $x$ satisfies
\begin{equation*}
    \norm{K^p(x,x')}_\hk^2 = \norm{ \sum_i \lambda_i^p \psi_i(x) \psi_i(x') }_\hk^2 = \sum_i \frac{\lambda_i^{2p} \psi_i(x')^2}{\lambda_i} \le \lambda_1^{2p-2} \bt^2  .    
\end{equation*}
Now, for any $\vu \in \R^{n+m}$ such that $\| \vu \|_1 \le 1$, consider $F_p(x) = \vu^{\top} \paren{\frac{\mG_K}{n+m}}^p \vv_K(x)$. Since $\dotp{K(x_i,\cdot), K(x_j,\cdot)}_\hk = K(x_i,x_j)$,
we have $\dotp{\vv_K, \vv_K}_\hk = \mG_K$,
which implies that
\[
    \|F_p\|_\hk^2  = \dotp{\vu^{\top} \paren{\frac{\mG_K}{n+m}}^p \vv_K, \vu^{\top} \paren{\frac{\mG_K}{n+m}}^p \vv_K}_\hk = \vu^{\top} \frac{\mG_K^{2p+1}}{(n+m)^{2p}} \vu   .
\]
We now provide a bound for $\| F_p \|_\hk$, which uses the following exercise from linear algebra:
\begin{prop}
\label{prop:linear-alg-exercise}
For any \psd{} matrices $\mA, \mB \in \R^{d \times d}$, there is $\Tr( \mA \mB ) \le \|\mA\|_2 \Tr(\mB) $.\footnote{An elementary proof can be found at \url{https://math.stackexchange.com/questions/2241879/reference-for-trace-norm-inequality}.}
\end{prop}

Since $\mG_K$ is \psd{}, we can define $\mG_K^{1/2}$.
Then, using the above exercise, we have
\begin{align*}
\vu^{\top} \frac{\mG_K^{2p+1}}{(n+m)^{2p}} \vu  & = \Tr \paren{ \vu^{\top} \mG_K^{1/2} \paren{\frac{\mG_K}{n+m}}^{2p}  \mG_K^{1/2} \vu } \\
& = \Tr \paren{  \paren{\frac{\mG_K}{n+m}}^{2p}  \mG_K^{1/2} \vu \vu^{\top} \mG_K^{1/2} } \le \hat{\lambda}_1^{2p} \Tr \paren{ \mG_K^{1/2} \vu \vu^{\top} \mG_K^{1/2}}  .
\end{align*}
And $ \Tr \paren{ \mG_K^{1/2} \vu \vu^{\top} \mG_K^{1/2}} = \vu^{\top} \mG_K \vu = \sum_{i,j=1}^{n+m} u_i u_j K(x_i,x_j) \le \sum_{i,j=1}^{n+m} |u_i u_j K(x_i,x_j)| \le \bt^2 \| \vu \|_1^2 \le \bt^2$. Thus, we have $\| F_p \|_\hk \le \hat{\lambda}_1^p \bt $ for all $p \ge 0$.

Define $\gF := \sset{f=g_1 g_2}{g_1,g_2 \in \hk, \|g_1\|_\hk, \|g_2\|_\hk \le 1}$.
Then, as we proved in the proof of \cref{thm:est}, $\| g_1 \|_\infty \le \bt $ and $\| g_2 \|_\infty \le \bt$, which means that for all $f \in \gF$, $\| f \|_\infty \le \bt^2$.
Moreover, by Proposition 13 of \citet{zhai2023understanding},
we have $\rad_n(\gF) \le \frac{\bt^2}{\sqrt{n}}$,
where $\rad_n$ is the Rademacher complexity.
Thus, by Theorem 4.10 of \citet{wainwright2019high}, for any $\delta > 0$,
\begin{equation}
\label{eqn:lem-approx-2-rad}
\left | \frac{1}{n+m} \sum_{i=1}^{n+m} f(x_i) - \E_{X \sim \px}[f(X)] \right | \le \frac{\bt^2}{\sqrt{n+m}} \paren{ 2 + \sqrt{2 \log \frac{1}{\delta}} } \qquad \text{for all } f \in \gF 
\end{equation}
holds with probability at least $1 - \delta$.
In what follows, we suppose that this inequality holds.

For any $p$, define $\vv_{K^p}(x) \in \R^{n+m}$ as $\vv_{K^p}(x)[i] = K^p(x,x_i)$ for all $i \in [n+m]$.
Then,
\begin{align*}
    & \left | K^p(x,x_j) -   \hat{K}^p(x,x_j)  \right | \\
    = \; & \left |  K^p(x,x_j) -  \frac{1}{(n+m)^{p-1}} \vv_K(x)^{\top} \mG_K^{p-2} \vv_K(x_j) \right | \\ 
     \le \; & \left |  K^p(x,x_j) - \frac{1}{n+m} \vv_{K^{p-1}}(x)^{\top} \vv_K(x_j)  \right | \\
    & + \sum_{q=1}^{p-2} \frac{1}{(n+m)^{q}} \left |  \vv_{K^{p - q}}(x)^{\top} \mG_K^{q-1} \vv_K(x_j) -  \vv_{K^{p - q - 1}}(x)^{\top} \frac{\mG_K^{q}}{n+m} \vv_K(x_j) \right |   .
\end{align*}
Let us start with bounding the first term:
\begin{align*}
& \left | K^p(x,x_j) - \frac{1}{n+m} \vv_{K^{p-1}}(x)^{\top} \vv_K(x_j)  \right | \\ 
= \; &  \left | \int_\gX K^{p-1}(x,z) K(x_j, z)  dp(z) - \frac{1}{n+m} \sum_{i=1}^{n+m} K^{p-1}(x,x_i) K(x_j, x_i) \right | \\ 
\le \; &  \lambda_1^{p-2} \frac{\bt^4  }{\sqrt{n+m}}  \paren{ 2 + \sqrt{2 \log \frac{1}{\delta}} } ,
\end{align*}
because $\norm{K^{p-1}(x,\cdot)}_\hk \le \lambda_1^{p-2} \bt$, and $\norm{K(x_j, \cdot)}_\hk \le \bt$.

For the second term, note that $\vv_K(x_j) = \mG_K \ve_j$, where $\ve_j = [0,\cdots,0,1,0,\cdots,0]$. So we have:
\begin{align*}
& \frac{1}{(n+m)^{q}} \left |\vv_{K^{p - q}}(x)^{\top} \mG_K^{q-1} \vv_K(x_j) - \frac{1}{n+m}  \vv_{K^{p - q - 1}}(x)^{\top} \mG_K^q \vv_K(x_j) \right |  \\ 
= \; &  \left | \int_\gX K^{p-q-1}(x,z) \brac{ \ve_j^{\top} \paren{\frac{\mG_K}{n+m}}^q \vv_K(z) }    dp(z) - \frac{1}{n+m} \sum_{j=1}^{n+m}   K^{p-q-1}(x,x_j) \brac{ \ve_j^{\top} \paren{\frac{\mG_K}{n+m}}^q \vv_K(x_j) }   \right | \\ 
\le \; & \lambda_1^{p-q-2} \hat{\lambda}_1^q \frac{  \bt^4  }{\sqrt{n+m}} \paren{ 2 + \sqrt{2 \log \frac{1}{\delta}} } ,
\end{align*}
because $\norm{K^{p-q-1}(x,\cdot)}_\hk \le \lambda_1^{p-q-2} \bt$,
and $\norm{\ve_j^{\top} \paren{\frac{\mG_K}{n+m}}^q \vv_K}_\hk \le \hat{\lambda}_1^q \bt$ since $ \| \ve_j \|_1 = 1$.

Combining the above two inequalities yields \cref{eqn:lem-approx-2-1}.
Then, note that
\[
 \nabla_\lambda \paren{\frac{s(\lambda)}{\lambda}} = \sum_{p=1}^{\infty} \pi_p (p-1) \lambda^{p-2},
\]
which together with $\pi_p \ge 0$ for all $p$ yields \cref{eqn:lem-approx-2-2}.
\end{proof}

\begin{cor}
\label{cor:lem-approx-2-square}
    If \cref{eqn:lem-approx-2-rad} holds, then for all $i, j \in [n+m]$, and $\lambda_{\max} = \max \set{\lambda_1, \hat{\lambda}_1}$,
\begin{align*}
    & \left |  \kssquare(x_i,x_j) - \langle \hatks(x_i, \cdot), \ks(x_j, \cdot) \rangle_\px \right | + \left | \langle \hatks(x_i, \cdot), \hatks(x_j, \cdot) \rangle_\px - \langle \hatks(x_i, \cdot), \ks(x_j, \cdot) \rangle_\px \right | \\ 
    \le \; & 2 s(\lambda_{\max}) \left . \nabla_\lambda \paren{\frac{s(\lambda)}{\lambda}} \right |_{\lambda = \lambda_{\max}}  \frac{  \bt^4  }{\sqrt{n+m}}  \paren{ 2 + \sqrt{2 \log \frac{1}{\delta}} }    .
\end{align*}
\end{cor}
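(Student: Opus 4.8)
The plan is to reduce the whole estimate to the telescoping-plus-concentration argument already used for \cref{lem:approx-2}, the only new ingredient being that one of the two factors in each inner product is now carried along as an exact kernel rather than evaluated at a sample point. The starting observation is the spectral identity $\langle K^p(x_i,\cdot),K^q(x_j,\cdot)\rangle_\px=\sum_{k:\lambda_k>0}\lambda_k^{p+q}\psi_k(x_i)\psi_k(x_j)=K^{p+q}(x_i,x_j)$, which in particular gives $\kssquare(x_i,x_j)=\langle\ks(x_i,\cdot),\ks(x_j,\cdot)\rangle_\px$. Hence the first term of the corollary equals $\bigl|\langle\ks(x_i,\cdot)-\hatks(x_i,\cdot),\ks(x_j,\cdot)\rangle_\px\bigr|$ and the second equals $\bigl|\langle\hatks(x_i,\cdot),\hatks(x_j,\cdot)-\ks(x_j,\cdot)\rangle_\px\bigr|$. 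Expanding $\ks=\sum_p\pi_pK^p$, $\hatks=\sum_p\pi_p\hat K^p$ and using $\pi_p\ge0$ (\cref{ass:analytic}) to push the triangle inequality through the double sums, it suffices to prove, for every $p,q\ge1$, the two per-$(p,q)$ bounds $\bigl|K^{p+q}(x_i,x_j)-\langle\hat K^p(x_i,\cdot),K^q(x_j,\cdot)\rangle_\px\bigr|\le(p-1)\lambda_{\max}^{p+q-2}C_{n,m,\delta}$ and $\bigl|\langle\hat K^p(x_i,\cdot),\hat K^q(x_j,\cdot)-K^q(x_j,\cdot)\rangle_\px\bigr|\le(q-1)\lambda_{\max}^{p+q-2}C_{n,m,\delta}$, where $C_{n,m,\delta}:=\frac{\bt^4}{\sqrt{n+m}}\bigl(2+\sqrt{2\log\frac1\delta}\bigr)$. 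Summing over $p,q$ then finishes the proof: $\sum_p\pi_p(p-1)\lambda_{\max}^{p-2}=\nabla_\lambda(s(\lambda)/\lambda)\big|_{\lambda=\lambda_{\max}}$ (the $p=1$ term vanishes) and $\sum_q\pi_q\lambda_{\max}^{q}=s(\lambda_{\max})$, so each of the two terms contributes $s(\lambda_{\max})\nabla_\lambda(s(\lambda)/\lambda)\big|_{\lambda=\lambda_{\max}}C_{n,m,\delta}$ and their sum produces the stated factor $2$.

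For the first per-$(p,q)$ bound I would telescope as in the proof of \cref{lem:approx-2}, but carry $K^q(x_j,\cdot)$ along unchanged. Set $T_r:=\langle\hat K^{r+1}(x_i,\cdot),K^{p+q-r-1}(x_j,\cdot)\rangle_\px$ for $r=0,\dots,p-1$, so $T_0=K^{p+q}(x_i,x_j)$ and $T_{p-1}=\langle\hat K^p(x_i,\cdot),K^q(x_j,\cdot)\rangle_\px$. A short identity chase — using $\hat K^{r}(x_i,\cdot)=\frac{1}{n+m}\vv_K(x_i)^\top(\mG_K/(n+m))^{r-2}\vv_K(\cdot)$ together with $K^{b}(x_l,x_j)=\int K(x_l,z)K^{b-1}(x_j,z)\,dp(z)$ — shows $T_{r-1}-T_r$ is exactly the gap between $\int\hat K^{r}(x_i,z)K^{p+q-r}(x_j,z)\,dp(z)$ and its empirical average $\frac1{n+m}\sum_l\hat K^{r}(x_i,x_l)K^{p+q-r}(x_l,x_j)$. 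Both factors lie in $\hk$: $\|K^{b}(x_j,\cdot)\|_{\hk}^2=\sum_k\lambda_k^{2b-1}\psi_k(x_j)^2\le\lambda_1^{2b-2}K(x_j,x_j)\le\lambda_1^{2b-2}\bt^2$, and $\|\hat K^{r}(x_i,\cdot)\|_{\hk}\le\hat\lambda_1^{r-1}\bt$ by the same kind of argument as for the functions $F_p$ in the proof of \cref{lem:approx-2} (here using $\vv_K(x_i)=\mG_K\ve_i$ and the trace inequality \cref{prop:linear-alg-exercise}). Hence the product, divided by the product of the two norms, lies in the class $\gF$ of \cref{lem:approx-2}, so \cref{eqn:lem-approx-2-rad} bounds $|T_{r-1}-T_r|$ by $\hat\lambda_1^{r-1}\lambda_1^{p+q-r-1}\bt^2\cdot\frac{\bt^2}{\sqrt{n+m}}(2+\sqrt{2\log\frac1\delta})\le\lambda_{\max}^{p+q-2}C_{n,m,\delta}$; summing the $p-1$ steps gives the first per-$(p,q)$ bound (and the case $p=1$ is trivially zero since $\hat K^1=K^1=K$).

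For the second per-$(p,q)$ bound the factor $\hat K^p(x_i,\cdot)$ is itself empirical, so I would freeze it as a single element $\psi:=\hat K^p(x_i,\cdot)\in\hk$, with $\|\psi\|_{\hk}\le\hat\lambda_1^{p-1}\bt$, and telescope only the $\hat K^q(x_j,\cdot)-K^q(x_j,\cdot)$ part against it, replacing the $q-1$ integrations in the convolution defining $K^q(x_j,\cdot)$ by sample averages one hop at a time. Using $\langle K(x_l,\cdot),g\rangle_\px=(T_Kg)(x_l)$ and self-adjointness of $T_K$, the natural intermediates are $\tilde T_r:=\langle\hat K^{r+1}(x_j,\cdot),T_K^{\,q-r-1}\psi\rangle_\px$ for $r=0,\dots,q-1$, with $\tilde T_0=\langle K^q(x_j,\cdot),\psi\rangle_\px$ and $\tilde T_{q-1}=\langle\hat K^q(x_j,\cdot),\psi\rangle_\px$; each $\tilde T_{r-1}-\tilde T_r$ is again a population-minus-empirical gap of the product of the two $\hk$-functions $\hat K^{r}(x_j,\cdot)$ and $T_K^{\,q-r}\psi$, whose norms are $\le\hat\lambda_1^{r-1}\bt$ and $\le\lambda_1^{q-r}\|\psi\|_{\hk}\le\hat\lambda_1^{p-1}\lambda_1^{q-r}\bt$ (since $\|T_K\|_{\hk\to\hk}=\lambda_1$). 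Applying \cref{eqn:lem-approx-2-rad} once more bounds each step by $\lambda_{\max}^{p+q-2}C_{n,m,\delta}$, and the $q-1$ steps give the second per-$(p,q)$ bound.

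I expect the main obstacle to be the bookkeeping in these two telescopes: one has to choose the intermediates $T_r,\tilde T_r$ so that every consecutive difference is \emph{exactly} a ``population vs.\ empirical average of a product of two $\hk$-functions'', with those two functions having $\hk$-norms that are controlled powers of $\lambda_1$ or $\hat\lambda_1$ — and crucially \emph{not}, e.g., $\ell_1$-norms of Gram-matrix-weighted vectors, which would scale with $n+m$ and destroy the bound. The second telescope needs the extra observation that the frozen empirical factor must be advanced by one power of $T_K$ at each step (rather than expanded into a sample sum), so that its contribution stays $\hat\lambda_1^{p-1}\bt$ times a power of $\lambda_1$ that shrinks step by step, matching the growing $\hat\lambda_1$-power from the telescoped factor; this is exactly what produces a clean $\lambda_{\max}^{p+q-2}$ at every step and, after summation, the derivative $\nabla_\lambda(s(\lambda)/\lambda)$ times $s(\lambda_{\max})$.
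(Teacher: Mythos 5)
Your proposal is correct and follows essentially the same route as the paper: the two per-$(p,q)$ telescopes you set up (your $T_r$ and $\tilde T_r$) coincide, after reindexing $t = p-r$ and $t = q-r$, with the paper's intermediate quantities $\ve_i^{\top}(\mG_K/(n+m))^{p-t}\mG_{K^{q+t}}\ve_j$ and $\ve_i^{\top}(\mG_K/(n+m))^{p-1}\mG_{K^{t+1}}(\mG_K/(n+m))^{q-t}\ve_j$, and your identification of each consecutive difference as a population-vs-empirical gap of a product of two $\hk$-functions with the same $\hk$-norm bounds ($\hat\lambda_1^{r-1}\bt$ for the empirical-kernel factor, $\lambda_1^{q-r}\hat\lambda_1^{p-1}\bt$ for the $T_K$-advanced factor, where e.g.\ $T_K^{q-r}\psi = F_{p-1,q-r+1}$) matches the paper's exactly. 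The only cosmetic difference is that you phrase the second telescope via the operator $T_K$ acting on a frozen $\psi=\hat K^p(x_i,\cdot)$, while the paper writes everything out in Gram-matrix form; the two presentations encode the same computation and produce the same final summation $\sum_{p,q}\pi_p\pi_q(p+q-2)\lambda_{\max}^{p+q-2} = \lambda_{\max}\nabla_\lambda(s^2/\lambda^2)|_{\lambda_{\max}} = 2s(\lambda_{\max})\nabla_\lambda(s/\lambda)|_{\lambda_{\max}}$.
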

\begin{proof}
    Consider $F_{p,q}(x) = \vu^{\top} \paren{\frac{\mG_K}{n+m}}^p \vv_{K^q}(x)$
    for any $\| \vu \|_1 \le 1$ and any $p \ge 0 , q \ge 1$.
    If \cref{eqn:lem-approx-2-rad} holds, then by \cref{prop:linear-alg-exercise}, we have
    \begin{align*}
        \| F_{p,q} \|_\hk^2 & = \vu^{\top} \paren{\frac{\mG_K}{n+m}}^p \mG_{K^{2q-1}} \paren{\frac{\mG_K}{n+m}}^p \vu \\ 
        & = \Tr \paren{ \paren{\frac{\mG_K}{n+m}}^{p - 1/2} \frac{\mG_{K^{2q-1}}}{n+m} \paren{\frac{\mG_K}{n+m}}^{p - 1/2} \mG_K^{1/2} \vu  \vu^{\top} \mG_K^{1/2} } \\ 
        & \le \hat{\lambda}_1^{2p - 1} \norm{\frac{\mG_{K^{2q-1}}}{n+m}}_2 \Tr \paren{\mG_K^{1/2} \vu  \vu^{\top} \mG_K^{1/2}} \\ 
        & = \hat{\lambda}_1^{2p - 1} \norm{\frac{\mG_{K^{2q-1}}}{n+m}}_2 \vu^{\top} \mG_K \vu \le \hat{\lambda}_1^{2p - 1} \norm{\frac{\mG_{K^{2q-1}}}{n+m}}_2 \bt^2  .
    \end{align*}
    For any unit vector $\vw \in \R^{n+m}$, we have
    \[
    \hat{\lambda}_1 \ge \vw^{\top} \frac{\mG_K}{n+m} \vw = \frac{1}{n+m} \sum_{i,j=1}^{n+m} w_i w_j K(x_i,x_j) = \frac{1}{n+m} \sum_t \lambda_t \vw^{\top} \mPsi_t \vw   ,
    \]
    where $\mPsi_t \in \R^{(n+m) \times (n+m)}$ such that $\mPsi_t[i,j] = \psi_t(x_i) \psi_t(x_j)$, so $\mPsi_t$ is \psd{}. Thus, we have
    \[
    \vw^{\top} \frac{\mG_{K^{2q-1}}}{n+m} \vw = \frac{1}{n+m} \sum_t \lambda_t^{2q-1} \vw^{\top} \mPsi_t \vw \le \lambda_1^{2q-2}\frac{1}{n+m} \sum_t \lambda_t \vw^{\top} \mPsi_t \vw  \le \lambda_1^{2q-2} \hat{\lambda}_1 ,
    \]
    which implies that $\norm{\frac{\mG_{K^{2q-1}}}{n+m}}_2 \le \lambda_1^{2q-2} \hat{\lambda}_1$. Thus, $\| F_{p,q} \|_\hk^2 \le \lambda_1^{2q-2} \hat{\lambda}_1^{2p} \bt^2$.

    Note that $\langle \vv_K, \vv_K \rangle_\px = \mG_{K^2}$. So for any $p, q \ge 1$ and any $i, j \in [n+m]$, there is:
\begin{align*}
    & \left | K^{p+q}(x_i, x_j) - \dotp{\hat{K}^p(x_i, \cdot), K^q (x_j,\cdot)}_\px \right | = \left | \ve_i^{\top} \mG_{K^{p+q}} \ve_j - \ve_i^{\top} \frac{\mG_K^{p-1}}{(n+m)^{p-1}} \mG_{K^{q+1}} \ve_j  \right |  \\ 
    \le \; & \sum_{t=1}^{p-1} \left | \ve_i^{\top} \frac{\mG_K^{p-t}}{(n+m)^{p-t}} \mG_{K^{q+t}} \ve_j - \ve_i^{\top} \frac{\mG_K^{p-t-1}}{(n+m)^{p-t-1}} \mG_{K^{q+t+1}} \ve_j  \right | \\ 
    = \; &    \sum_{t=1}^{p-1}  \left | \frac{1}{n+m} \sum_{l=1}^{n+m} \brac{\ve_i^{\top} \paren{\frac{\mG_K}{n+m}}^{p-t-1} \vv_K}(x_l) \brac{\ve_j^{\top} \vv_{K^{q+t}}}(x_l) -  \dotp{ \ve_i^{\top} \paren{\frac{\mG_K}{n+m}}^{p-t-1} \vv_K , \ve_j^{\top} \vv_{K^{q+t}} }_\px      \right |  \\
    \le \; & \sum_{t=1}^{p-1} \lambda_1^{q+t-1} \hat{\lambda}_1^{p-t-1}  \frac{\bt^4  }{\sqrt{n+m}}  \paren{ 2 + \sqrt{2 \log \frac{1}{\delta}} } \le (p-1) \lambda_{\max}^{p+q-2} \frac{\bt^4  }{\sqrt{n+m}}  \paren{ 2 + \sqrt{2 \log \frac{1}{\delta}} } .
\end{align*}

Thus, we have
\begin{align*}
    \left |  \kssquare(x_i,x_j) - \langle \hatks(x_i, \cdot), \ks(x_j, \cdot) \rangle_\px \right | & = \sum_{p,q=1}^{\infty} \left | \pi_p \pi_q \paren{K^{p+q}(x_i, x_j) - \dotp{\hat{K}^p(x_i, \cdot), K^q (x_j,\cdot)}_\px } \right | \\ 
    & \le \sum_{p,q=1}^{\infty} \pi_p \pi_q (p-1) \lambda_{\max}^{p+q-2}  \frac{\bt^4  }{\sqrt{n+m}}  \paren{ 2 + \sqrt{2 \log \frac{1}{\delta}} }  .
\end{align*}

Similarly, we can show that:
\begin{align*}
    & \left | \dotp{\hat{K}^p(x_i, \cdot), \hat{K}^q (x_j,\cdot)}_\px - \dotp{\hat{K}^p(x_i, \cdot), K^q (x_j,\cdot)}_\px \right | \\
    = \; & \left | \ve_i^{\top} \frac{\mG_K^{p-1}}{(n+m)^{p-1}} \mG_{K^2} \frac{\mG_K^{q-1}}{(n+m)^{q-1}} \ve_j -  \ve_i^{\top} \frac{\mG_K^{p-1}}{(n+m)^{p-1}} \mG_{K^{q+1}} \ve_j   \right | \\ 
    \le \; & \sum_{t=1}^{q-1} \left | \ve_i^{\top} \frac{\mG_K^{p-1}}{(n+m)^{p-1}} \mG_{K^{t+1}} \frac{\mG_K^{q-t}}{(n+m)^{q-t}} \ve_j -  \ve_i^{\top}  \frac{\mG_K^{p-1}}{(n+m)^{p-1}} \mG_{K^{t+2}} \frac{\mG_K^{q-t-1}}{(n+m)^{q-t-1}} \ve_j \right |  \\ 
    = \; & \sum_{t=1}^{q-1} \left | \frac{1}{n+m} \sum_{l=1}^{n+m} \brac{\ve_i^{\top} \paren{\frac{\mG_K}{n+m}}^{p-1} \vv_{K^{t+1}}}(x_l) \brac{\ve_j^{\top} \paren{\frac{\mG_K}{n+m}}^{q-t-1} \vv_K}(x_l) \right . \\
    & \left . - \dotp{ \ve_i^{\top} \paren{\frac{\mG_K}{n+m}}^{p-1} \vv_{K^{t+1}} , \ve_j^{\top} \paren{\frac{\mG_K}{n+m}}^{q-t-1} \vv_K  }_\px    \right |    \\
    \le \; & \sum_{t=1}^{q-1} \lambda_1^{t} \hat{\lambda}_1^{p+q-t-2}  \frac{\bt^4  }{\sqrt{n+m}}  \paren{ 2 + \sqrt{2 \log \frac{1}{\delta}} } \le  (q-1) \lambda_{\max}^{p+q-2}  \frac{\bt^4  }{\sqrt{n+m}}  \paren{ 2 + \sqrt{2 \log \frac{1}{\delta}} } ,
\end{align*}
which implies that
\begin{align*}
& \left | \langle \hatks(x_i, \cdot), \hatks(x_j, \cdot) \rangle_\px - \langle \hatks(x_i, \cdot), \ks(x_j, \cdot) \rangle_\px \right | \\
= \; & \sum_{p,q=1}^{\infty} \left | \pi_p \pi_q \paren{\dotp{\hat{K}^p(x_i, \cdot), \hat{K}^q (x_j,\cdot)}_\px - \dotp{\hat{K}^p(x_i, \cdot), K^q (x_j,\cdot)}_\px } \right |  \\ 
\le \; & \sum_{p,q=1}^{\infty} \pi_p \pi_q (q-1) \lambda_{\max}^{p+q-2}  \frac{\bt^4  }{\sqrt{n+m}}  \paren{ 2 + \sqrt{2 \log \frac{1}{\delta}} }  .
\end{align*}
Combining the above inequalities, we obtain
\begin{align*}
& \left |  \kssquare(x_i,x_j) - \langle \hatks(x_i, \cdot), \ks(x_j, \cdot) \rangle_\px \right | + \left | \langle \hatks(x_i, \cdot), \hatks(x_j, \cdot) \rangle_\px - \langle \hatks(x_i, \cdot), \ks(x_j, \cdot) \rangle_\px \right | \\ 
\le \; & \sum_{p,q=1}^{\infty} \pi_p \pi_q (p+q-2) \lambda_{\max}^{p+q-2}  \frac{\bt^4  }{\sqrt{n+m}}  \paren{ 2 + \sqrt{2 \log \frac{1}{\delta}} } \\ 
= \; & \lambda_{\max} \left . \nabla_\lambda \paren{\frac{s(\lambda)^2}{\lambda^2}} \right |_{\lambda = \lambda_{\max}}  \frac{ \bt^4  }{\sqrt{n+m}}  \paren{ 2 + \sqrt{2 \log \frac{1}{\delta}} }  ,
\end{align*}
so we get the result by expanding the derivative.
\end{proof}

\subsection{Proof of Theorem \ref{thm:approx}}

Define $\vv_{\ks, n} (x) \in \R^n$ such that $\vv_{\ks, n}(x)[i] = \ks(x, x_i)$.
Define $\vv_{\hatks, n}(x)$ similarly.
Recall the formulas $\tilde{f} = \tilde{\valpha}^{\top} \vv_{\ks, n}$ and $\hat{f} = \hat{\valpha}^{\top} \vv_{\hatks, n}$.
Define $f^\dag := \hat{\valpha}^{\top} \vv_{\ks, n}$.
Since $\mG_{\hatks, n}$ is \psd{}, we can see that $\| \hat{\valpha} \|_2 \le \frac{\| \vy \|_2}{n \beta_n}$,
and $\| \hat{\valpha} \|_1 \le \sqrt{n} \| \hat{\valpha} \|_2$.
So if \cref{eqn:lem-approx-2-2} in \cref{lem:approx-2} holds, then by \cref{cor:lem-approx-2-square}, we have:
\begin{align*}
    \norm{\hat{f} - f^\dag}_\px^2 & = \hat{\valpha}^{\top} \dotp{\vv_{\hatks, n} - \vv_{\ks, n}, \vv_{\hatks, n} - \vv_{\ks, n} }_\px \hat{\valpha} \\ 
    & = \hat{\valpha}^{\top} \paren{   \langle \hatks(x_i, \cdot), \hatks(x_j, \cdot) \rangle_\px +  \kssquare(x_i,x_j) - 2 \langle \hatks(x_i, \cdot), \ks(x_j, \cdot) \rangle_\px } \hat{\valpha} \\ 
    & \le   2 s(\lambda_{\max})  \left . \nabla_\lambda \paren{\frac{s(\lambda)}{\lambda}} \right |_{\lambda = \lambda_{\max}}  \frac{ \beta_n^{-2} \bt^4  }{\sqrt{n+m}}  \paren{ 2 + \sqrt{2 \log \frac{1}{\delta}} }  \frac{\| \vy \|_2^2}{n}  .
\end{align*}

By the definitions of $\tilde{\valpha}$ and $\hat{\valpha}$, we can also see that:
\begin{equation}
\label{eqn:proof-thm-approx-1}
\paren{ \mG_{\ks, n} + n \beta_n \mI_n } \paren{\hat{\valpha} - \tilde{\valpha}} = \paren{ \mG_{\ks, n} - \mG_{\hatks, n} } \hat{\valpha}  .    
\end{equation}
Note that $\norm{\mG_{\ks, n} - \mG_{\hatks, n}}_2 \le n \norm{\mG_{\ks, n} - \mG_{\hatks, n}}_{\max}$.
Thus, by \cref{eqn:proof-thm-approx-1}, we have:
\begin{align*}
\norm{\tilde{f} - f^\dag}_\hks^2 & = \paren{\hat{\valpha} - \tilde{\valpha}}^{\top} \mG_{\ks, n} \paren{\hat{\valpha} - \tilde{\valpha}}  \\
& = \paren{\hat{\valpha} - \tilde{\valpha}}^{\top} \paren{ \mG_{\ks, n} - \mG_{\hatks, n} } \hat{\valpha} - n \beta_n \paren{\hat{\valpha} - \tilde{\valpha}}^{\top} \paren{\hat{\valpha} - \tilde{\valpha}} \\ 
& \le \norm{\hat{\valpha} }_2 \norm{\mG_{\ks, n} - \mG_{\hatks, n}}_2 \norm{\hat{\valpha}}_2  + \norm{\tilde{\valpha} }_2 \norm{\mG_{\ks, n} - \mG_{\hatks, n}}_2 \norm{\hat{\valpha}}_2  - 0 \\
& \le  2   \left . \nabla_\lambda \paren{\frac{s(\lambda)}{\lambda}} \right |_{\lambda = \lambda_{\max}} \frac{\beta_n^{-2} 
 \bt^4  }{\sqrt{n+m}}   \paren{ 2 + \sqrt{2 \log \frac{1}{\delta}} } \frac{\| \vy \|_2^2}{n}       .
\end{align*}

And note that we have $\norm{\tilde{f} - f^\dag}_\px^2 \le s(\lambda_1) \norm{\tilde{f} - f^\dag}_\hks^2 \le s(\lambda_{\max}) \norm{\tilde{f} - f^\dag}_\hks^2$. Thus,
\begin{align*}
\norm{\hat{f} - \tilde{f}}_\px^2 &  \le 2 \paren{ \norm{\hat{f} - f^\dag}_\px^2 + \norm{\tilde{f} - f^\dag}_\px^2 } \\
& \le 8 s(\lambda_{\max})  \left . \nabla_\lambda \paren{\frac{s(\lambda)}{\lambda}} \right |_{\lambda = \lambda_{\max}}  \frac{ \beta_n^{-2} \bt^4  }{\sqrt{n+m}}  \paren{ 2 + \sqrt{2 \log \frac{1}{\delta}} }  \frac{\| \vy \|_2^2}{n}  , 
\end{align*}
as desired.  \qed  

\subsection{Proof of Proposition \ref{prop:agnostic-lap}}
By \cref{ass:target}, there is $\| f^* \|_{\gH_{K_{s^*}}}^2 \le \epsilon \| f^* \|_\px^2$.
Let $f^* = \sum_i u_i \psi_i$, then this is equivalent to
\[
\sum_i \frac{u_i^2}{s^*(\lambda_i)} \le \epsilon \sum_i u_i^2  .
\]
Let $s(\lambda) = \frac{\lambda}{1 - \eta \lambda}$ be the inverse Laplacian, then we have
\[
\sum_i \frac{u_i^2}{s(\lambda_i)} \le \sum_i \frac{u_i^2}{\lambda_i} \le \sum_i \frac{u_i^2}{s^*(\lambda_i) / M} \le M \epsilon \sum_i u_i^2  ,
\]
which means that $f^*$ also satisfies \cref{ass:target} \wrt{} $s$ by replacing $\epsilon$ with $M \epsilon$.
All other conditions are the same, so we can continue to apply \cref{thm:est,thm:approx}. \qed 

\subsection{Proof of Proposition \ref{prop:top-d-lower}}
This proof is similar to the proof of Proposition 4 in \citet{zhai2023understanding}.

Since $\hat{\Psi}$ is at most rank-$d$,
there must be a function in $\sspan \set{\psi_1,\cdots,\psi_{d+1}}$ that is orthogonal to $\hat{\Psi}$.
Thus, we can find two functions $f_1,f_2 \in \sspan \set{\psi_1,\cdots,\psi_{d+1}}$ such that: $\| f_1 \|_\px = \| f_2 \|_\px = 1$,
$f_1$ is orthogonal to $\hat{\Psi}$,
$f_2 = \vu^{\top} \hat{\Psi}$ (which means that $f_1 \perp f_2$),
and $\psi_1 \in \sspan \set{f_1, f_2}$.
Let $\psi_1 = \alpha_1 f_1 + \alpha_2 f_2$,
and without loss of generality suppose that $\alpha_1, \alpha_2 \in [0,1]$.
Then, $\alpha_1^2 + \alpha_2^2 = 1$.
Let $f_0 = \alpha_2 f_1 - \alpha_1 f_2$, then $\| f_0 \|_\px = 1$ and $\dotp{f_0, \psi_1}_\px = 0$.
This also implies that $\dotp{f_0, \psi_1}_\hks = 0$.

Let $\beta_1,\beta_2 \in [0,1]$ be any value such that $\beta_1^2 + \beta_2^2 = 1$.
Let $f = B(\beta_1 \psi_1 + \beta_2 f_0)$, then $\| f \|_\px = B$.
And we have $\| f \|_\hks^2 = B^2 \paren {\| \beta_1 \psi_1 \|_\hks^2 + \| \beta_2 f_0 \|_\hks^2 } \le B^2 \paren{\frac{\beta_1^2}{s(\lambda_1)} + \frac{\beta_2^2}{s(\lambda_{d+1})}} \le \epsilon B^2$, as long as $\beta_2^2 \le \frac{s(\lambda_{d+1})}{s(\lambda_1) - s(\lambda_{d+1})} \brac{s(\lambda_1) \epsilon - 1}$.
This means that $f \in \gF_s$.

Let $F(\alpha_1) := \alpha_1 \beta_1 + \alpha_2 \beta_2 = \alpha_1 \beta_1 + \sqrt{1 - \alpha_1^2} \beta_2$ for $\alpha_1 \in [0,1]$.
It is easy to show that $F(\alpha_1)$ first increases and then decreases on $[0,1]$,
so $F(\alpha_1)^2 \ge \min \set{F(0)^2, F(1)^2} = \min \set{\beta_1^2, \beta_2^2}$,
which can be $\frac{s(\lambda_{d+1})}{s(\lambda_1) - s(\lambda_{d+1})} \brac{s(\lambda_1) \epsilon - 1}$ given that it is at most $\frac{1}{2}$.
Thus, for this $f$, we have
\[
\min_{\vw \in \R^d} \; \norm{\vw^{\top} \hat{\Psi} - f}_\px^2 = \norm{ B\paren{ \alpha_1 \beta_1 + \alpha_2 \beta_2 } f_1 }_\px^2 = B^2 F(\alpha_1)^2 \ge \frac{s(\lambda_{d+1})}{s(\lambda_1) - s(\lambda_{d+1})} \brac{s(\lambda_1) \epsilon - 1} B^2  .
\]

If the equality is attained, then we must have $\| f_0 \|_\hks^2 = s(\lambda_{d+1})^{-1}$. So if $s(\lambda_d) > s(\lambda_{d+1})$, then $\hat{\Psi}$ must span the linear span of $\psi_1,\cdots,\psi_d$.

Finally, we prove that $\max_{f \in \gF_s} \min_{\vw \in \R^d} \norm{\vw^{\top} \hat{\Psi} - f}_\px^2 \le \frac{s(\lambda_{d+1})}{s(\lambda_1) - s(\lambda_{d+1})} \brac{s(\lambda_1) \epsilon - 1} B^2$ if $\hat{\Psi}$ spans $\sspan \set{\psi_1,\cdots,\psi_d}$.
For any $f = \sum_i u_i \psi_i \in \gF_s$,
we have $\sum_i u_i^2 \le B^2$,
and $\sum_i \frac{u_i^2}{s(\lambda_i)} \le \epsilon \sum_i u_i^2$.
Let $a = \sum_{i \ge d+1} u_i^2$, and $b = \sum_{i=1}^d u_i^2$.
Then, $a + b \le B^2$.
So we have
\begin{align*}
    0 & \ge \sum_i \frac{u_i^2}{s(\lambda_i)} - \epsilon \sum_i u_i^2 \ge \brac{ \frac{1}{s(\lambda_1)} - \epsilon}  b +  \brac{ \frac{1}{s(\lambda_{d+1})} - \epsilon}  a \qquad (\text{since } s \text{ is monotonic}) \\ 
    & \ge \brac{ \frac{1}{s(\lambda_1)} - \epsilon} (B^2 - a) + \brac{ \frac{1}{s(\lambda_{d+1})} - \epsilon}  a  = \brac{ \frac{1}{s(\lambda_1)} - \epsilon} B^2 + \brac{ \frac{1}{s(\lambda_{d+1})} -  \frac{1}{s(\lambda_1)}} a,
\end{align*}
which implies that $\min_{\vw \in \R^d} \norm{\vw^{\top} \hat{\Psi} - f}_\px^2 = a \le \frac{s(\lambda_{d+1})}{s(\lambda_1) - s(\lambda_{d+1})} \brac{s(\lambda_1) \epsilon - 1} B^2$. \qed

\subsection{Proof of Theorem \ref{thm:topd-est}}
For any $f = \sum_i u_i \psi_i \in \hks$ satisfying \cref{ass:target}, $\| f \|_\hk^2 = \sum_i \frac{u_i^2}{\lambda_i} \le  \sum_i \frac{u_i^2}{s(\lambda_i) / M} \le \epsilon M B^2$.

Define $\tilde{f}_d$ as the projection of $f^*$ onto $\hat{\Psi}$ \wrt{} $\hk$.
Define RKHS $\gH_{\hat{\Psi}} := \sspan \set{\hat{\psi}_1,\cdots,\hat{\psi}_d}$ as a subspace of $\gH_K$, then $\tilde{f}_d \in \gH_{\hat{\Psi}}$.
Let $K_{\hat{\Psi}}$ be the reproducing kernel of $\gH_{\hat{\Psi}}$.
Let $\tilde{\vy} := [\tilde{y}_1,\cdots,\tilde{y}_n]$, where $\tilde{y}_i := \tilde{f}_d(x_i) + y_i - f^*(x_i)$.
Then, the KRR of $\tilde{f}_d$ with $K_{\hat{\Psi}}$ is given by
\[
\hat{f}_d^\dag = \tilde{\vw}^{* \top} \hat{\Psi}, \quad  \tilde{\vw}^* = \paren{\hat{\Psi}(\mX_n) \hat{\Psi}(\mX_n)^{\top} + n \beta_n \mI_d}^{-1} \hat{\Psi}(\mX_n) \tilde{\vy} .
\]

First, we show a lower bound for the eigenvalues of $\hat{\Psi}(\mX_n) \hat{\Psi}(\mX_n)^{\top}$. Similar to \cref{eqn:lem-approx-2-rad}, define  $\gF := \sset{f=g_1 g_2}{g_1,g_2 \in \hk, \|g_1\|_\hk, \|g_2\|_\hk \le 1}$, then we have:
\begin{empheq}[left=\empheqlbrace]{align}
& \left | \frac{1}{n} \sum_{i=1}^{n} f(x_i) - \E_{X \sim \px}[f(X)] \right | \le \frac{\bt^2}{\sqrt{n}} \paren{ 2 + \sqrt{2 \log \frac{2}{\delta}} } \qquad \text{for all } f \in \gF ; \label{eqn:thm-topd-est-rad} \\ 
& \left | \frac{1}{m} \sum_{i=n+1}^{n+m} f(x_i) - \E_{X \sim \px}[f(X)] \right | \le \frac{\bt^2}{\sqrt{m}} \paren{ 2 + \sqrt{2 \log \frac{2}{\delta}} } \qquad \text{for all } f \in \gF 
\end{empheq}
hold simultaneously with probability at least $1 - \delta$ for any $\delta > 0$.
In what follows, we assume them to hold.
Then for all $f \in \gF$, we have $\left | \frac{1}{n} \sum_{i=1}^{n} f(x_i) - \frac{1}{m} \sum_{i=n+1}^{n+m} f(x_i) \right | \le \frac{\bt^2}{\sqrt{n}} \paren{ 4 + 2 \sqrt{2 \log \frac{2}{\delta}} }$.

For any unit vector $\vu \in \R^d$, let $f = \vu^{\top} \hat{\Psi}$.
Then, $\| f \|_\hk \in 1$, so $f^2 \in \gF$. And we have
\[
 \sum_{i=n+1}^{n+m} f(x_i)^2 = \norm{ \mG_{K,m} [\vv_1,\cdots,\vv_d] \vu  }_2^2 = \norm{ [m \tilde{\lambda}_1 \vv_1, \cdots, m \tilde{\lambda}_d \vv_d] \vu }_2^2 = \sum_{j=1}^d m \tilde{\lambda}_j u_j^2 \ge m \tilde{\lambda}_d  ,
\]
which implies that
\[
\frac{1}{n} \sum_{i=1}^n f(x_i)^2 \ge \tilde{\lambda}_d - \frac{\bt^2}{\sqrt{n}} \paren{ 4 + 2 \sqrt{2 \log \frac{2}{\delta}} }  \qquad \text{for all } f = \vu^{\top} \hat{\Psi} \text{ where } \vu \in \R^d \text{ is a unit vector}  .
\]
Thus, for any unit vector $\vu \in \R^d$, $\| \vu^{\top} \hat{\Psi}(\mX_n) \|_2^2 \ge n \tilde{\lambda}_d - \bt^2 \sqrt{n} \paren{ 4 +2 \sqrt{2 \log \frac{2}{\delta}} }$.

Second, we bound $\| \hat{f}_d - \hat{f}_d^\dag \|_\hk^2$.
Denote $\Delta \vy := [f^*(x_1) - \tilde{f}_d(x_1), \cdots, f^*(x_n) - \tilde{f}_d(x_n)]^{\top} \in \R^n$.
Note that $\langle \hat{\Psi}, \hat{\Psi} \rangle_\hk = \mI_d$, so we have
\begin{align*}
    \norm{ \hat{f}_d - \hat{f}_d^\dag }_\hk^2 & = \norm{ \brac{\paren{\hat{\Psi}(\mX_n) \hat{\Psi}(\mX_n)^{\top} + n \beta_n \mI_d}^{-1} \hat{\Psi}(\mX_n) \paren{ \vy - \tilde{\vy} }}^{\top}  \hat{\Psi} }_\hk^2 \\ 
    & = \norm{\paren{\hat{\Psi}(\mX_n) \hat{\Psi}(\mX_n)^{\top} + n \beta_n \mI_d}^{-1} \hat{\Psi}(\mX_n) \Delta \vy}_2^2  .
\end{align*}
So it suffices to bound $\norm{\mQ^{-1} \hat{\Psi}(\mX_n)}_2^2$ where $\mQ = \hat{\Psi}(\mX_n) \hat{\Psi}(\mX_n)^{\top} + n \beta_n \mI_d$,
which is equal to the largest eigenvalue of $\hat{\Psi}(\mX_n)^{\top} \mQ^{-2} \hat{\Psi}(\mX_n)$,
which is further equal to the largest eigenvalue of $\mQ^{-2} \hat{\Psi}(\mX_n) \hat{\Psi}(\mX_n)^{\top}$ by Sylvester's theorem.
Let the eigenvalues of $\hat{\Psi}(\mX_n) \hat{\Psi}(\mX_n)^{\top}$ be $\mu_1 \ge \cdots \ge \mu_d \ge 0$, with corresponding eigenvectors $\valpha_1,\cdots,\valpha_d$ that form an orthonormal basis of $\R^d$.
For all $i \in [d]$, if $\mu_i = 0$, then $\mQ^{-2} \hat{\Psi}(\mX_n) \hat{\Psi}(\mX_n)^{\top} \valpha_i = \vzero$, meaning that $\valpha_i$ is also an eigenvector of $\mQ^{-2} \hat{\Psi}(\mX_n) \hat{\Psi}(\mX_n)^{\top}$ with eigenvalue 0. And if $\mu_i > 0$, then we have
\[
\mQ \valpha_i = \hat{\Psi}(\mX_n) \hat{\Psi}(\mX_n)^{\top} \valpha_i + n \beta_n \valpha_i = (\mu_i + n\beta_n) \valpha_i  ,
\]
which implies that $\mQ^2 \alpha_i = \mQ (\mu_i + n\beta_n) \valpha_i = (\mu_i + n\beta_n)^2 \valpha_i = \frac{(\mu_i + n\beta_n)^2}{\mu_i} \hat{\Psi}(\mX_n) \hat{\Psi}(\mX_n)^{\top} \valpha_i$. Thus, $\valpha_i$ is an eigenvector of $\mQ^{-2} \hat{\Psi}(\mX_n) \hat{\Psi}(\mX_n)^{\top}$ with eigenvalue $\frac{\mu_i}{(\mu_i + n\beta_n)^2}$.
This means that $\valpha_1,\cdots,\valpha_d$ are all eigenvectors of $\mQ^{-2} \hat{\Psi}(\mX_n) \hat{\Psi}(\mX_n)^{\top}$.
On the other hand, we have $\mu_d \ge n \tilde{\lambda}_d - \bt^2 \sqrt{n} \paren{ 4 +2 \sqrt{2 \log \frac{2}{\delta}} }$,
and suppose that $n$ is large enough so that $\mu_d \ge \frac{n \tilde{\lambda}_d}{2} $,
\ie{} $n \ge \frac{4 \bt^4}{\tilde{\lambda}_d^2} \paren{ 4 +2 \sqrt{2 \log \frac{2}{\delta}} }^2 $. 
Then, we have
\[
\norm{\mQ^{-1} \hat{\Psi}(\mX_n)}_2^2 \le \max_{i \in [d]} \frac{\mu_i}{(\mu_i + n \beta_n)^2} \le \max_{i \in [d]} \frac{1}{\mu_i} \le \frac{2}{n \tilde{\lambda}_d}  .
\]
Thus, $\norm{ \hat{f}_d - \hat{f}_d^\dag }_\hk^2 \le \frac{2}{\tilde{\lambda}_d} \frac{\|\Delta \vy\|_2^2 }{n } $.

Third, we bound $\| \hat{f}_d - \hat{f}_d^\dag \|_\px^2$.
Denote $\Delta \vf := [\hat{f}_d(x_1) - \hat{f}_d^\dag(x_1), \cdots, \hat{f}_d(x_n) - \hat{f}_d^\dag(x_n)]^{\top} \in \R^n$.
Then, we have
\[
\Delta \vf = \hat{\Psi}(\mX_n)^{\top} (\hat{\vw}^* - \tilde{\vw}^*) = \hat{\Psi}(\mX_n)^{\top} \paren{\hat{\Psi}(\mX_n) \hat{\Psi}(\mX_n)^{\top} + n \beta_n \mI_d}^{-1} \hat{\Psi}(\mX_n) \Delta \vy .
\]
Similarly, we can show that the eigenvalues of $\hat{\Psi}(\mX_n)^{\top} \paren{\hat{\Psi}(\mX_n) \hat{\Psi}(\mX_n)^{\top} + n \beta_n \mI_d}^{-1} \hat{\Psi}(\mX_n)$ are $\frac{\mu_i}{\mu_i + n \beta_n}$, which are no larger than 1.
Thus, $\| \Delta \vf \|_2 \le \| \Delta \vy \|_2$.
And by \cref{eqn:thm-topd-est-rad}, we have
\[
\frac{\| \Delta \vy \|_2^2}{n} \le \| f^* - \tilde{f}_d \|_\px^2 + \| f^* - \tilde{f}_d \|_\hk^2 \frac{\bt^2}{\sqrt{n}} \paren{ 2 + \sqrt{2 \log \frac{2}{\delta}} }  .
\]

So by \cref{eqn:thm-topd-est-rad}, we have
\begin{align*}
\| \hat{f}_d - \hat{f}_d^\dag \|_\px^2 & \le \frac{\| \Delta \vf \|_2^2}{n} + \| \hat{f}_d - \hat{f}_d^\dag \|_\hk^2 \frac{\bt^2}{\sqrt{n}} \paren{ 2 + \sqrt{2 \log \frac{2}{\delta}} }  \\ 
& \le \frac{\| \Delta \vy \|_2^2}{n} \brac{1 + \frac{2 \bt^2}{\tilde{\lambda}_d \sqrt{n}} \paren{ 2 + \sqrt{2 \log \frac{2}{\delta}} } } \\ 
& \le \brac{\| f^* - \tilde{f}_d \|_\px^2 + \| f^* - \tilde{f}_d \|_\hk^2 \frac{\bt^2}{\sqrt{n}} \paren{ 2 + \sqrt{2 \log \frac{2}{\delta}} }} \brac{1 + \frac{2 \bt^2}{\tilde{\lambda}_d \sqrt{n}} \paren{ 2 + \sqrt{2 \log \frac{2}{\delta}} } }  \\ 
& \le \frac{3}{2} \paren{\| f^* - \tilde{f}_d \|_\px^2 + \frac{\tilde{\lambda}_d}{4} \| f^* - \tilde{f}_d \|_\hk^2} ,
\end{align*}
where the final step uses $n \ge \frac{4 \bt^4}{\tilde{\lambda}_d^2} \paren{ 4 +2 \sqrt{2 \log \frac{2}{\delta}} }^2 $.

Finally, we bound $\norm{\hat{f}_d^\dag - \tilde{f}_d }_\px$ using \cref{thm:steinwart} with $K = K_{\hat{\Psi}}$, $\alpha = \beta = 1$, and $\gamma = 0$.
Recall the four conditions:
\begin{itemize}
    \item (EVD) is satisfied for any $p \in (0, 1]$ since $K_{\hat{\Psi}}$ has at most $d$ non-zero eigenvalues.
    \item (EMB) is satisfied with $c_2 = \bt$ since $\|f\|_{\gH_{\hat{\Psi}}} = \| f \|_\hk$ for all $f \in \gH_{\hat{\Psi}}$.
    \item (SRC) is satisfied with $c_3 = \sqrt{\epsilon M}B$ since $\| \tilde{f}_d \|_{\gH_{\hat{\Psi}}} \le \| f^* \|_\hk \le \sqrt{\epsilon M}B$.
    \item (MOM) is satisfied by condition.
\end{itemize}
And we have $B_{\infty} = \bt \sqrt{\epsilon M} B$.
Thus, when $\tau \ge \bt^{-1}$, it holds with probability at least $1 - 4 e^{-\tau}$ that
\[
\norm{\hat{f}_d^\dag - \tilde{f}_d}_\px^2 \le \frac{c_0}{2} \tau^2 \brac{\paren{ \bt^2 M  \epsilon B^2 +   \bt^2 \sigma^2 } n^{-\frac{1}{1+p}} + \bt^2 \max \set{L^2, \bt^2 M \epsilon B^2} n^{-\frac{1+2p}{1+p}}  }  .
\]
Combining the two inequalities with $(a+b)^2 \le 2(a^2 + b^2)$ yields the result.  \qed

\subsection{Proof of Theorem \ref{thm:topd-approx}}
We start with the following lemma:
\begin{lem}
\label{lem:proof-topd-approx}
For any $g \in \hk$ such that $\|g\|_\hk = 1$ and $\langle g, \hat{\psi}_i \rangle_\hk = 0$ for all $i \in [d]$, there is
\begin{equation}
    \| \hat{\psi}_1 \|_\px^2 + \cdots + \| \hat{\psi}_d \|_\px^2 + \| g \|_\px^2 \le \lambda_1 + \cdots + \lambda_{d+1}   .
\end{equation}
\end{lem}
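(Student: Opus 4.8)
The plan is to recognize the statement as an instance of Ky Fan's maximum principle after moving to the natural orthonormal basis of $\hk$. First I would record the spectral bookkeeping: since $K$ is Mercer and centered, \cref{eqn:def-hkp} with $p=1$ gives $\langle \psi_i,\psi_j\rangle_\hk = \delta_{i,j}\lambda_i^{-1}$ for $i,j\in\sI$, so the rescaled functions $\tilde{\psi}_i := \sqrt{\lambda_i}\,\psi_i$, $i\in\sI$, form an orthonormal basis of $\hk$. Moreover, because $\{\psi_i\}$ is orthonormal in $\lxp$, any $h=\sum_{i\in\sI} a_i\tilde{\psi}_i\in\hk$ satisfies $\|h\|_\hk^2=\sum_i a_i^2$ and $\|h\|_\px^2=\sum_i \lambda_i a_i^2$ (the series converges since $\sum_i\lambda_i a_i^2\le\lambda_1\|h\|_\hk^2<\infty$). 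Thus in the coordinates $(a_i)$ the quadratic form $h\mapsto\|h\|_\px^2$ is diagonal with weights $(\lambda_i)$.

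Next I would observe that $\hat{\psi}_1,\dots,\hat{\psi}_d,g$ is a list of $d+1$ orthonormal vectors of $\hk$: each $\hat{\psi}_i$ lies in $\hk$ (it is a finite combination of $K_{x_{n+j}}$ by \cref{eqn:topd-psi-def}), $\langle \hat{\psi}_i,\hat{\psi}_j\rangle_\hk=\delta_{i,j}$ as noted right after \cref{eqn:topd-psi-def}, $\|g\|_\hk=1$, and $\langle g,\hat{\psi}_i\rangle_\hk=0$ by hypothesis. Call this list $e_1,\dots,e_{d+1}$ and expand $e_k=\sum_{i\in\sI}a_{ki}\tilde{\psi}_i$. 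Then $\sum_{k=1}^{d+1}\|e_k\|_\px^2=\sum_{i\in\sI}\lambda_i t_i$, where $t_i:=\sum_{k=1}^{d+1}a_{ki}^2$. Letting $P$ be the (countably-infinite) matrix with rows $(a_{ki})_i$, orthonormality of the rows means $PP^\ast=I_{d+1}$, so $P^\ast P$ is an orthogonal projection of rank $d+1$; hence $0\le t_i=(P^\ast P)_{ii}\le 1$ and $\sum_{i}t_i=\Tr(P^\ast P)=\Tr(PP^\ast)=d+1$.

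Finally, since $\lambda_1\ge\lambda_2\ge\cdots\ge 0$, the optimization $\max\{\sum_i\lambda_i t_i:\ 0\le t_i\le 1,\ \sum_i t_i=d+1\}$ is maximized by placing unit weight on the $d+1$ largest eigenvalues, giving $\sum_{i\in\sI}\lambda_i t_i\le\lambda_1+\cdots+\lambda_{d+1}$. Combining with the previous identity yields $\sum_{i=1}^d\|\hat{\psi}_i\|_\px^2+\|g\|_\px^2\le\lambda_1+\cdots+\lambda_{d+1}$, which is the claim (equivalently, one may phrase the last two steps via the inclusion operator $\iota:\hk\to\lxp$ and the compact positive operator $\iota^\ast\iota$, whose eigenvalues are exactly the $\lambda_i$, and invoke Ky Fan directly). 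This lemma is not hard; the only thing to be careful about is the infinite-dimensional bookkeeping — checking $(P^\ast P)_{ii}\in[0,1]$ and $\Tr(P^\ast P)=d+1$ — and no trace-class issue arises since only a finite-rank projection is involved and $\sum_i\lambda_i=\Tr(T_K)<\infty$ anyway.
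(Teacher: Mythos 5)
Your proof is correct and follows essentially the same route as the paper's: both pass to the $\hk$-orthonormal basis $\{\sqrt{\lambda_i}\,\psi_i\}$, note that $[\hat{\psi}_1,\dots,\hat{\psi}_d,g]$ is an orthonormal $(d+1)$-tuple so the coordinate matrix $\mQ$ satisfies $\mQ\mQ^\top=\mI_{d+1}$, and reduce the claim to bounding $\Tr(\mQ\mD_\lambda\mQ^\top)$. The only difference is that the paper then invokes Lemma 9 of \citet{zhai2023understanding} for the trace inequality, whereas you prove that step directly via the observation that the diagonal of the rank-$(d+1)$ projection $\mQ^\top\mQ$ lies in $[0,1]$ and sums to $d+1$, so the linear program is maximized on the top $d+1$ eigenvalues — a clean, self-contained version of the same Ky Fan argument.
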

\begin{proof}
    Let $\brac{\hat{\psi}_1, \cdots, \hat{\psi}_d, g} = \mQ \mD_{\lambda}^{1/2} \Psistar$, where $\mD_\lambda = \diag(\lambda_1, \lambda_2, \cdots)$, and $\Psistar = \brac{\psi_1, \psi_2, \cdots}$.
    Then, $\mQ \mQ^{\top} = \dotp{\brac{\hat{\psi}_1, \cdots, \hat{\psi}_d, g}, \brac{\hat{\psi}_1, \cdots, \hat{\psi}_d, g}}_\hk = \mI_{d+1}$, and
    \[
     \| \hat{\psi}_1 \|_\px^2 + \cdots + \| \hat{\psi}_d \|_\px^2 + \| g \|_\px^2 = \Tr \paren{\mQ \mD_\lambda \mQ^{\top}}   .
    \]
    So we obtain the result by applying Lemma 9 in \citet{zhai2023understanding}.
\end{proof}

Define  $\gF_d := \sset{f= \sum_{i=1}^d g_i^2}{g_i \in \hk, \langle g_i, g_j \rangle_\hk = \delta_{i,j}}$. 
We now bound its Rademacher complexity.
For any $x$, denote $\Psi(x) = [\lambda_1^{1/2} \psi_1(x), \lambda_2^{1/2} \psi_2(x), \cdots]$.
For any $S = \{ x_1,\cdots,x_m \}$, denote $\Psi_k = \Psi(x_k)$ for $k \in [m]$.
Let $g_i(x) = \vu_i^{\top} \Psi(x)$, and denote $\mU = [\vu_1,\cdots,\vu_d]$.
Then, $\mU^{\top} \mU = \mI_d$.
Let $\vsigma = [\sigma_1,\cdots,\sigma_m]$ be Rademacher variates.
Thus, the empirical Rademacher complexity satisfies:
\begin{align*}
    \hat{\rad}_S(\gF_d) & \le \underset{\vsigma}{\E} \brac{ \sup_{\vu_1,\cdots,\vu_d} \left | \frac{1}{m} \sum_{k=1}^m \sum_{i=1}^d \sigma_k \vu_i^{\top} \Psi_k \Psi_k^{\top} \vu_i  \right |  } \\ 
    & = \underset{\vsigma}{\E} \brac{ \sup_{\mU: \mU^{\top} \mU = \mI_d} \left | \Tr \paren{ \mU^{\top} \paren{ \frac{1}{m} \sum_{k=1}^m \sigma_k \Psi_k \Psi_k^{\top} } \mU } \right | } \\ 
    & = \underset{\vsigma}{\E} \brac{ \sup_{\mU: \mU^{\top} \mU = \mI_d} \left | \Tr \paren{  \paren{ \frac{1}{m} \sum_{k=1}^m \sigma_k \Psi_k \Psi_k^{\top} } \mU \mU^{\top} } \right | }   .
\end{align*}
Let $\mu_1 \ge \mu_2 \ge \cdots$ be the singular values of $\frac{1}{m} \sum_{k=1}^m \sigma_k \Psi_k \Psi_k^{\top}$.
For any $\mU$, the singular values of $\mU \mU^{\top}$ are $d$ ones and then zeros.
So by von Neumann's trace inequality, we have:
\[
\sup_{\mU: \mU^{\top} \mU = \mI_d} \left | \Tr \paren{  \paren{ \frac{1}{m} \sum_{k=1}^m \sigma_k \Psi_k \Psi_k^{\top} } \mU \mU^{\top} } \right | \le \mu_1 + \cdots + \mu_d \le \frac{\sqrt{d}}{m} \cdot \norm{\sum_{k=1}^m \sigma_k \Psi_k \Psi_k^{\top} }_F ,
\]
where the last step is Cauchy-Schwarz inequality applied to the diagonalized matrix.
So we have:
\[
\hat{\rad}_S(\gF_d) \le \frac{\sqrt{d}}{m} \underset{\vsigma}{\E} \brac{ \norm{\sum_{k=1}^m \sigma_k \Psi_k \Psi_k^{\top} }_F }  \le  \frac{\bt^2 \sqrt{d}}{\sqrt{m}} \qquad \text{almost surely} ,
\]
where the last step was proved in Proposition 13 of \citet{zhai2023understanding}.
Thus, for the Rademacher complexity we have $\rad_m(\gF_d) = \E_S[\hat{\rad}_S(\gF_d)] \le \frac{\bt^2 \sqrt{d}}{\sqrt{m}}$.
Moreover, for $\px$-almost all $x$, we have:
\[
\sum_{i=1}^d g_i(x)^2 = \Psi(x)^{\top} \paren{ \sum_{i=1}^d \vu_i \vu_i^{\top} } \Psi(x) =  \Psi(x)^{\top} \mU \mU^{\top} \Psi(x) \le \| \Psi(x)\|_2^2 \| \mU \mU^{\top} \|_2 = \| \Psi(x)\|_2^2 \le \bt^2 ,
\]
where the last step is because $\Psi(x)^{\top} \Psi(x) = \sum_i \lambda_i \psi_i(x)^2 \le \bt^2$ for all $x$.
Hence, by Theorem 4.10 of \citet{wainwright2019high}, we have:
\begin{equation}
\label{eqn:thm-topd-approx-rad}
\left | \frac{1}{m} \sum_{i=n+1}^{n+m} f(x_i) - \E_{X \sim \px}[f(X)] \right | \le \frac{\bt^2 }{\sqrt{m}} \paren{ 2 \sqrt{d} + \sqrt{2 \log \frac{2}{\delta}} } \qquad \text{for all } f \in \gF_d   
\end{equation}
holds with probability at least $1 - \frac{\delta}{2}$.
Let $F(x) = \sum_{i=1}^d \hat{\psi}_i(x)^2$. Then, $F \in \gF_d$.
And for all $i \in [d]$,
there is $[\hat{\psi}_i(x_{n+1}), \cdots, \hat{\psi}_i(x_{n+m})]^{\top} = \mG_{k,m} \vv_i = m \tilde{\lambda}_i \vv_i$,
so $\hat{\psi}_i(x_{n+1})^2 + \cdots + \hat{\psi}_i(x_{n+m})^2 = m^2 \tilde{\lambda}_i^2 \| \vv_i \|_2^2 = m \tilde{\lambda}_i$.
Thus, $\frac{1}{m} \sum_{i=1}^{n+m} F(x_i) = \tilde{\lambda}_1 + \cdots + \tilde{\lambda}_d$.
So if \cref{eqn:thm-topd-approx-rad} holds, then
\[
\| \hat{\psi}_1 \|_\px^2 + \cdots + \| \hat{\psi}_d \|_\px^2 \ge \tilde{\lambda}_1 + \cdots + \tilde{\lambda}_d - \frac{\bt^2 }{\sqrt{m}} \paren{ 2 \sqrt{d} + \sqrt{2 \log \frac{2}{\delta}} }   .
\]
Since $\hat{\lambda}_1,\cdots,\hat{\lambda}_d$ are the eigenvalues of $\frac{\mG_{k,m}}{m}$,
by Theorem 3.2 of \citet{blanchard:hal-00373789}, we have:
\[
\tilde{\lambda}_1 + \cdots + \tilde{\lambda}_d \ge \lambda_1 + \cdots + \lambda_d - \frac{\bt^2}{\sqrt{m}} \sqrt{\frac{1}{2} \log \frac{6}{\delta}}
\]
holds with probability at least $1 -\frac{\delta}{2}$.
By union bound, it holds with probability at least $1 - \delta $ that
\[
\| \hat{\psi}_1 \|_\px^2 + \cdots + \| \hat{\psi}_d \|_\px^2 \ge\lambda_1 + \cdots + \lambda_d - \frac{\bt^2}{\sqrt{m}} \paren{2 \sqrt{d} + 3\sqrt{ \log \frac{6}{\delta}}}  .
\]

Let $f^* - \tilde{f}_d = bg$, where $b \in \R$, and $g \in \hk$ satisfies $\|g\|_\hk = 1 $ and $\langle g, \hat{\psi}_i\rangle_\hk = 0$ for $i \in [d]$.
Then, by \cref{lem:proof-topd-approx}, we have
\[
\| g \|_\px^2 \le \lambda_{d+1} + \frac{\bt^2}{\sqrt{m}} \paren{2 \sqrt{d} + 3\sqrt{ \log \frac{6}{\delta}}}  .
\]
Let $a = \| \tilde{f}_d \|_\hk$. Then, $\| f^* \|_\hk^2 = a^2 + b^2$.
Since $\| f \|_\hk^2 \le \epsilon M \| f^* \|_\px^2$, we have:
\[
\frac{a^2 + b^2}{\epsilon M} \le \| f^* \|_\px^2 \le \paren{\| \tilde{f}_d \|_\px + b \| g \|_\px}^2 \le \paren{a \sqrt{\lambda_1}  +  b \| g \|_\px}^2 \le 2 (a^2 \lambda_1 + b^2 \| g \|_\px^2)  ,
\]
which implies that
\[
(\lambda_1 - \| g \|_\px^2) b^2 \le \paren{\lambda_1  - \frac{1}{2 \epsilon M}} (a^2 + b^2) \le \paren{\lambda_1 \epsilon M - \frac{1}{2}} B^2,
\]
which completes the proof.  
\qed

\section{Details of Numerical Implementations}
\label{app:numerical}

\begin{algorithm}[!ht]
\caption{Directly solving STKR}
\label{alg:naive - full}
\begin{algorithmic}[1]
\Require $K(x,x')$, $s(\lambda) = \sum_{p=1}^q \pi_p \lambda^p$, $\beta_n$, $\vy$
\Statex \textit{\# $\mG_{K,n+m,n}$ is the left $n$ columns of $\mG_K$}
\State Initialize: $\mM \gets \mG_{K,n+m,n} \in \R^{(n+m) \times n}$
\State $\mA \gets n \beta_n \mI_n + \pi_1 \mG_{K,n} \in \R^{n \times n}$
\For {$p = 2, \cdots, q$}
    \State $\mA \gets \mA + \frac{\pi_p}{n+m} \mG_{K,n+m,n}^{\top} \mM$ 
    \State $\mM \gets \frac{1}{n+m} \mG_K \mM$ 
\EndFor
\State Solve $\mA \hat{\valpha} = \vy$ 
\end{algorithmic}
\end{algorithm}

STKR amounts to solving $\mA \hat{\valpha} = \vy$  for $\mA = \mG_{\hatks, n} + n \beta_n \mI_n$.
First, consider $s(\lambda) = \sum_{p=1}^q \pi_p \lambda^p$ with $q < \infty$.
Provided that computing $K(x,x')$ for any $x,x'$ takes $O(1)$ time, directly computing $\mA$ and then solving $\mA \hat{\valpha} = \vy$ as described above has a time complexity of $O((n+m)^2 n  q)$ as it performs $O(q)$ matrix multiplications, and a space complexity of $O((n+m)n)$. Calculating $\mA$ directly could be expensive since it may require many matrix-matrix multiplications.

Alternatively we can use iterative methods, such as Richardson iteration that solves $\mA \hat{\valpha} = \vy$ with $\hat{\valpha}_{t+1} = \hat{\valpha}_t - \gamma (\mA \hat{\valpha}_t - \vy)$ for some $\gamma > 0$, as described in \cref{alg:richardson} in the main text.
This is faster than the direct method since it replaces matrix-matrix multiplication with matrix-vector multiplication.

It is well-known that with a proper $\gamma$, it takes $O(\kappa(\mA) \log \frac{1}{\epsilon})$ Richardson iterations to ensure $\| \hat{\valpha}_t - \hat{\valpha}_* \|_2 < \epsilon \| \hat{\valpha}_* \|_2$, 
where $\hat{\valpha}_*$ is the real solution, and $\kappa(\mA)$ is the condition number of $\mA$.
Let $\lambda$ be a known upper bound of $\lambda_1$ (\eg{} for augmentation-based pretraining, $\lambda = 1$ \citep{zhai2023understanding}).
With a sufficiently large $n$, we have $\kappa(\mA) = O(\beta_n^{-1} s(\lambda))$ almost surely, so Richardson iteration has a total time complexity of $O(  (n+m)^2  \beta_n^{-1}  s(\lambda) q \log \frac{1}{\epsilon})$,
and a space complexity of $O(n+m)$.
The method can be much faster if $K$ is sparse.
For instance, if $K$ is the adjacency matrix of a graph with $|E|$ edges, then each iteration only takes $O(q \cdot |E|)$ time instead of $O(q (n+m)^2)$.

Next, we consider the case where $s$  could be complex, but $s^{-1}(\lambda) = \sum_{p=0}^{q-1} \xi_p \lambda^{p-r}$ is simple,
such as the inverse Laplacian (\cref{exp:inv-lap}).
In this case we cannot compute $\mG_{\hatks,n}$,
but if we define $\mQ := \sum_{p=0}^{q-1} \xi_p \paren{\frac{\mG_K}{n+m}}^p$, then there is $\mG_{\hatks}\mQ = (n+m)\paren{\frac{\mG_K}{n+m}}^{r}$.
With this observation, we use an indirect approach to find $\hat{\valpha}$, which is to find a $\vtheta \in \R^{n+m}$ such that $\mQ \vtheta = [\hat{\valpha}, \vzero_m]^{\top}$.
Note that by the definition of $\hat{\valpha}$, the first $n$ elements of $\paren{\mG_{\hatks} + n \beta_n \mI_{n+m}} [\hat{\valpha}, \vzero_m]^{\top} = \paren{\mG_{\hatks} + n \beta_n \mI_{n+m}} \mQ \vtheta = \brac{(n+m)\paren{\frac{\mG_K}{n+m}}^{r} + n \beta_n \mQ} \vtheta$ is $\vy$, which provides $n$ linear equations.
The last $m$ elements of $\mQ \vtheta$ are zeros, which gives $m$ linear equations.
Combining these two gives an $(n+m)$-dimensional linear system, which we simplify as:
\begin{equation}
\label{eqn:mthetay}
\mM \vtheta = \tilde{\vy}, \quad \text{where } \mM := (n+m) \tilde{\mI}_{n} \paren{\frac{\mG_K}{n+m}}^{r} + n \beta_n \mQ, \quad \text{and } \tilde{\vy} := [\vy, \vzero_m]^{\top} .
\end{equation}
Here, $\tilde{\mI}_n := \diag \{1,\cdots,1,0,\cdots,0 \}$, with $n$ ones and $m$ zeros.
Again, we can find $\vtheta$ by Richardson iteration, as described in \cref{alg:richardson-inverse}, with $O(n+m)$ space complexity.
Let $\vv_K$ be defined as in \cref{eqn:hatks-def}.
Then, at inference time, one can compute $\hat{f}(x) =  \vv_K(x)^{\top} \paren{\frac{\mG_K}{n+m}}^{r-1} \vtheta$ in $O(n+m)$ time by storing $\paren{\frac{\mG_K}{n+m}}^{r-1} \vtheta$ in the memory.

We now discuss the time complexity of \cref{alg:richardson-inverse}.
We cannot use the previous argument because now $\mM$ is not symmetrical.
Let $\rho(\lambda) := \frac{\lambda^r}{s(\lambda)} = \sum_{p=0}^{q-1} \xi_p \lambda^p$, where $\rho(0) = \xi_0 > 0$ .
Then, $\rho(\lambda)$ is continuous on $[0, +\infty)$.
Denote its maximum and minimum on $[0, \lambda]$ by $\rho_{\max}$ and $\rho_{\min}$, where again $\lambda$ is a known upper bound of $\lambda_1$.
Then, we have the following:
\begin{prop}
\label{prop:richardson-inverse}
    With $\gamma = (n \lambda^r)^{-1}$, \cref{alg:richardson-inverse} takes $O\paren{ \frac{\lambda^r}{\beta_n \rho_{\min}} \log \paren{ \max \set{ \frac{1}{\epsilon}, \frac{\lambda^r \rho_{\max } \| \vy \|_2 }{ n \beta_n^2 \rho_{\min}^2 \| \hat{\valpha}_* \|_2 } }}  }$ iterations so that $\| \hat{\valpha} - \hat{\valpha}_* \|_2 \le \epsilon \| \hat{\valpha}_* \|_2$ almost surely for sufficiently large $n$, where $\hat{\valpha}_*$ is the ground truth solution and $\hat{\valpha}$ is the output of the algorithm.
     Each iteration takes $O(\max \set{q,r} (n+m)^{2})$ time.
    Thus, the total time complexity is $O\paren{ (n+m)^2 \beta_n^{-1} \frac{\max \set{q,r}  \lambda^r}{\rho_{\min}} \log \paren{ \max \set{ \frac{1}{\epsilon}, \frac{\lambda^r \rho_{\max } \| \vy \|_2 }{ n \beta_n^2 \rho_{\min}^2 \| \hat{\valpha}_* \|_2 } }}  }$.
\end{prop}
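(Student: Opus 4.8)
\noindent\emph{Proof strategy.} Write $\mathbf{P}:=\tfrac{1}{n+m}\mG_K$, a symmetric positive semidefinite matrix, so that $\mQ=\sum_{p=0}^{q-1}\xi_p\mathbf{P}^p=\rho(\mathbf{P})$ and $\mM=(n+m)\tilde{\mI}_n\mathbf{P}^r+n\beta_n\mQ$ are built out of $\mathbf{P}$ and the projection $\tilde{\mI}_n$. First I would fix the regime: almost surely for large $n$ the top eigenvalue $\hat{\lambda}_1$ of $\mathbf{P}$ is at most $\lambda$ (this is \cref{lem:hat-lambda-bound} together with $\lambda_1\le\lambda$), so $\mathrm{spec}(\mathbf{P})\subseteq[0,\lambda]$; since $\rho$ is a polynomial, continuous and positive on $[0,\lambda]$, $\mQ=\rho(\mathbf{P})$ is symmetric positive definite with $\mathrm{spec}(\mQ)\subseteq[\rho_{\min},\rho_{\max}]$, $\mQ^{1/2}$ exists, and $\mM$ is invertible. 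Then I would record the dictionary between the $\vtheta$-iteration of \cref{alg:richardson-inverse} and the target $\hat{\valpha}$: by the derivation of \cref{eqn:mthetay}, the fixed point $\vtheta_*:=\mM^{-1}\tilde{\vy}$ satisfies $\mQ\vtheta_*=[\hat{\valpha}_*,\vzero_m]^\top$, so that $\hat{\valpha}_t-\hat{\valpha}_*=(\mQ(\vtheta_t-\vtheta_*))[1:n]$, whence $\|\hat{\valpha}_t-\hat{\valpha}_*\|_2\le\rho_{\max}\|\vtheta_t-\vtheta_*\|_2$, while $\|\hat{\valpha}_*\|_2=\|\mQ\vtheta_*\|_2\ge\rho_{\min}\|\vtheta_*\|_2$.

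Next, the residual $\vr_t:=\mM\vtheta_t-\tilde{\vy}$ obeys $\vr_{t+1}=(\mI-\gamma\mM)\vr_t$, hence $\vr_t=(\mI-\gamma\mM)^t\vr_0$ with $\|\vr_0\|_2=\|\tilde{\vy}\|_2=\|\vy\|_2$, and $\vtheta_t-\vtheta_*=\mM^{-1}\vr_t$, so $\|\hat{\valpha}_t-\hat{\valpha}_*\|_2\le\rho_{\max}\|\mM^{-1}\|_2\|(\mI-\gamma\mM)^t\|_2\|\vy\|_2$. Consequently, if $\|(\mI-\gamma\mM)^t\|_2\le g(t)(1-c)^t$ with $c$ the contraction rate and $g$ a slowly growing factor (see below), then $t$ iterations guarantee $\|\hat{\valpha}_t-\hat{\valpha}_*\|_2\le\epsilon\|\hat{\valpha}_*\|_2$ as soon as $g(t)(1-c)^t\,\rho_{\max}\|\mM^{-1}\|_2\|\vy\|_2\le\epsilon\|\hat{\valpha}_*\|_2$; solving for $t$ and substituting $c=\Theta(\beta_n\rho_{\min}/\lambda^r)$ and $\|\mM^{-1}\|_2=O((n\beta_n\rho_{\min})^{-1})$ produces exactly $O\!\big(\tfrac{\lambda^r}{\beta_n\rho_{\min}}\log\max\{\tfrac1\epsilon,\tfrac{\lambda^r\rho_{\max}\|\vy\|_2}{n\beta_n^2\rho_{\min}^2\|\hat{\valpha}_*\|_2}\}\big)$, the $\max$ appearing because when $\epsilon\|\hat{\valpha}_*\|_2$ is tiny the target is controlled by the $\epsilon^{-1}$ term, and $g(t)$ being absorbed into the logarithm.

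The crux is the spectrum of the non-symmetric $\mM$. I would pass to the similar matrix $\mathbf{T}:=\mQ^{-1/2}\mM\mQ^{-1/2}=(n+m)\mathbf{A}_1\mathbf{P}^r+n\beta_n\mI$, where $\mathbf{A}_1:=\mQ^{-1/2}\tilde{\mI}_n\mQ^{-1/2}\succeq0$ and I used that $\mathbf{P}$ commutes with $\mQ$. The product $\mathbf{A}_1\mathbf{P}^r$ of two symmetric positive semidefinite matrices has real nonnegative spectrum, coinciding (on its range) with that of the symmetric positive semidefinite $\mathbf{B}:=\mathbf{P}^{r/2}\mathbf{A}_1\mathbf{P}^{r/2}=\mQ^{-1/2}\big(\mathbf{P}^{r/2}\tilde{\mI}_n\mathbf{P}^{r/2}\big)\mQ^{-1/2}$; hence $\mathrm{spec}(\mathbf{T})$ is real and contained in $[n\beta_n,\,n\beta_n+(n+m)\lambda_{\max}(\mathbf{B})]$. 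The lower edge already gives $\|\mM^{-1}\|_2=\|\mQ^{-1/2}\mathbf{T}^{-1}\mQ^{-1/2}\|_2\le\rho_{\min}^{-1}\|\mathbf{T}^{-1}\|_2=O(1/(n\beta_n\rho_{\min}))$ up to the non-normality factor below, and contraction rate $1-\gamma n\beta_n\rho_{\min}=1-\beta_n\rho_{\min}/\lambda^r$ at that edge. The delicate step is the upper edge: the naive estimate $\lambda_{\max}(\mathbf{B})\le\rho_{\min}^{-1}\hat{\lambda}_1^r\le\rho_{\min}^{-1}\lambda^r$ loses a factor $(n+m)/n$, so I would instead show that a.s. for large $n$ one has $\lambda_{\max}\!\big(\mathbf{P}^{r/2}\tilde{\mI}_n\mathbf{P}^{r/2}\big)=O\!\big(\tfrac{n}{n+m}\lambda^r\big)$, using that $K(x,x)\le\bt^2$ makes the population eigenfunctions bounded and that $\tfrac{1}{n+m}\mG_K$ concentrates onto $T_K$ with its leading empirical eigenfunctions close to—hence as delocalized as—$\psi_1,\psi_2,\dots$, so their empirical mass on the first $n$ of the $n+m$ i.i.d.\ samples is $\approx\tfrac{n}{n+m}$ (in the spirit of the concentration results already invoked, \citet{shawe2005eigenspectrum,blanchard:hal-00373789}). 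This yields $(n+m)\lambda_{\max}(\mathbf{B})=O(n\lambda^r)$, so with $\gamma=(n\lambda^r)^{-1}$ the map $\mI-\gamma\mathbf{T}$ has real spectrum strictly inside $(-1,1)$ with spectral radius $\le1-\beta_n\rho_{\min}/\lambda^r$. Finally $\mathbf{T}$ need not be diagonalizable—the nilpotent part of $\mathbf{A}_1\mathbf{P}^r$ at $0$ becomes a Jordan block of $\mathbf{T}$ at $n\beta_n$—so $\|(\mI-\gamma\mathbf{T})^t\|_2\le P(t)(1-\beta_n\rho_{\min}/\lambda^r)^{t-O(1)}$ for a fixed polynomial $P$, and transferring back through the similarity costs only $\|\mQ^{1/2}\|_2\|\mQ^{-1/2}\|_2=\sqrt{\rho_{\max}/\rho_{\min}}$; these supply the $g(t)$ and $\|\mM^{-1}\|_2$ used above.

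For the per-iteration cost, one pass of the while loop in \cref{alg:richardson-inverse} performs the Horner recursion for $\mQ\vtheta$ ($q$ multiplications by $\mathbf{P}$), the computation of $(\mG_K^r/(n+m)^{r-1})\vtheta$ ($r$ multiplications by $\mG_K$), and $O(n+m)$ vector operations, i.e.\ $O(\max\{q,r\})$ matrix--vector products at $O((n+m)^2)$ each for dense $\mG_K$ (and $O(|E|)$ each when $\mG_K$ is a graph adjacency matrix with $|E|$ edges); multiplying by the iteration count gives the stated total time, and the sparse case analogously. I expect the genuine obstacle to be the spectral paragraph: both the $(n+m)/n$-sharpening that shows the mixed term $(n+m)\tilde{\mI}_n\mathbf{P}^r$ contributes only $O(n\lambda^r)$ rather than $O(n+m)$ (for which one must quantify concentration and delocalization of the top empirical eigenfunctions, and control the interaction with $\mQ^{-1/2}$), and the passage from spectral-radius control of the \emph{non-normal} $\mM$ to an honest bound on $\|(\mI-\gamma\mM)^t\|_2$ and $\|\mM^{-1}\|_2$.
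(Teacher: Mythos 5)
Your skeleton (residual recursion $\vr_t=(\mI-\gamma\mM)^t\vr_0$, reduce to a spectral bound after a similarity transform) is the right high-level shape, but the two steps you flag as delicate are exactly where the proposal does not go through, and the paper handles both by different — and essential — choices.

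First, the conjugator. You symmetrize with $\mQ^{1/2}$, but $\mQ^{-1/2}\mM\mQ^{-1/2}=(n+m)\mathbf{A}_1\mathbf{P}^r+n\beta_n\mI$ with $\mathbf{A}_1:=\mQ^{-1/2}\tilde{\mI}_n\mQ^{-1/2}$ is \emph{not} symmetric (the two PSD factors do not commute), so you inherit all the non-normality and must bound $\|(\mI-\gamma\mathbf{T})^t\|_2$ rather than the spectral radius. You handle this by invoking a polynomial prefactor $g(t)$, but the size of the relevant Jordan blocks of $\mathbf{A}_1\mathbf{P}^r$ at eigenvalue $0$ is not controlled, so you have not established that $g$ is a fixed-degree polynomial; "absorbing $g(t)$ into the logarithm" is therefore not justified. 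The paper avoids the issue entirely by using the one-sided similarity $\mathbf{P}^{r/2}\mM=\mR\,\mathbf{P}^{r/2}$ with $\mathbf{P}:=\mG_K/(n+m)$ and $\mR:=(n+m)\mathbf{P}^{r/2}\tilde{\mI}_n\mathbf{P}^{r/2}+n\beta_n\mQ$. Because $\mathbf{P}$ and $\mQ$ commute (both are polynomials in $\mG_K$), $\mR$ is genuinely symmetric PSD, every matrix whose powers one needs to bound (namely $\mI-\gamma\mR$ and, in the final telescoping, $\mI-\gamma n\beta_n\mQ$) is symmetric, and no Jordan considerations arise at all.

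Second, the $(n+m)/n$ sharpening. You correctly notice that the naive bound $\lambda_{\max}\!\big((n+m)\mathbf{P}^{r/2}\tilde{\mI}_n\mathbf{P}^{r/2}\big)\le(n+m)\hat{\lambda}_1^r$ would make the iteration diverge at $\gamma=(n\lambda^r)^{-1}$, and you propose to recover a factor $n/(n+m)$ via concentration of the empirical eigenvectors onto the population ones together with delocalization of the latter. That route is neither needed nor sound here: delocalization is not among the paper's assumptions (indeed the remark after Theorem~8 explicitly advertises \emph{not} needing it), and eigenvector concentration would require eigengap control that is unavailable. The paper's observation is purely algebraic: write $(n+m)\mathbf{P}^{r/2}\tilde{\mI}_n\mathbf{P}^{r/2}=\mathbf{P}^{(r-1)/2}\big(\mG_K^{1/2}\tilde{\mI}_n\mG_K^{1/2}\big)\mathbf{P}^{(r-1)/2}$, and by Sylvester the nonzero eigenvalues of the middle factor $\mG_K^{1/2}\tilde{\mI}_n\mG_K^{1/2}$ equal those of $\tilde{\mI}_n\mG_K\tilde{\mI}_n$, i.e.\ of the $n\times n$ matrix $\mG_{K,n}$ padded with zeros, whose operator norm is a.s.\ at most $n\lambda$. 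Sandwiching with $\|\mathbf{P}^{(r-1)/2}\|_2^2\le\lambda^{r-1}$ gives $\tilde{\lambda}_{\max}(\mR)\le n\lambda^r+n\beta_n\rho_{\max}$ with no probabilistic input beyond the a.s.\ bound $\hat{\lambda}_1\le\lambda$ and $\tfrac1n\|\mG_{K,n}\|_2\to\lambda_1$. Without this factoring trick, your outline does not close; with it, the concentration/delocalization paragraph is unnecessary.
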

\begin{proof}

Let $\hat{\lambda}_1 \ge \cdots \ge \hat{\lambda}_{n+m}$ be the eigenvalues of $\frac{\mG_K}{n+m}$.
It is easy to show that $\mQ$ has the same eigenvectors as $\frac{\mG_K}{n+m}$, with eigenvalues $g(\hat{\lambda}_1),\cdots,g(\hat{\lambda}_2)$.
By \cref{lem:hat-lambda-bound} and Borel-Cantelli lemma, as $n \rightarrow \infty$, $\hat{\lambda}_1 \xrightarrow{\textit{a.s.}} \lambda_1$.
For simplicity, let us assume that $\lambda$ is slightly larger than $\lambda_1$, so almost surely there is $\hat{\lambda}_1 \le \lambda$.
Then, all eigenvalues of $\mQ$ are in $[\rho_{\min}, \rho_{\max}]$.

The first part of this proof is to bound $\norm{\vu_t}_2$, where $\vu_t := (n+m) \tilde{\mI}_n \paren{\frac{\mG_K}{n+m}}^{r} (\vtheta_* - \vtheta_t)$.
Let $\vtheta_t$ be the $\vtheta$ at iteration $t$, and $\vtheta_*$ be the solution to \cref{eqn:mthetay}. Since $\vtheta_0 = \vzero$, we have
\begin{equation}
\label{eqn:proof-richardson-1}
\begin{aligned}
    \vtheta_* - \vtheta_t = & \brac{\paren{\mI_{n+m} - \gamma \brac{(n+m) \tilde{\mI}_n \paren{\frac{\mG_K}{n+m}}^{r} + n \beta_n \mQ }} \vtheta_* + \gamma \tilde{\vy} } \\
&  - \brac{\paren{\mI_{n+m} - \gamma \brac{(n+m) \tilde{\mI}_n \paren{\frac{\mG_K}{n+m}}^{r} + n \beta_n \mQ }} \vtheta_{t-1} + \gamma \tilde{\vy} } \\ 
= & \paren{\mI_{n+m} - \gamma \brac{(n+m) \tilde{\mI}_n \paren{\frac{\mG_K}{n+m}}^{r} + n \beta_n \mQ }} \paren{ \vtheta_* - \vtheta_{t-1} } \\ 
= & \paren{\mI_{n+m} - \gamma \brac{(n+m) \tilde{\mI}_n \paren{\frac{\mG_K}{n+m}}^{r} + n \beta_n \mQ }}^t \vtheta_*  .
\end{aligned}
\end{equation}

Note that
\begin{align*}
& \paren{\frac{\mG_K}{n+m}}^{r/2} \paren{\mI_{n+m} - \gamma \brac{(n+m) \tilde{\mI}_n \paren{\frac{\mG_K}{n+m}}^{r} + n \beta_n \mQ }} \\
= \; & \paren{\mI_{n+m} - \gamma \brac{(n+m) \paren{\frac{\mG_K}{n+m}}^{r/2} \tilde{\mI}_n \paren{\frac{\mG_K}{n+m}}^{r/2} + n \beta_n \mQ }} \paren{\frac{\mG_K}{n+m}}^{r/2}  .
\end{align*}

Thus, by propagating $\paren{\frac{\mG_K}{n+m}}^{r/2}$ from left to right, we get
\[
\paren{\frac{\mG_K}{n+m}}^{r/2} (\vtheta_* - \vtheta_t) = \paren{\mI_{n+m} - \gamma \mR }^t \paren{\frac{\mG_K}{n+m}}^{r/2}  \vtheta_*  ,
\]
where $\mR := (n+m) \paren{\frac{\mG_K}{n+m}}^{r/2} \tilde{\mI}_n \paren{\frac{\mG_K}{n+m}}^{r/2} + n \beta_n \mQ$ is a \psd{} matrix.
Denote the smallest and largest eigenvalues of $\mR$ by $\tilde{\lambda}_{\min}$ and $\tilde{\lambda}_{\max}$.
Then, $\tilde{\lambda}_{\min} \ge n \beta_n \rho_{\min}$.
In terms of $\tilde{\lambda}_{\max}$, we have
\[
(n+m) \paren{\frac{\mG_K}{n+m}}^{r/2} \tilde{\mI}_n \paren{\frac{\mG_K}{n+m}}^{r/2}  = \paren{\frac{\mG_K}{n+m}}^{\frac{r - 1}{2} } \paren{\mG_K^{\frac{1}{2}} \tilde{\mI}_n \mG_K^{\frac{1}{2}}} \paren{\frac{\mG_K}{n+m}}^{\frac{r - 1}{2}}  .
\]
By Sylvester's theorem, all non-zero eigenvalues of $\mG_K^{\frac{1}{2}} \tilde{\mI}_n \mG_K^{\frac{1}{2}}$ are the eigenvalues of $\tilde{\mI}_n \mG_K \tilde{\mI}_n$,
\ie{} the non-zero eigenvalues of $\mG_{K,n}$.
By \cref{lem:hat-lambda-bound}, $\frac{1}{n} \norm{\mG_{K,n}}_2 \xrightarrow{\textit{a.s.}} \lambda_1$,
so suppose $\norm{\mG_{K,n}}_2 \le n \lambda$.
Then, $\tilde{\lambda}_{\max} \le n \lambda^r + n \beta_n \rho_{\max}$.

Since $\mM \vtheta_* = \tilde{\vy}$, and $\paren{\frac{\mG_K}{n+m}}^{r/2} \mM = \mR \paren{\frac{\mG_K}{n+m}}^{r/2}$, we have $\mR \paren{\frac{\mG_K}{n+m}}^{r/2} \vtheta_* = \paren{\frac{\mG_K}{n+m}}^{r/2} \tilde{\vy}$.
Note that $\mR(\mI_{n+m} - \gamma \mR) = (\mI_{n+m} - \gamma \mR) \mR$.
Thus, we have
\begin{align*}
\paren{\frac{\mG_K}{n+m}}^{r/2} (\vtheta_* - \vtheta_t) & = \paren{\mI_{n+m} - \gamma \mR }^t \paren{\frac{\mG_K}{n+m}}^{r/2}  \vtheta_* \\ 
& = \mR^{-1} (\mI_{n+m} - \gamma \mR)^t \mR \paren{\frac{\mG_K}{n+m}}^{r/2}  \vtheta_* \\ 
& = \mR^{-1} (\mI_{n+m} - \gamma \mR)^t \paren{\frac{\mG_K}{n+m}}^{r/2} \tilde{\vy}  .
\end{align*}

Now we bound $\| \vu_t \|_2$.
First, note that for any matrices $\mA, \mB \in \R^{d \times d}$ where $\mB$ is \psd{},
there is $\vu^{\top} \mA^{\top} \mB \mA \vu \le \| \mB \|_2 \| \mA \vu \|_2^2 \le \| \mB \|_2 \| \mA^{\top} \mA \|_2 \| \vu \|_2^2$ for any $\vu \in \R^d$,
so $\| \mA^{\top} \mB \mA \|_2 \le \| \mB \|_2 \| \mA^{\top} \mA \|_2$.
Second, note that the last $m$ elements of $\tilde{\vy}$ are zeros,
which means that $\tilde{\vy} = \tilde{\mI}_n \tilde{\vy}$. Thus, we have
\begin{align*}
\| \vu_t \|_2 & = \norm{ (n+m) \tilde{\mI}_n \paren{\frac{\mG_K}{n+m}}^r (\vtheta_* - \vtheta_t) }_2 \\
& = \norm{(n+m) \tilde{\mI}_n \paren{\frac{\mG_K}{n+m}}^{r/2}  \mR^{-1}  \paren{\mI_{n+m} - \gamma \mR }^t \paren{\frac{\mG_K}{n+m}}^{r/2}  \tilde{\vy} }_2 \\ 
& = \norm{(n+m) \tilde{\mI}_n \paren{\frac{\mG_K}{n+m}}^{r/2}  \paren{\mI_{n+m} - \gamma \mR }^{t/2} \mR^{-1} \paren{\mI_{n+m} - \gamma \mR }^{t/2} \paren{\frac{\mG_K}{n+m}}^{r/2} \tilde{\mI}_n \tilde{\vy} }_2 \\ 
& \le \norm{\paren{\mI_{n+m} - \gamma \mR }^{t/2} \mR^{-1} \paren{\mI_{n+m} - \gamma \mR }^{t/2}}_2 \norm{(n+m) \tilde{\mI}_n \paren{\frac{\mG_K}{n+m}}^r \tilde{\mI}_n }_2 \norm{\tilde{\vy}}_2 \\
& \le \frac{1}{\tilde{\lambda}_{\min}} \norm{\mI_{n+m} - \gamma \mR }_2^t (n \lambda_1^r) \| \vy \|_2 ,
\end{align*}
where the last step is because we have already proved $\norm{(n+m) \tilde{\mI}_n \paren{\frac{\mG_K}{n+m}}^r \tilde{\mI}_n }_2 \le n \lambda_1^r$.

Now, for $\gamma = \frac{1}{n \lambda^r }$, when $n$ is sufficiently large it is less than $\frac{2}{ \tilde{\lambda}_{\max} + \tilde{\lambda}_{\min} }$, because $\beta_n = o(1)$.
Thus, $\norm{\mI_{n+m} - \gamma \mR }_2 \le 1 - \frac{\tilde{\lambda}_{\min}}{n \lambda^r} \le 1 - \frac{\beta_n \rho_{\min} }{\lambda^r} $.
Thus, we have

\[
\| \vu_t \|_2 \le \paren{1 - \frac{\beta_n \rho_{\min} }{\lambda^r} }^t   \frac{\lambda^r}{\beta_n \rho_{\min} } \| \vy \|_2  .
\]

The second part of this proof is to bound $\| \mQ(\vtheta_* - \vtheta_t)  \|_2$.
Let us return to \cref{eqn:proof-richardson-1}, which says that
\begin{align*}
    \norm{ \mQ( \vtheta_* - \vtheta_{t+1} ) }_2 & = \norm{ \paren{ \mI_{n+m} - \gamma n \beta_n \mQ } \mQ (\vtheta_* - \vtheta_t ) - \gamma \mQ \vu_t }_2 \\
    & \le \paren{ 1 - \frac{\beta_n \rho_{\min}}{\lambda^r} } \norm{ \mQ (\vtheta_* - \vtheta_{t}) }_2 + \frac{\rho_{\max}}{n \lambda^r} \| \vu_t \|_2  .
\end{align*}
Here again, we assume that $n$ is large enough so that $\lambda^r > \beta_n \rho_{\min}$. This implies that
\begin{align*}
& \norm{\mQ (\vtheta_* - \vtheta_{t+1}) }_2 - t \paren{1 - \frac{\beta_n \rho_{\min} }{\lambda^r} }^t \frac{ \rho_{\max} \| \vy \|_2}{n \beta_n \rho_{\min}} \\
\le \; & \paren{1 - \frac{\beta_n \rho_{\min} }{\lambda^r} } \brac{ \norm{\mQ(\vtheta_* - \vtheta_{t})}_2 - (t-1) \paren{1 - \frac{\beta_n \rho_{\min} }{\lambda^r} }^{t-1} \frac{ \rho_{\max} \| \vy \|_2}{n \beta_n \rho_{\min}} } \\ 
\le \; & \cdots \le \paren{1 - \frac{\beta_n \rho_{\min} }{\lambda^r} }^t \brac{ \paren{1 - \frac{\beta_n \rho_{\min} }{\lambda^r} } \| \mQ \vtheta_* \|_2 + \frac{ \rho_{\max} \| \vy \|_2}{n \beta_n \rho_{\min}} }  .
\end{align*}
Thus, there is $\norm{ \mQ ( \vtheta_* - \vtheta_{t} ) }_2 \le \paren{1 - \frac{\beta_n \rho_{\min} }{\lambda^r} }^t \| \mQ \vtheta_* \|_2 + t \paren{1 - \frac{\beta_n \rho_{\min} }{\lambda^r} }^{t-1}  \frac{ \rho_{\max} \| \vy \|_2}{n \beta_n \rho_{\min}} $.
Using $1-x \le e^{-x}$, we have
\[
\norm{ \mQ ( \vtheta_* - \vtheta_{t} ) }_2 \le \exp \paren{ - \frac{\beta_n \rho_{\min} t }{\lambda^r} } \| \mQ \vtheta_* \|_2 + t \exp \paren{ - \frac{\beta_n \rho_{\min} (t-1) }{\lambda^r} } \frac{ \rho_{\max} \| \vy \|_2}{n \beta_n \rho_{\min}}  .
\]
When $t = t_0 :=  \frac{4 \lambda^r}{\beta_n \rho_{\min}} \log \frac{2 \lambda^r \rho_{\max } \| \vy \|_2 }{ n \beta_n^2 \rho_{\min}^2 \| \mQ \vtheta_* \|_2 }  $, by $\log (2x) \le x$ for $x > 0$, we have
\[
\exp \paren{  \frac{\beta_n \rho_{\min}  }{\lambda^r}  \frac{t}{2} } \ge \paren{\frac{2 \lambda^r \rho_{\max } \| \vy \|_2 }{ n \beta_n^2 \rho_{\min}^2 \| \mQ \vtheta_* \|_2 }}^2 \ge \frac{4 \lambda^r \rho_{\max} \| \vy \|_2 }{ n \beta_n^2 \rho_{\min}^2 \| \mQ \vtheta_* \|_2 } \log \paren{ \frac{2 \lambda^r \rho_{\max} \| \vy \|_2 }{ n \beta_n^2 \rho_{\min}^2 \| \mQ \vtheta_* \|_2 } } .
\]
Let $F(t) := \exp \paren{ \frac{\beta_n \rho_{\min}}{2 \lambda^r} t } - \frac{\rho_{\max } \| \vy \|_2 }{ n \beta_n \rho_{\min} \| \mQ \vtheta_* \|_2  } t $. Then we have $F(t_0) \ge 0$.
And it is easy to show that for all $t \ge \frac{t_0}{2}$, there is $F'(t) \ge 0$.
This means that when $t \ge t_0$, there is $F(t) \ge 0$, so we have
\[
\norm{ \mQ ( \vtheta_* - \vtheta_{t} ) }_2 \le \exp \paren{ - \frac{\beta_n \rho_{\min} t }{\lambda^r} } \| \mQ \vtheta_* \|_2 + \exp \paren{ - \frac{\beta_n \rho_{\min}  }{\lambda^r} \paren{\frac{t}{2} - 1} } \| \mQ \vtheta_* \|_2  .
\]
Hence, when $t \ge \max \set{ \frac{2 \lambda^r}{\beta_n \rho_{\min}} \log \frac{2}{\epsilon} + 2 , t_0 }$, we have $\norm{ \mQ ( \vtheta_* - \vtheta_{t} ) }_2 \le \epsilon \| \mQ \vtheta_* \|_2$, which implies that the relative estimation error of $\hat{\valpha}$ is less than $\epsilon$. 
\end{proof}

\section{Experiments}
\label{app:experiment}

The purpose of our experiments is threefold:
\begin{enumerate}[(i)]
    \item Verify that STKR-Prop (\cref{alg:richardson,alg:richardson-inverse}) works with general polynomial $s(\lambda)$ including inverse Laplacian under transductive and inductive settings on real datasets, and compare them to label propagation (for the transductive setting).
    \item Explore possible reasons why canonical Laplacian works so well empirically, by examining the effect of $p$ on the performance when using STKR with $s(\lambda) = \lambda^p$.
    \item Verify that STKR-Prop works with kernel PCA on real datasets, and compare it to other methods.
\end{enumerate}

\subsection{Setup}
\paragraph{Datasets.}
We focus on graph node classification tasks,
and work with the publicly available datasets in the \textit{PyTorch Geometric} library \citep{Fey/Lenssen/2019},
among which Cora, CiteSeer and PubMed are based on \citet{yang2016revisiting};
Computers, Photos, CS and Physics are based on \citet{shchur2018pitfalls};
DBLP and CoraFull are based on \citet{bojchevski2018deep}.
See \cref{table: data summary} for a summary of the dataset statistics.

\paragraph{Train/val/test/other splits.}
We split a dataset into four sets: train, validation (val), test and other.
Among them, train and val contain labeled samples,
while test and other contain unlabeled samples.
\textbf{The test performance which we will report later is only evaluated on the test set.}
The val set is used to select the best model, so it is used in a similar way as the test set as explained below:
\begin{itemize}
    \item In the transductive setting, the learner can see all four sets at train time. 
    The learner manually hides the labels of the samples in the val set (so that the val performance approximates the test performance).
    Thus, $n$ is the size of the train set, while $m$ is the size of the other three combined.
    \item In the inductive setting, the learner can see train, val and other, but not test. Neither can it see any edges connected to test nodes.
    Then, the learner manually hides the entire val set (nodes, outcoming edges and labels), so that the val performance approximates the test performance.
    Thus, $n$ is the size of the train set, while $m$ is the size of the other set.
\end{itemize}

In all our experiments, these four sets are randomly split.
This means that with the same random seed, the four splits are exactly the same;
With a different random seed, the four splits are different, but their sizes are kept the same for the same dataset.

\paragraph{Sizes of the splits.}
First, we specify a hyperparameter $p_{\text{test}}$, and then $p_{\text{test}}$ of all the samples will be in the test set.
For Cora, CiteSeer and PubMed, we use the default train/validation/test split size, and from the test set we take out $p_{\text{test}} \times$ \#(all samples) of the samples to be the real test set, and the rest of the samples go to the other set.
For the other six datasets, we set the train and validation set size to be $20 \times $ number of classes. 
For example, the Physics dataset has 5 classes, so we randomly sample $100$ samples to be train data,
and another $100$ samples to be validation data.
We also do an ablation study for $p_{\text{test}}$ in our experiments, where $p_{\text{test}}$ could range from 1\% to 50\%.

\begin{table}[!t]
\vskip -.3in
\centering
\caption{\small{Number of classes, nodes, edges, and fractions (\%) of train and validation sets.}}
\label{table: data summary}
\vskip -.1in
\begin{tabular}{@{}lccccc@{}}
\toprule
 & \multicolumn{1}{l}{Classes} & \multicolumn{1}{l}{Nodes} & \multicolumn{1}{l}{Edges} & \multicolumn{1}{l}{Train} & \multicolumn{1}{l}{Validation} \\ \midrule
Cora & 7 & 2,708 & 10,556 & 5.17 & 18.46 \\
CiteSeer & 6 & 3,327 & 9,104 & 3.61 & 15.03 \\
PubMed & 3 & 19,717 & 88,648 & 0.3 & 2.54 \\
Amazon - Computers & 10 & 13,752 & 491,722 & 1.45 & 1.45 \\
Amazon - Photos & 8 & 7,650 & 238,162 & 2.09 & 2.09 \\
Coauthor - CS & 15 & 18,333 & 163,788 & 1.64 & 1.64 \\
Coauthor - Physics & 5 & 34,493 & 495,924 & 0.29 & 0.29 \\
DBLP & 4 & 17,716 & 105,734 & 0.45 & 0.45 \\
CoraFull & 70 & 19,793 & 126,842 & 7.07 & 7.07 \\ \bottomrule
\end{tabular}
\vskip -.1in
\end{table}

\paragraph{Implementations.}
For label propagation, we use the version in \citet{zhou2003learning}, which solves:
\begin{equation}
    (\mI_{n+m} - \eta \mS)\hat{\vy} = \tilde{\vy},
    \qquad \text{where } \tilde{\vy} := [\vy, \vzero_{m}]^{\top}.
\end{equation}
Here $\hat{\vy}$ is the predicted labels for all $n + m$ samples under the transductive setting,
and $\mS$ is defined as $\mS := \mD^{-\frac{1}{2}} \mW \mD^{-\frac{1}{2}}$, where $\mW$ is the adjacency matrix such that $\mW[i,j] = 1$ if $x_i$ is connected to $x_j$ and 0 otherwise,
and $\mD$ is a diagonal matrix defined as $\mD [i,i] = \sum_{j=1}^{n+m} \mW[i,j]$ for $i \in [n+m]$.
For STKR, we define the base kernel $K$ as:
\begin{equation}
    K(x,x') = (n+m)\frac{W(x,x')}{\sqrt{D(x)D(x')}},
\end{equation}
where $W(x_i,x_j) = \mW[i,j]$, and for the transductive setting there is $D(x_i) = \mD[i,i]$,
so that $\mS = \frac{\mG_K}{n+m}$.
For the inductive setting, $D(x_i) = \sum_{j \notin \text{test nodes}} W(x_i,x_j)$ for all $i \in [n+m]$,
\ie{} the sum is only taken over the visible nodes at train time. 

\paragraph{Hyperparameters.}
Below are the hyperparameters we use in the experiments. Best hyperparameters are selected with the validation split as detailed above.
\begin{itemize}
    \item \textbf{Label Propagation}
        \begin{itemize}
            \item Number of iteration $T \in [1,2,4,8, 16,32]$
            \item $\eta \in [0.7,0.8,0.9,0.99,0.999,0.9999,0.99999,0.999999]$
        \end{itemize}
    \item \textbf{STKR transductive}
        \begin{itemize}
            \item Number of iteration $T \in [1,2,4,8, 16,32]$
            \item Laplacian $s^{-1}(\lambda) = \lambda^{-1} - \eta$: $\eta \in [0.7,0.8,0.9,0.99,0.999,0.9999,0.99999,0.999999]$
            \item Polynomial $s(\lambda) = \lambda^k$: $k\in[1,2,4,6,8]$
            \item $\beta \in [10^{3}, 10^{2}, 10^{1}, 10^0, 10^{-1}, 10^{-2}, 10^{-3},10^{-4},10^{-5},10^{-6},10^{-7},10^{-8}]$
        \end{itemize}
    \item \textbf{STKR inductive}
        \begin{itemize}
            \item Laplacian $s^{-1}(\lambda) = \lambda^{-1} - \eta$: $\eta \in [0.7,0.8,0.9,0.99,0.999,0.9999,0.99999,0.999999]$
            \item Polynomial $s(\lambda) = \lambda^k$: $k\in[1,2,4,6,8]$
            \item $\beta \in [10^{3}, 10^{2}, 10^{1}, 10^0, 10^{-1}, 10^{-2}, 10^{-3},10^{-4},10^{-5},10^{-6},10^{-7},10^{-8}]$
        \end{itemize}    
    \item \textbf{Kernel PCA}
        \begin{itemize}
            \item Number of representation dimension $d \in [32,64,128,256,512]$
            \item $\beta \in [10^{3}, 10^{2}, 10^{1}, 10^0, 10^{-1}, 10^{-2}, 10^{-3},10^{-4},10^{-5},10^{-6},10^{-7},10^{-8}]$
        \end{itemize}    
\end{itemize}

\subsection{Results}

We report the test accuracy of STKR-Prop with different transformations and the Label-Prop algorithm in \cref{table: experiment test acc}. 
To make a fair comparison between the transductive and inductive setting, we report the test accuracy on the same $p_{\text{test}} = 0.01$ fraction of the data (for the same random seed). This is a part of the unlabeled data in the transductive setting, but is completely hidden from the learner in the inductive setting at train time.

First, our results show that STKR-Prop can work pretty well with a general $s(\lambda)$ under the inductive setting.
The drops in the accuracy of the inductive STKR-Prop compared to transductive are small. In \cref{table: inductive inv s}, we further provide an ablation on the test accuracy as we increase $p_{\text{test}}$. As $p_{\text{test}}$ is larger, the performance of inductive STKR decreases across the board. Nevertheless, there are many datasets such as Photo, Physics, Computer where the performance drop is fairly small --- at around $2-3$ percent even when $p_{\text{test}} = 0.3$. Our ablation study shows that inductive STKR is quite robust to the number of available unlabeled data at the training time. Our experiment clearly demonstrates that one can implement STKR with a general transformation such as $s(\lambda) = \lambda^p$ efficiently.
The running time of STKR-Prop is similar to that of Label-Prop with the same number of iterations.

\begin{table}[!t]
\centering
\caption{\small{The test accuracy (\%) of Label-Prop (LP), STKR-Prop (SP) with inverse Laplacian (Lap), with polynomial $s(\lambda) = \lambda^8$ (poly), with kernel PCA (topd), and with $s(\lambda) = \lambda$ (KRR). (t) and (i) indicate the transductive and inductive settings. We report the test accuracy when $p_{\text{test}} = 0.01$, \ie{} test samples account for 1\% of all samples. Standard deviations are given across ten random seeds.
The best and second-best results for each dataset are marked in red and blue, respectively.
}
}
\label{table: experiment test acc}
\vskip -.1in
\resizebox{\columnwidth}{!}{%
\begin{tabular}{@{}llllllllll@{}}
\toprule
 & \textbf{CS} & \textbf{CiteSeer} & \textbf{Computers} & \textbf{Cora} & \textbf{CoraFull} & \textbf{DBLP} & \textbf{Photo} & \textbf{Physics} & \textbf{PubMed} \\ \midrule
LP (t) & $79.07_{2.19}$ & \textcolor{red}{$52.73_{7.72}$} & $77.30_{3.05}$ & \textcolor{blue}{$73.33_{6.00}$} & \textcolor{red}{$54.47_{3.24}$} & \textcolor{red}{$66.44_{3.78}$} & $83.95_{5.78}$ & $84.33_{4.86}$ & \textcolor{red}{$72.28_{5.55}$} \\
SP-Lap (t) & $78.96_{2.53}$ & \textcolor{blue}{$52.12_{7.67}$} & $77.81_{3.94}$ & \textcolor{red}{$77.04_{5.74}$} & \textcolor{blue}{$53.81_{2.34}$} & \textcolor{blue}{$65.42_{5.02}$} & $84.08_{6.52}$ & $84.22_{4.86}$ & $71.93_{4.86}$ \\
SP-poly (t) & \textcolor{red}{$79.13_{2.29}$} & $48.79_{8.51}$ & $76.72_{4.12}$ & $71.48_{5.80}$ & $53.25_{3.54}$ & $64.52_{4.20}$ & $79.21_{7.20}$ & \textcolor{blue}{$84.45_{4.89}$} & \textcolor{blue}{$72.18_{4.66}$} \\
SP-topd (t)  & $78.80_{3.22}$ & $46.06_{1.08}$ & \textcolor{red}{$80.80_{3.06}$} & $69.26_{7.82}$ & $50.36_{2.85}$ & $64.86_{4.60}$ & {$84.61_{6.30}$} & $83.20_{2.25}$ & $65.38_{5.66}$ \\
SP-Lap (i) &$78.42_{2.81}$ & $46.06_{6.97}$ & $77.15_{2.64}$ & $67.78_{7.62}$ & $53.30_{3.24}$ & $65.20_{4.92}$ & \textcolor{blue}{$84.87_{5.66}$} & $83.11_{5.09}$ & $70.36_{4.80}$ \\
SP-poly (i) & ${79.02}_{2.42}$ & ${44.55}_{9.15}$ & ${71.97}_{4.13}$ & ${65.19}_{9.11}$ & ${51.98}_{3.88}$ & ${64.52}_{4.05}$ & ${78.42}_{7.80}$ & \textcolor{red}{${84.68}_{4.83}$} & ${70.76}_{4.28}$ \\
SP-topd (i) & \textcolor{blue}{$79.13_{3.35}$} & $41.52_{6.71}$ & \textcolor{blue}{$80.80_{3.28}$} & $63.70_{6.00}$ & $47.41_{3.39}$ & $63.16_{3.41}$ & \textcolor{red}{$85.53_{5.68}$} & $82.44_{3.88}$ & $64.31_{4.95}$\\
KRR (i) & $13.11_{2.29}$ & $13.64_{5.93}$ & $26.35_{4.34}$ & $28.52_{8.56}$ & $19.80_{2.22}$ & $44.80_{3.86}$ & $33.95_{7.07}$ & $19.74_{1.46}$ & $20.76_{2.06}$ \\
\bottomrule
\end{tabular}
}
\end{table}

Second, we explore the impact of the ``number of hop'' $p$ on the performance of STKR with $s(\lambda) = \lambda^p$ (\cref{fig:ablation p}). 
We consider $p \in \{1,2,4,6,8\}$, and note that for $p = 1$, this STKR is equivalent to performing a KRR with the base kernel. 
As we have already seen in \cref{table: experiment test acc}, the performance of such KRR is extremely poor compared to the other methods.
This is consistent with our analysis in Section 2 that KRR with the base kernel is not sufficient. 
We find that by increasing $p$ which in turns increase the additional smoothness requirement, the performance of STKR increases by a large margin for all datasets. This clearly illustrates the benefits of the transitivity of similarity,
and offers a possible explanation why the inverse Laplacian transformation performs so well in practice: 
It uses multi-hop similarity information up to infinitely many hops.

Third, the results show that STKR also works pretty well with kernel PCA.
Comparing between kernel PCA and LP (or STKR with inverse Laplacian),
on 3 of the 9 datasets we use, the former is better.
Thus, this experiment clearly demonstrates the parallel nature of these two methodologies --- STKR with inverse Laplacian, and kernel PCA.

\begin{figure}[!t]
    \centering
    \includegraphics[width = 0.8\columnwidth]{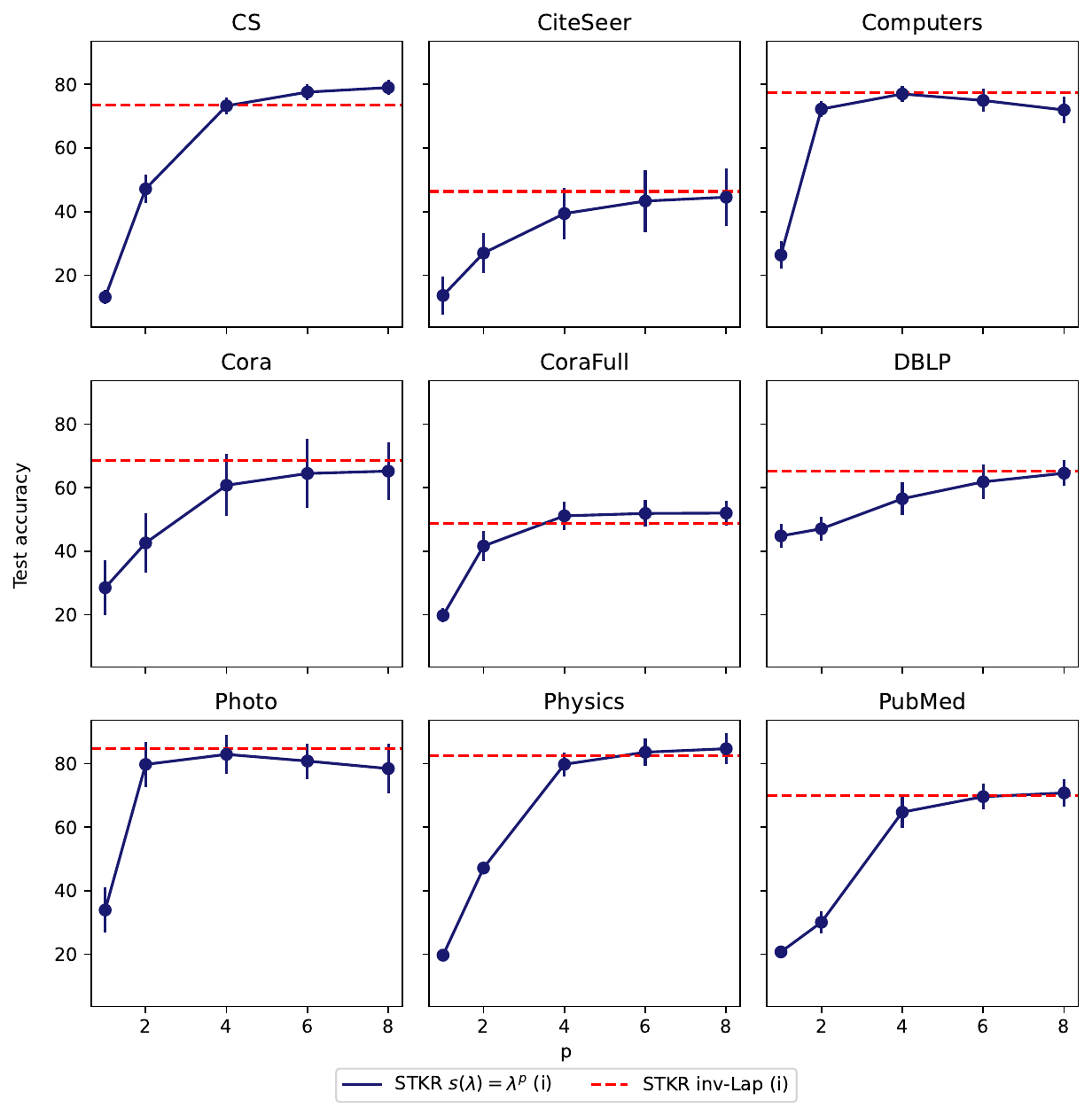}
    \vskip -.1in
    \caption{\rebuttal{Test accuracy (\%) of STKR-Prop (SP) with polynomial with $s(\lambda) = \lambda^p$ for $p \in \{1,2,4,6,8\}$. The test accuracy increases significantly as $p$ is larger than $1$, illustrating the benefits of the transitivity of similarity. }}
    \label{fig:ablation p}
    \vskip -.1in
\end{figure}

\begin{table}[!t]
\caption{\small{\rebuttal{Test accuracy} (\%) of STKR-Prop (SP) with inverse Laplacian (Lap), with polynomial $s(\lambda) = \lambda^8$ (poly) and with kernel PCA (topd) for inductive setting with different value of $p_{\text{test}} \rebuttal{\in \{0.01, 0.05, 0.1, 0.2, 0.3\}.}$ Standard deviations are given across ten random seeds. 
}}
\label{table: inductive inv s}
\vskip -.1in
\resizebox{\columnwidth}{!}{%
\begin{tabular}{@{}lllllllllll@{}}
\toprule
 \textbf{Methods} &{\boldmath $p_{\text{test}}$}& \textbf{CS} & \textbf{CiteSeer} & \textbf{Computers} & \textbf{Cora} & \textbf{CoraFull} & \textbf{DBLP} & \textbf{Photo} & \textbf{Physics} & \textbf{PubMed} \\ \midrule
&$0.01$ & $78.42_{2.81}$ & $46.06_{6.97}$ & $77.15_{2.64}$ & $67.78_{7.62}$ & $53.30_{3.24}$ & $65.20_{4.92}$ & $84.87_{5.66}$ & $83.11_{5.09}$ & $70.36_{4.80}$ \\
&$0.05$ & $77.93_{1.41}$ & $41.75_{4.82}$ & $77.42_{2.25}$ & $62.37_{3.66}$ & $51.63_{0.90}$ & $65.57_{2.52}$ & $84.90_{2.50}$ & $82.72_{4.26}$ & $67.81_{3.56}$ \\
SP-Lap &$0.1$ & $76.45_{1.17}$ & $39.97_{2.68}$ & $77.40_{1.88}$ & $59.74_{2.37}$ & $50.22_{0.75}$ & $65.34_{2.09}$ & $84.10_{1.84}$ & $82.02_{4.01}$ & $66.43_{3.62}$ \\
&$0.2$ & $75.06_{1.01}$ & $35.50_{1.94}$ & $77.18_{2.03}$ & $55.10_{2.92}$ & $47.84_{0.79}$ & $63.24_{1.96}$ & $83.81_{1.23}$ & $81.33_{3.59}$ & $63.77_{3.36}$ \\
&$0.3$ & $72.89_{0.95}$ & $30.92_{1.14}$ & $76.59_{1.71}$ & $50.22_{2.87}$ & $44.96_{0.77}$ & $60.44_{1.68}$ & $83.28_{1.11}$ & $80.27_{3.45}$ & $60.25_{2.75}$\\
\midrule
&$0.01$ & ${79.13}_{3.35}$ & ${44.55}_{9.15}$ & ${71.97}_{4.13}$ & ${65.19}_{9.11}$ & ${51.98}_{3.88}$ & ${64.52}_{4.05}$ & ${78.42}_{7.80}$ & ${84.68}_{4.83}$ & ${70.76}_{4.28}$ \\
&$0.05$ & ${78.74}_{1.42}$ & ${40.36}_{5.51}$ & ${73.04}_{1.80}$ & ${61.85}_{4.15}$ & ${50.21}_{1.69}$ & ${64.64}_{2.07}$ & ${79.03}_{4.61}$ & ${84.05}_{3.99}$ & ${67.68}_{2.91}$ \\
SP-poly &$0.1$ & ${77.51}_{1.03}$ & ${38.58}_{2.90}$ & ${73.10}_{1.58}$ & ${58.56}_{2.59}$ & ${49.06}_{0.90}$ & ${64.21}_{1.93}$ & ${78.59}_{4.49}$ & ${83.13}_{3.69}$ & ${66.08}_{2.96}$ \\
&$0.2$ & ${75.74}_{1.02}$ & ${33.65}_{1.78}$ & ${72.70}_{1.84}$ & ${53.11}_{2.39}$ & ${46.53}_{0.79}$ & ${61.89}_{2.03}$ & ${78.82}_{3.05}$ & ${82.36}_{3.27}$ & ${62.97}_{3.10}$ \\
&$0.3$ & ${73.35}_{0.76}$ & ${28.98}_{1.21}$ & ${72.35}_{1.60}$ & ${47.99}_{2.62}$ & ${43.42}_{0.78}$ & ${59.55}_{2.02}$ & ${78.20}_{2.73}$ & ${81.08}_{3.25}$ & ${59.32}_{2.57}$ \\ 
\midrule
 & $0.01$ & $79.13_{3.35}$ & $41.52_{6.71}$ & $80.80_{3.28}$ & $63.70_{6.00}$ & $47.41_{3.39}$ & $63.16_{3.41}$ & $85.53_{5.68}$ & $82.44_{3.88}$ & $64.31_{4.95}$ \\
 & $0.05$ & $78.37_{1.58}$ & $40.00_{5.14}$ & $80.17_{2.30}$ & $61.70_{3.53}$ & $47.17_{1.63}$ & $62.79_{3.36}$ & $85.47_{2.07}$ & $82.26_{1.88}$ & $62.79_{3.28}$ \\
SP-topd  & $0.1$ & $77.17_{1.02}$ & $38.67_{4.17}$ & $79.35_{2.57}$ & $58.81_{3.26}$ & $45.22_{0.89}$ & $61.51_{2.60}$ & $84.85_{1.70}$ & $80.59_{1.86}$ & $61.75_{2.26}$ \\
 & $0.2$ & $75.61_{0.64}$ & $35.10_{2.64}$ & $79.00_{2.02}$ & $55.51_{4.21}$ & $42.31_{0.91}$ & $60.30_{2.60}$ & $84.63_{1.45}$ & $80.15_{1.86}$ & $59.60_{3.22}$ \\
 & $0.3$ & $72.59_{0.70}$ & $32.92_{1.92}$ & $78.09_{1.27}$ & $51.71_{4.39}$ & $38.62_{0.79}$ & $59.18_{2.29}$ & $83.91_{1.69}$ & $79.07_{2.33}$ & $57.50_{2.50}$\\
\bottomrule
\end{tabular}
}
\vskip -.1in
\end{table}

\rebuttal{
Finally, we provide an ablation study about the effect of $\eta$ in inverse Laplacian on the performance of SP-Lap. Recall that for inverse Laplacian we have $s^{-1}(\lambda) = \lambda^{-1} - \eta$. 
Our observation is that when $\eta$ is very close to $0$, the performance is low; once it is bounded away from $0$, the performance is fairly consistent, and gets slightly better with a larger $\eta$. Only on dataset \texttt{CS} do we observe a significant performance boost as $\eta$ increases.
}

\begin{figure}[!t]
    \centering
    \includegraphics[width = 0.7\columnwidth]{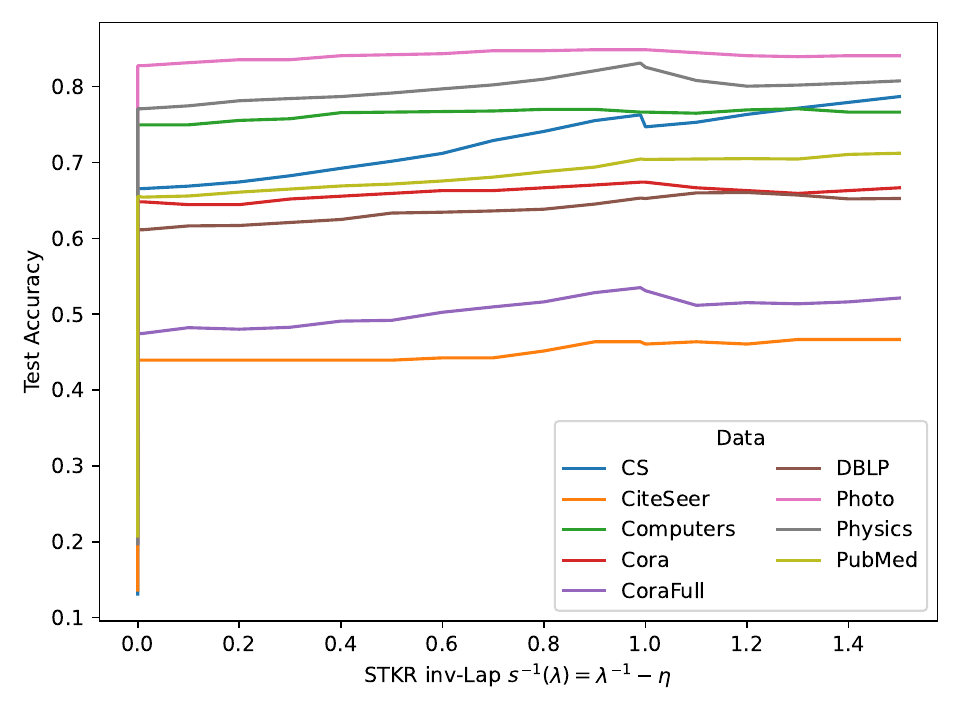}
    \caption{\rebuttal{Test accuracy of SP-Lap with different values of $\eta$. The test accuracy is fairly consistent as long as $\eta$ is not too close to 0, and gets slightly better with a larger $\eta$. All reported performances are averaged across ten random seeds.}}
    \label{fig:ablation eta}
\end{figure}


\begin{thebibliography}{83}
  \providecommand{\natexlab}[1]{#1}
  \providecommand{\url}[1]{\texttt{#1}}
  \expandafter\ifx\csname urlstyle\endcsname\relax
    \providecommand{\doi}[1]{doi: #1}\else
    \providecommand{\doi}{doi: \begingroup \urlstyle{rm}\Url}\fi
  
  \bibitem[Belkin \& Niyogi(2003)Belkin and Niyogi]{belkin2003laplacian}
  Mikhail Belkin and Partha Niyogi.
  \newblock Laplacian eigenmaps for dimensionality reduction and data representation.
  \newblock \emph{Neural computation}, 15\penalty0 (6):\penalty0 1373--1396, 2003.
  
  \bibitem[Belkin et~al.(2006)Belkin, Niyogi, and Sindhwani]{belkin2006manifold}
  Mikhail Belkin, Partha Niyogi, and Vikas Sindhwani.
  \newblock Manifold regularization: A geometric framework for learning from labeled and unlabeled examples.
  \newblock \emph{Journal of machine learning research}, 7\penalty0 (11), 2006.
  
  \bibitem[Belkin et~al.(2018)Belkin, Ma, and Mandal]{belkin2018understand}
  Mikhail Belkin, Siyuan Ma, and Soumik Mandal.
  \newblock To understand deep learning we need to understand kernel learning.
  \newblock In \emph{International Conference on Machine Learning}, pp.\  541--549. PMLR, 2018.
  
  \bibitem[Bengio et~al.(2004)Bengio, Delalleau, Roux, Paiement, Vincent, and Ouimet]{bengio2004learning}
  Yoshua Bengio, Olivier Delalleau, Nicolas~Le Roux, Jean-Fran{\c{c}}ois Paiement, Pascal Vincent, and Marie Ouimet.
  \newblock Learning eigenfunctions links spectral embedding and kernel pca.
  \newblock \emph{Neural computation}, 16\penalty0 (10):\penalty0 2197--2219, 2004.
  
  \bibitem[Bengio et~al.(2013)Bengio, Courville, and Vincent]{bengio2013representation}
  Yoshua Bengio, Aaron Courville, and Pascal Vincent.
  \newblock Representation learning: A review and new perspectives.
  \newblock \emph{IEEE transactions on pattern analysis and machine intelligence}, 35\penalty0 (8):\penalty0 1798--1828, 2013.
  
  \bibitem[Berthelot et~al.(2019)Berthelot, Carlini, Goodfellow, Papernot, Oliver, and Raffel]{berthelot2019mixmatch}
  David Berthelot, Nicholas Carlini, Ian Goodfellow, Nicolas Papernot, Avital Oliver, and Colin~A Raffel.
  \newblock Mixmatch: A holistic approach to semi-supervised learning.
  \newblock \emph{Advances in neural information processing systems}, 32, 2019.
  
  \bibitem[Berthelot et~al.(2020)Berthelot, Carlini, Cubuk, Kurakin, Sohn, Zhang, and Raffel]{berthelot2019remixmatch}
  David Berthelot, Nicholas Carlini, Ekin~D. Cubuk, Alex Kurakin, Kihyuk Sohn, Han Zhang, and Colin Raffel.
  \newblock Remixmatch: Semi-supervised learning with distribution matching and augmentation anchoring.
  \newblock In \emph{International Conference on Learning Representations}, 2020.
  \newblock URL \url{https://openreview.net/forum?id=HklkeR4KPB}.
  
  \bibitem[Blanchard et~al.(2007)Blanchard, Bousquet, and Zwald]{blanchard:hal-00373789}
  Gilles Blanchard, Olivier Bousquet, and Laurent Zwald.
  \newblock {Statistical properties of Kernel Prinicipal Component Analysis}.
  \newblock \emph{{Machine Learning}}, 66\penalty0 (2-3):\penalty0 259--294, March 2007.
  \newblock \doi{10.1007/s10994-006-6895-9}.
  \newblock URL \url{https://hal.science/hal-00373789}.
  
  \bibitem[Bojchevski \& G{\"u}nnemann(2018)Bojchevski and G{\"u}nnemann]{bojchevski2018deep}
  Aleksandar Bojchevski and Stephan G{\"u}nnemann.
  \newblock Deep gaussian embedding of graphs: Unsupervised inductive learning via ranking.
  \newblock In \emph{International Conference on Learning Representations}, 2018.
  
  \bibitem[Brezis(2011)]{brezis2011functional}
  Haim Brezis.
  \newblock \emph{Functional analysis, Sobolev spaces and partial differential equations}.
  \newblock Springer, 2011.
  
  \bibitem[Buchholz(2022)]{buchholz2022kernel}
  Simon Buchholz.
  \newblock Kernel interpolation in sobolev spaces is not consistent in low dimensions.
  \newblock In \emph{Conference on Learning Theory}, pp.\  3410--3440. PMLR, 2022.
  
  \bibitem[Cabannes et~al.(2023)Cabannes, Kiani, Balestriero, Lecun, and Bietti]{cabannes2023ssl}
  Vivien Cabannes, Bobak Kiani, Randall Balestriero, Yann Lecun, and Alberto Bietti.
  \newblock The {SSL} interplay: Augmentations, inductive bias, and generalization.
  \newblock In Andreas Krause, Emma Brunskill, Kyunghyun Cho, Barbara Engelhardt, Sivan Sabato, and Jonathan Scarlett (eds.), \emph{Proceedings of the 40th International Conference on Machine Learning}, volume 202 of \emph{Proceedings of Machine Learning Research}, pp.\  3252--3298. PMLR, 23--29 Jul 2023.
  \newblock URL \url{https://proceedings.mlr.press/v202/cabannes23a.html}.
  
  \bibitem[Chapelle et~al.(2002)Chapelle, Weston, and Sch{\"o}lkopf]{chapelle2002cluster}
  Olivier Chapelle, Jason Weston, and Bernhard Sch{\"o}lkopf.
  \newblock Cluster kernels for semi-supervised learning.
  \newblock \emph{Advances in neural information processing systems}, 15, 2002.
  
  \bibitem[Chapelle et~al.(2006)Chapelle, Sch{\"o}lkopf, and Zien]{chapelle2006semi}
  Olivier Chapelle, Bernhard Sch{\"o}lkopf, and Alexander Zien.
  \newblock Semi-supervised learning.
  \newblock \emph{MIT Press}, 2, 2006.
  
  \bibitem[Chen et~al.(2020)Chen, Kornblith, Norouzi, and Hinton]{chen2020simple}
  Ting Chen, Simon Kornblith, Mohammad Norouzi, and Geoffrey Hinton.
  \newblock A simple framework for contrastive learning of visual representations.
  \newblock In \emph{International conference on machine learning}, pp.\  1597--1607. PMLR, 2020.
  
  \bibitem[Chung(1997)]{chung1997spectral}
  Fan~RK Chung.
  \newblock \emph{Spectral graph theory}, volume~92.
  \newblock American Mathematical Soc., 1997.
  
  \bibitem[Coifman \& Lafon(2006)Coifman and Lafon]{coifman2006diffusion}
  Ronald~R Coifman and St{\'e}phane Lafon.
  \newblock Diffusion maps.
  \newblock \emph{Applied and computational harmonic analysis}, 21\penalty0 (1):\penalty0 5--30, 2006.
  
  \bibitem[Cubuk et~al.(2019)Cubuk, Zoph, Mane, Vasudevan, and Le]{cubuk2019autoaugment}
  Ekin~D Cubuk, Barret Zoph, Dandelion Mane, Vijay Vasudevan, and Quoc~V Le.
  \newblock Autoaugment: Learning augmentation strategies from data.
  \newblock In \emph{Proceedings of the IEEE/CVF conference on computer vision and pattern recognition}, pp.\  113--123, 2019.
  
  \bibitem[Cubuk et~al.(2020)Cubuk, Zoph, Shlens, and Le]{cubuk2020randaugment}
  Ekin~D Cubuk, Barret Zoph, Jonathon Shlens, and Quoc~V Le.
  \newblock Randaugment: Practical automated data augmentation with a reduced search space.
  \newblock In \emph{Proceedings of the IEEE/CVF conference on computer vision and pattern recognition workshops}, pp.\  702--703, 2020.
  
  \bibitem[Cui et~al.(2021)Cui, Loureiro, Krzakala, and Zdeborov{\'a}]{cui2021generalization}
  Hugo Cui, Bruno Loureiro, Florent Krzakala, and Lenka Zdeborov{\'a}.
  \newblock Generalization error rates in kernel regression: The crossover from the noiseless to noisy regime.
  \newblock \emph{Advances in Neural Information Processing Systems}, 34:\penalty0 10131--10143, 2021.
  
  \bibitem[de~Hoop et~al.(2023)de~Hoop, Kovachki, Nelsen, and Stuart]{de2023convergence}
  Maarten~V de~Hoop, Nikola~B Kovachki, Nicholas~H Nelsen, and Andrew~M Stuart.
  \newblock Convergence rates for learning linear operators from noisy data.
  \newblock \emph{SIAM/ASA Journal on Uncertainty Quantification}, 11\penalty0 (2):\penalty0 480--513, 2023.
  
  \bibitem[Demmel et~al.(2007)Demmel, Dumitriu, and Holtz]{demmel2007fast}
  James Demmel, Ioana Dumitriu, and Olga Holtz.
  \newblock Fast linear algebra is stable.
  \newblock \emph{Numerische Mathematik}, 108\penalty0 (1):\penalty0 59--91, 2007.
  
  \bibitem[Devlin et~al.(2019)Devlin, Chang, Lee, and Toutanova]{devlin2018bert}
  Jacob Devlin, Ming-Wei Chang, Kenton Lee, and Kristina Toutanova.
  \newblock {BERT}: Pre-training of deep bidirectional transformers for language understanding.
  \newblock In \emph{Proceedings of the 2019 Conference of the North {A}merican Chapter of the Association for Computational Linguistics: Human Language Technologies, Volume 1 (Long and Short Papers)}, pp.\  4171--4186, Minneapolis, Minnesota, June 2019. Association for Computational Linguistics.
  \newblock \doi{10.18653/v1/N19-1423}.
  \newblock URL \url{https://aclanthology.org/N19-1423}.
  
  \bibitem[DeVries \& Taylor(2017)DeVries and Taylor]{devries2017improved}
  Terrance DeVries and Graham~W Taylor.
  \newblock Improved regularization of convolutional neural networks with cutout.
  \newblock \emph{arXiv preprint arXiv:1708.04552}, 2017.
  
  \bibitem[Elworthy(1994)]{elworthy1994does}
  David Elworthy.
  \newblock Does baum-welch re-estimation help taggers?
  \newblock \emph{CoRR}, abs/cmp-lg/9410012, 1994.
  \newblock URL \url{http://arxiv.org/abs/cmp-lg/9410012}.
  
  \bibitem[Fey \& Lenssen(2019)Fey and Lenssen]{Fey/Lenssen/2019}
  Matthias Fey and Jan~E. Lenssen.
  \newblock Fast graph representation learning with {PyTorch Geometric}.
  \newblock In \emph{ICLR Workshop on Representation Learning on Graphs and Manifolds}, 2019.
  
  \bibitem[Fischer \& Steinwart(2020)Fischer and Steinwart]{fischer2020sobolev}
  Simon Fischer and Ingo Steinwart.
  \newblock Sobolev norm learning rates for regularized least-squares algorithms.
  \newblock \emph{The Journal of Machine Learning Research}, 21\penalty0 (1):\penalty0 8464--8501, 2020.
  
  \bibitem[Gasteiger et~al.(2019)Gasteiger, Wei{\ss}enberger, and G{\"u}nnemann]{gasteiger2019diffusion}
  Johannes Gasteiger, Stefan Wei{\ss}enberger, and Stephan G{\"u}nnemann.
  \newblock Diffusion improves graph learning.
  \newblock \emph{Advances in neural information processing systems}, 32, 2019.
  
  \bibitem[Goodfellow et~al.(2013)Goodfellow, Mirza, Courville, and Bengio]{NIPS2013_0bb4aec1}
  Ian Goodfellow, Mehdi Mirza, Aaron Courville, and Yoshua Bengio.
  \newblock Multi-prediction deep boltzmann machines.
  \newblock In C.J. Burges, L.~Bottou, M.~Welling, Z.~Ghahramani, and K.Q. Weinberger (eds.), \emph{Advances in Neural Information Processing Systems}, volume~26. Curran Associates, Inc., 2013.
  \newblock URL \url{https://proceedings.neurips.cc/paper_files/paper/2013/file/0bb4aec1710521c12ee76289d9440817-Paper.pdf}.
  
  \bibitem[Gy{\"o}rfi et~al.(2002)Gy{\"o}rfi, Kohler, Krzyzak, and Walk]{gyorfi2002distribution}
  L{\'a}szl{\'o} Gy{\"o}rfi, Michael Kohler, Adam Krzyzak, and Harro Walk.
  \newblock \emph{A distribution-free theory of nonparametric regression}, volume~1.
  \newblock Springer, 2002.
  
  \bibitem[Hamilton et~al.(2017)Hamilton, Ying, and Leskovec]{hamilton2017inductive}
  Will Hamilton, Zhitao Ying, and Jure Leskovec.
  \newblock Inductive representation learning on large graphs.
  \newblock \emph{Advances in neural information processing systems}, 30, 2017.
  
  \bibitem[HaoChen et~al.(2021)HaoChen, Wei, Gaidon, and Ma]{haochen2021provable}
  Jeff~Z HaoChen, Colin Wei, Adrien Gaidon, and Tengyu Ma.
  \newblock Provable guarantees for self-supervised deep learning with spectral contrastive loss.
  \newblock \emph{Advances in Neural Information Processing Systems}, 34:\penalty0 5000--5011, 2021.
  
  \bibitem[Hassani \& Khasahmadi(2020)Hassani and Khasahmadi]{hassani2020contrastive}
  Kaveh Hassani and Amir~Hosein Khasahmadi.
  \newblock Contrastive multi-view representation learning on graphs.
  \newblock In \emph{International conference on machine learning}, pp.\  4116--4126. PMLR, 2020.
  
  \bibitem[He et~al.(2022)He, Chen, Xie, Li, Doll{\'a}r, and Girshick]{he2022masked}
  Kaiming He, Xinlei Chen, Saining Xie, Yanghao Li, Piotr Doll{\'a}r, and Ross Girshick.
  \newblock Masked autoencoders are scalable vision learners.
  \newblock In \emph{Proceedings of the IEEE/CVF Conference on Computer Vision and Pattern Recognition}, pp.\  16000--16009, 2022.
  
  \bibitem[Hjelm et~al.(2019)Hjelm, Fedorov, Lavoie-Marchildon, Grewal, Bachman, Trischler, and Bengio]{hjelm2018learning}
  R~Devon Hjelm, Alex Fedorov, Samuel Lavoie-Marchildon, Karan Grewal, Phil Bachman, Adam Trischler, and Yoshua Bengio.
  \newblock Learning deep representations by mutual information estimation and maximization.
  \newblock In \emph{International Conference on Learning Representations}, 2019.
  \newblock URL \url{https://openreview.net/forum?id=Bklr3j0cKX}.
  
  \bibitem[Huberty \& Olejnik(2006)Huberty and Olejnik]{huberty2006applied}
  Carl~J Huberty and Stephen Olejnik.
  \newblock \emph{Applied MANOVA and discriminant analysis}.
  \newblock John Wiley \& Sons, 2006.
  
  \bibitem[Jin et~al.(2023)Jin, Lu, Blanchet, and Ying]{jin2023minimax}
  Jikai Jin, Yiping Lu, Jose Blanchet, and Lexing Ying.
  \newblock Minimax optimal kernel operator learning via multilevel training.
  \newblock In \emph{The Eleventh International Conference on Learning Representations}, 2023.
  \newblock URL \url{https://openreview.net/forum?id=zEn1BhaNYsC}.
  
  \bibitem[Johnson et~al.(2023)Johnson, Hanchi, and Maddison]{johnson2022contrastive}
  Daniel~D. Johnson, Ayoub~El Hanchi, and Chris~J. Maddison.
  \newblock Contrastive learning can find an optimal basis for approximately view-invariant functions.
  \newblock In \emph{The Eleventh International Conference on Learning Representations}, 2023.
  \newblock URL \url{https://openreview.net/forum?id=AjC0KBjiMu}.
  
  \bibitem[Johnson \& Zhang(2008)Johnson and Zhang]{johnson2008graph}
  Rie Johnson and Tong Zhang.
  \newblock Graph-based semi-supervised learning and spectral kernel design.
  \newblock \emph{IEEE Transactions on Information Theory}, 54\penalty0 (1):\penalty0 275--288, 2008.
  
  \bibitem[Jun et~al.(2019)Jun, Cutkosky, and Orabona]{jun2019kernel}
  Kwang-Sung Jun, Ashok Cutkosky, and Francesco Orabona.
  \newblock Kernel truncated randomized ridge regression: Optimal rates and low noise acceleration.
  \newblock \emph{Advances in neural information processing systems}, 32, 2019.
  
  \bibitem[Kipf \& Welling(2016)Kipf and Welling]{kipf2016semi}
  Thomas~N Kipf and Max Welling.
  \newblock Semi-supervised classification with graph convolutional networks.
  \newblock In \emph{International Conference on Learning Representations}, 2016.
  
  \bibitem[Kondor \& Lafferty(2002)Kondor and Lafferty]{kondor2002diffusion}
  Risi~Imre Kondor and John Lafferty.
  \newblock Diffusion kernels on graphs and other discrete structures.
  \newblock In \emph{Proceedings of the 19th international conference on machine learning}, volume 2002, pp.\  315--322, 2002.
  
  \bibitem[Laine \& Aila(2017)Laine and Aila]{laine2017temporal}
  Samuli Laine and Timo Aila.
  \newblock Temporal ensembling for semi-supervised learning.
  \newblock In \emph{International Conference on Learning Representations}, 2017.
  \newblock URL \url{https://openreview.net/forum?id=BJ6oOfqge}.
  
  \bibitem[Lee(2013)]{Lee2013PseudoLabelT}
  Dong-Hyun Lee.
  \newblock Pseudo-label : The simple and efficient semi-supervised learning method for deep neural networks.
  \newblock In \emph{ICML 2013 Workshop on Challenges in Representation Learning}, 2013.
  
  \bibitem[Li et~al.(2022)Li, Meunier, Mollenhauer, and Gretton]{li2022optimal}
  Zhu Li, Dimitri Meunier, Mattes Mollenhauer, and Arthur Gretton.
  \newblock Optimal rates for regularized conditional mean embedding learning.
  \newblock \emph{Advances in Neural Information Processing Systems}, 35:\penalty0 4433--4445, 2022.
  
  \bibitem[Liang \& Rakhlin(2020)Liang and Rakhlin]{liang2020just}
  Tengyuan Liang and Alexander Rakhlin.
  \newblock {Just interpolate: Kernel “Ridgeless” regression can generalize}.
  \newblock \emph{The Annals of Statistics}, 48\penalty0 (3):\penalty0 1329 -- 1347, 2020.
  \newblock \doi{10.1214/19-AOS1849}.
  \newblock URL \url{https://doi.org/10.1214/19-AOS1849}.
  
  \bibitem[Lin et~al.(2020)Lin, Rudi, Rosasco, and Cevher]{lin2020optimal}
  Junhong Lin, Alessandro Rudi, Lorenzo Rosasco, and Volkan Cevher.
  \newblock Optimal rates for spectral algorithms with least-squares regression over hilbert spaces.
  \newblock \emph{Applied and Computational Harmonic Analysis}, 48\penalty0 (3):\penalty0 868--890, 2020.
  
  \bibitem[Liu et~al.(2021)Liu, Huang, Chen, and Suykens]{liu2021random}
  Fanghui Liu, Xiaolin Huang, Yudong Chen, and Johan~AK Suykens.
  \newblock Random features for kernel approximation: A survey on algorithms, theory, and beyond.
  \newblock \emph{IEEE Transactions on Pattern Analysis and Machine Intelligence}, 44\penalty0 (10):\penalty0 7128--7148, 2021.
  
  \bibitem[Mei et~al.(2022)Mei, Misiakiewicz, and Montanari]{mei2022generalization}
  Song Mei, Theodor Misiakiewicz, and Andrea Montanari.
  \newblock Generalization error of random feature and kernel methods: Hypercontractivity and kernel matrix concentration.
  \newblock \emph{Applied and Computational Harmonic Analysis}, 59:\penalty0 3--84, 2022.
  
  \bibitem[Miyato et~al.(2018)Miyato, Maeda, Koyama, and Ishii]{miyato2018virtual}
  Takeru Miyato, Shin-ichi Maeda, Masanori Koyama, and Shin Ishii.
  \newblock Virtual adversarial training: a regularization method for supervised and semi-supervised learning.
  \newblock \emph{IEEE transactions on pattern analysis and machine intelligence}, 41\penalty0 (8):\penalty0 1979--1993, 2018.
  
  \bibitem[Ng et~al.(2001)Ng, Jordan, and Weiss]{NIPS2001_801272ee}
  Andrew Ng, Michael Jordan, and Yair Weiss.
  \newblock On spectral clustering: Analysis and an algorithm.
  \newblock In T.~Dietterich, S.~Becker, and Z.~Ghahramani (eds.), \emph{Advances in Neural Information Processing Systems}, volume~14. MIT Press, 2001.
  \newblock URL \url{https://proceedings.neurips.cc/paper_files/paper/2001/file/801272ee79cfde7fa5960571fee36b9b-Paper.pdf}.
  
  \bibitem[Oord et~al.(2018)Oord, Li, and Vinyals]{oord2018representation}
  Aaron van~den Oord, Yazhe Li, and Oriol Vinyals.
  \newblock Representation learning with contrastive predictive coding.
  \newblock \emph{arXiv preprint arXiv:1807.03748}, 2018.
  
  \bibitem[Page et~al.(1999)Page, Brin, Motwani, and Winograd]{Page1999ThePC}
  Lawrence Page, Sergey Brin, Rajeev Motwani, and Terry Winograd.
  \newblock The pagerank citation ranking : Bringing order to the web.
  \newblock In \emph{The Web Conference}, 1999.
  \newblock URL \url{https://api.semanticscholar.org/CorpusID:1508503}.
  
  \bibitem[Papyan et~al.(2020)Papyan, Han, and Donoho]{papyan2020prevalence}
  Vardan Papyan, XY~Han, and David~L Donoho.
  \newblock Prevalence of neural collapse during the terminal phase of deep learning training.
  \newblock \emph{Proceedings of the National Academy of Sciences}, 117\penalty0 (40):\penalty0 24652--24663, 2020.
  
  \bibitem[Rahimi \& Recht(2007)Rahimi and Recht]{rahimi2007random}
  Ali Rahimi and Benjamin Recht.
  \newblock Random features for large-scale kernel machines.
  \newblock \emph{Advances in neural information processing systems}, 20, 2007.
  
  \bibitem[Rakhlin \& Zhai(2019)Rakhlin and Zhai]{rakhlin2019consistency}
  Alexander Rakhlin and Xiyu Zhai.
  \newblock Consistency of interpolation with laplace kernels is a high-dimensional phenomenon.
  \newblock In \emph{Conference on Learning Theory}, pp.\  2595--2623. PMLR, 2019.
  
  \bibitem[Ranzato \& Szummer(2008)Ranzato and Szummer]{ranzato2008semi}
  Marc'Aurelio Ranzato and Martin Szummer.
  \newblock Semi-supervised learning of compact document representations with deep networks.
  \newblock In \emph{Proceedings of the 25th international conference on Machine learning}, pp.\  792--799, 2008.
  
  \bibitem[Rasmus et~al.(2015)Rasmus, Berglund, Honkala, Valpola, and Raiko]{rasmus2015semi}
  Antti Rasmus, Mathias Berglund, Mikko Honkala, Harri Valpola, and Tapani Raiko.
  \newblock Semi-supervised learning with ladder networks.
  \newblock \emph{Advances in neural information processing systems}, 28, 2015.
  
  \bibitem[Roweis \& Saul(2000)Roweis and Saul]{roweis2000nonlinear}
  Sam~T Roweis and Lawrence~K Saul.
  \newblock Nonlinear dimensionality reduction by locally linear embedding.
  \newblock \emph{science}, 290\penalty0 (5500):\penalty0 2323--2326, 2000.
  
  \bibitem[Sch{\"o}lkopf \& Smola(2002)Sch{\"o}lkopf and Smola]{scholkopf2002learning}
  Bernhard Sch{\"o}lkopf and Alexander~J Smola.
  \newblock \emph{Learning with kernels: support vector machines, regularization, optimization, and beyond}.
  \newblock MIT press, 2002.
  
  \bibitem[Shawe-Taylor et~al.(2005)Shawe-Taylor, Williams, Cristianini, and Kandola]{shawe2005eigenspectrum}
  John Shawe-Taylor, Christopher~KI Williams, Nello Cristianini, and Jaz Kandola.
  \newblock On the eigenspectrum of the gram matrix and the generalization error of kernel-pca.
  \newblock \emph{IEEE Transactions on Information Theory}, 51\penalty0 (7):\penalty0 2510--2522, 2005.
  
  \bibitem[Shchur et~al.(2018)Shchur, Mumme, Bojchevski, and G{\"u}nnemann]{shchur2018pitfalls}
  Oleksandr Shchur, Maximilian Mumme, Aleksandar Bojchevski, and Stephan G{\"u}nnemann.
  \newblock Pitfalls of graph neural network evaluation.
  \newblock \emph{Relational Representation Learning Workshop @ NeurIPS}, 2018.
  
  \bibitem[Shi \& Zhang(2011)Shi and Zhang]{shi2011semi}
  Mingguang Shi and Bing Zhang.
  \newblock Semi-supervised learning improves gene expression-based prediction of cancer recurrence.
  \newblock \emph{Bioinformatics}, 27\penalty0 (21):\penalty0 3017--3023, 2011.
  
  \bibitem[Sinha \& Duchi(2016)Sinha and Duchi]{sinha2016learning}
  Aman Sinha and John~C Duchi.
  \newblock Learning kernels with random features.
  \newblock \emph{Advances in neural information processing systems}, 29, 2016.
  
  \bibitem[Smola \& Kondor(2003)Smola and Kondor]{smola2003kernels}
  Alexander~J Smola and Risi Kondor.
  \newblock Kernels and regularization on graphs.
  \newblock In \emph{Learning Theory and Kernel Machines: 16th Annual Conference on Learning Theory and 7th Kernel Workshop, COLT/Kernel 2003, Washington, DC, USA, August 24-27, 2003. Proceedings}, pp.\  144--158. Springer, 2003.
  
  \bibitem[Sohn et~al.(2020)Sohn, Berthelot, Carlini, Zhang, Zhang, Raffel, Cubuk, Kurakin, and Li]{sohn2020fixmatch}
  Kihyuk Sohn, David Berthelot, Nicholas Carlini, Zizhao Zhang, Han Zhang, Colin~A Raffel, Ekin~Dogus Cubuk, Alexey Kurakin, and Chun-Liang Li.
  \newblock Fixmatch: Simplifying semi-supervised learning with consistency and confidence.
  \newblock \emph{Advances in neural information processing systems}, 33:\penalty0 596--608, 2020.
  
  \bibitem[Steinwart \& Christmann(2008)Steinwart and Christmann]{steinwart2008support}
  Ingo Steinwart and Andreas Christmann.
  \newblock \emph{Support vector machines}.
  \newblock Springer Science \& Business Media, 2008.
  
  \bibitem[Steinwart et~al.(2009)Steinwart, Hush, and Scovel]{steinwart2009optimal}
  Ingo Steinwart, Don~R Hush, and Clint Scovel.
  \newblock Optimal rates for regularized least squares regression.
  \newblock In \emph{The 22nd Conference on Learning Theory}, pp.\  79--93, 2009.
  
  \bibitem[Talwai et~al.(2022)Talwai, Shameli, and Simchi-Levi]{talwai2022sobolev}
  Prem Talwai, Ali Shameli, and David Simchi-Levi.
  \newblock Sobolev norm learning rates for conditional mean embeddings.
  \newblock In \emph{International conference on artificial intelligence and statistics}, pp.\  10422--10447. PMLR, 2022.
  
  \bibitem[Tarvainen \& Valpola(2017)Tarvainen and Valpola]{tarvainen2017mean}
  Antti Tarvainen and Harri Valpola.
  \newblock Mean teachers are better role models: Weight-averaged consistency targets improve semi-supervised deep learning results.
  \newblock \emph{Advances in neural information processing systems}, 30, 2017.
  
  \bibitem[Tenenbaum et~al.(2000)Tenenbaum, Silva, and Langford]{tenenbaum2000global}
  Joshua~B Tenenbaum, Vin~de Silva, and John~C Langford.
  \newblock A global geometric framework for nonlinear dimensionality reduction.
  \newblock \emph{science}, 290\penalty0 (5500):\penalty0 2319--2323, 2000.
  
  \bibitem[Vaswani et~al.(2017)Vaswani, Shazeer, Parmar, Uszkoreit, Jones, Gomez, Kaiser, and Polosukhin]{vaswani2017attention}
  Ashish Vaswani, Noam Shazeer, Niki Parmar, Jakob Uszkoreit, Llion Jones, Aidan~N Gomez, {\L}ukasz Kaiser, and Illia Polosukhin.
  \newblock Attention is all you need.
  \newblock \emph{Advances in neural information processing systems}, 30, 2017.
  
  \bibitem[Veli{\v{c}}kovi{\'c} et~al.(2018)Veli{\v{c}}kovi{\'c}, Cucurull, Casanova, Romero, Li{\`o}, and Bengio]{velivckovic2018graph}
  Petar Veli{\v{c}}kovi{\'c}, Guillem Cucurull, Arantxa Casanova, Adriana Romero, Pietro Li{\`o}, and Yoshua Bengio.
  \newblock Graph attention networks.
  \newblock In \emph{International Conference on Learning Representations}, 2018.
  
  \bibitem[Wainwright(2019)]{wainwright2019high}
  Martin~J Wainwright.
  \newblock \emph{High-dimensional statistics: A non-asymptotic viewpoint}, volume~48.
  \newblock Cambridge university press, 2019.
  
  \bibitem[Xie et~al.(2020{\natexlab{a}})Xie, Dai, Hovy, Luong, and Le]{xie2020unsupervised}
  Qizhe Xie, Zihang Dai, Eduard Hovy, Thang Luong, and Quoc Le.
  \newblock Unsupervised data augmentation for consistency training.
  \newblock \emph{Advances in neural information processing systems}, 33:\penalty0 6256--6268, 2020{\natexlab{a}}.
  
  \bibitem[Xie et~al.(2020{\natexlab{b}})Xie, Luong, Hovy, and Le]{xie2020self}
  Qizhe Xie, Minh-Thang Luong, Eduard Hovy, and Quoc~V Le.
  \newblock Self-training with noisy student improves imagenet classification.
  \newblock In \emph{Proceedings of the IEEE/CVF conference on computer vision and pattern recognition}, pp.\  10687--10698, 2020{\natexlab{b}}.
  
  \bibitem[Yang et~al.(2016)Yang, Cohen, and Salakhudinov]{yang2016revisiting}
  Zhilin Yang, William Cohen, and Ruslan Salakhudinov.
  \newblock Revisiting semi-supervised learning with graph embeddings.
  \newblock In \emph{International conference on machine learning}, pp.\  40--48. PMLR, 2016.
  
  \bibitem[Zhai et~al.(2024)Zhai, Liu, Risteski, Kolter, and Ravikumar]{zhai2023understanding}
  Runtian Zhai, Bingbin Liu, Andrej Risteski, Zico Kolter, and Pradeep Ravikumar.
  \newblock Understanding augmentation-based self-supervised representation learning via rkhs approximation and regression.
  \newblock In \emph{International Conference on Learning Representations}, 2024.
  \newblock URL \url{https://openreview.net/forum?id=Ax2yRhCQr1}.
  
  \bibitem[Zhang et~al.(2017)Zhang, Bengio, Hardt, Recht, and Vinyals]{zhang2017understanding}
  Chiyuan Zhang, Samy Bengio, Moritz Hardt, Benjamin Recht, and Oriol Vinyals.
  \newblock Understanding deep learning requires rethinking generalization.
  \newblock In \emph{International Conference on Learning Representations}, 2017.
  \newblock URL \url{https://openreview.net/forum?id=Sy8gdB9xx}.
  
  \bibitem[Zhang et~al.(2018)Zhang, Cisse, Dauphin, and Lopez-Paz]{zhang2018mixup}
  Hongyi Zhang, Moustapha Cisse, Yann~N. Dauphin, and David Lopez-Paz.
  \newblock mixup: Beyond empirical risk minimization.
  \newblock In \emph{International Conference on Learning Representations}, 2018.
  \newblock URL \url{https://openreview.net/forum?id=r1Ddp1-Rb}.
  
  \bibitem[Zhou et~al.(2003)Zhou, Bousquet, Lal, Weston, and Sch{\"o}lkopf]{zhou2003learning}
  Dengyong Zhou, Olivier Bousquet, Thomas Lal, Jason Weston, and Bernhard Sch{\"o}lkopf.
  \newblock Learning with local and global consistency.
  \newblock \emph{Advances in neural information processing systems}, 16, 2003.
  
  \bibitem[Zhu \& Ghahramani(2002)Zhu and Ghahramani]{Zhu2002LearningFL}
  Xiaojin Zhu and Zoubin Ghahramani.
  \newblock Learning from labeled and unlabeled data with label propagation.
  \newblock In \emph{CMU CALD tech report CMU-CALD-02-107}, 2002.
  
  \bibitem[Zhu et~al.(2006)Zhu, Kandola, Lafferty, and Ghahramani]{zhu2006graph}
  Xiaojin Zhu, Jaz~S Kandola, John Lafferty, and Zoubin Ghahramani.
  \newblock Graph kernels by spectral transforms.
  \newblock In Olivier Chapelle, Bernhard Sch{\"o}lkopf, and Alexander Zien (eds.), \emph{Semi-supervised learning}, chapter~15. MIT Press, 2006.
  
  \end{thebibliography}
\end{document}